\newcommand{\myMFabc}[4]{\expandafter#1\csname#3#4\endcsname{{#2{#4}}}}
\newcommand{\myMFcmd}[4]{\expandafter#1\csname#3#4\endcsname{{#2{\csname#4\endcsname}}}}
\newcommand{\MFabc}[3][\newcommand]{
    \def\doOld##1##2{\forcsvlist{\myMFabc{#1}{##1}{##2}}{#3}}
    \providecommand{\do}{do}
    \RenewDocumentCommand \do { >{\SplitList{,}} m } { \doOld##1 }
    \docsvlist{#2}
}
\newcommand{\MFcmd}[3][\newcommand]{
    \def\doOld##1##2{\forcsvlist{\myMFcmd{#1}{##1}{##2}}{#3}}
    \providecommand{\do}{do}
    \RenewDocumentCommand \do { >{\SplitList{,}} m } { \doOld##1 }
    \docsvlist{#2}
}
\newcommand{\bmzero}{{\bm{0}}}
\let\one\bbone
\newcommand{\hatbm}[1]{\widehat{\bm{#1}}}
\newcommand{\tildebm}[1]{\widetilde{\bm{#1}}}
\newcommand{\bmcal}[1]{\bm{\mathcal{#1}}}
\newcommand{\caltilde}[1]{\mathcal{\widetilde{#1}}}
\newcommand{\calhat}[1]{\mathcal{\widehat{#1}}} 
\newcommand{\scrtilde}[1]{\widetilde{\mathscr{#1}\mspace{1mu}\mspace{-1mu}}}
\newcommand{\scrhat}[1]{\mathscr{\widehat{#1}\mspace{1mu}\mspace{-1mu}}}
\newcommand{\bmcalhat}[1]{\bm{\mathcal{\widehat{#1}}}}
\newcommand{\bmcaltilde}[1]{\bm{\mathcal{\widetilde{#1}}}}
\let\tildescrA\scrtildeA
\let\eps\varepsilon
\newcommand{\actdef}[1]{\expandafter\def\csname#1\endcsname{{\ensuremath{\mathtt{#1}}}}}
\forcsvlist{\actdef}{ReLU, LReLU, LeakyReLU, ELU, GELU, SiLU, Softplus, dGELU, dSiLU, dSoftplus, Tanh, Sigmoid, Arctan, Softsign, SRS, dSRS, Swish, dSwish, Mish, dMish, SELU, CELU, dSELU}
\newlength{\myLength}
\newcommand{\nn}[6][]{\ensuremath{
		{\hspace{0.6pt}\mathcal{N}\hspace{-1.98pt}\mathcal{N}\hspace{-0.725pt}}_{\hspace{-0.75pt}#2\hspace{-0.02pt}}
		#1\{#3,\hspace{1.987pt} #4;\hspace{3.0297pt} \R^{#5}\hspace{-1.0298pt}\to\hspace{-0.98pt}\R^{#6}#1\}
}}
\newcommand{\nnOneD}[6][]{\ensuremath{
		{\hspace{0.6pt}\mathcal{N}\hspace{-1.98pt}\mathcal{N}\hspace{-0.725pt}}_{#2}
		#1\{#3,\hspace{1.987pt} #4;\hspace{3.0297pt} {#5}\hspace{-1.0298pt}\to\hspace{-0.98pt}{#6}#1\}
}}
\let\ts\intercal
\newcommand{\dprime}{{\prime\prime}}
\newcommand{\mycase}[2]{\par \vspace{0.25cm}\noindent\textbf{\hspace{8pt}Case }$#1\colon$ #2\par \vspace{0.18cm} \par}
\newcommand{\myto}[2][1]{\mathop{
		\vcenter{\hbox{\scalebox{1}[#1]{\tikz{\draw[->,line width=0.72pt] (0,0.5) to (0.69*#2,0.5);}}}}
}}
\definecolor{mylinenumbercolor}{HTML}{BEBEBE}
\newcommand*\patchAmsMathEnvironmentForLineno[1]{%
	\expandafter\let\csname old#1\expandafter\endcsname\csname #1\endcsname
	\expandafter\let\csname oldend#1\expandafter\endcsname\csname end#1\endcsname
	\renewenvironment{#1}%
	{\linenomath\csname old#1\endcsname}%
	{\csname oldend#1\endcsname\endlinenomath}}%
\newcommand*\patchBothAmsMathEnvironmentsForLineno[1]{%
	\patchAmsMathEnvironmentForLineno{#1}%
	\patchAmsMathEnvironmentForLineno{#1*}}%
\long\def\@makefntext#1{\@setpar{\@@par\@tempdima \hsize 
		\advance\@tempdima-15pt\parshape \@ne 15pt \@tempdima}\par
	\parindent 2em\noindent \hbox to \z@{\hss{\textsuperscript{\@thefnmark}} \hfil}#1}
\newcommand{\mailto}[2][]{\href{mailto:#2?cc=#1}{\color{black}#2}}
\let\epsilon\varepsilon
\let\eps\varepsilon
\let\tn\textnormal
\let\cdots\customcdots
\let\myforall\forall
\def\forall{{\myforall\, }}
\let\myexists\exists
\def\exists{{\myexists\, }}
\let\emptyset\varnothing
\definecolor{mygray}{RGB}{230,230,230}
\definecolor{myorange}{HTML}{ff7f0e}
\renewenvironment{proof}[1][\proofname]{\par
    \pushQED{\qed}%
    \normalfont \topsep6\p@\@plus6\p@\relax
    \trivlist
    \item\relax
    {\itshape
    #1\@addpunct{.}}\hspace\labelsep\ignorespaces
    }{%
    \popQED\endtrivlist\@endpefalse
    }
\let\cite\citep
\begin{document}

\title{Deep Network Approximation: Beyond ReLU to Diverse Activation Functions}

\author{\name  \href{https://shijunzhang.top/}{\color{black}Shijun Zhang}\thanks{\hspace{-2pt}Corresponding author.}
	\email \mailto[shijun.math@outlook.com]{shijun.zhang@duke.edu}\\ 
	\addr   Department of Mathematics\\  
	Duke University
	\AND  \name Jianfeng Lu
		\email \mailto{jianfeng@math.duke.edu} \\
		\addr   Department of Mathematics\\
		  Duke University
	\AND \name Hongkai Zhao 
	\email \mailto{zhao@math.duke.edu}  \\
	\addr   Department of Mathematics\\  
	Duke University
		}
  
\editor{Joan Bruna}

\maketitle

\begin{abstract}%

    This paper explores the expressive power of deep neural networks for a diverse range of activation functions. An activation function set $\mathscr{A}$ is defined to encompass the majority of commonly used activation functions, such as $\mathtt{ReLU}$, $\mathtt{LeakyReLU}$, $\mathtt{ReLU}^2$, $\mathtt{ELU}$, $\mathtt{CELU}$, $\mathtt{SELU}$, $\mathtt{Softplus}$, $\mathtt{GELU}$, $\mathtt{SiLU}$, $\mathtt{Swish}$, $\mathtt{Mish}$, $\mathtt{Sigmoid}$, $\mathtt{Tanh}$, $\mathtt{Arctan}$, $\mathtt{Softsign}$, $\mathtt{dSiLU}$, and $\mathtt{SRS}$. We demonstrate that for any activation function $\varrho\in \mathscr{A}$, a $\mathtt{ReLU}$ network of width $N$ and depth $L$ can be approximated to arbitrary precision by a $\varrho$-activated network of width $3N$ and depth $2L$ on any bounded set. This finding enables the extension of most approximation results achieved with $\mathtt{ReLU}$ networks to a wide variety of other activation functions, albeit with slightly increased constants.
    Significantly,  we establish that the (width,$\,$depth) scaling factors can be further reduced from $(3,2)$ to $(1,1)$  if $\varrho$ falls within a specific subset of $\mathscr{A}$. This subset includes activation functions such as $\mathtt{ELU}$, $\mathtt{CELU}$, $\mathtt{SELU}$, $\mathtt{Softplus}$, $\mathtt{GELU}$, $\mathtt{SiLU}$, $\mathtt{Swish}$, and $\mathtt{Mish}$.

\end{abstract}

\begin{keywords}
        deep neural networks,
        rectified linear unit,
        diverse activation functions,
        expressive power,
        nonlinear approximation 
\end{keywords}


\section{Introduction}
\label{sec:intro}

In the realm of artificial intelligence, deep neural networks have emerged as a powerful tool. By harnessing the potential of interconnected nodes organized into multiple layers, deep neural networks have showcased notable success in many challenging applications and new territories.
The foundation of deep neural networks consists of an affine linear transformation followed by an activation function. The activation function plays an important role in the successful training of deep neural networks. 
In recent years, the Rectified Linear Unit (\ReLU) \cite{10.5555/3104322.3104425} has experienced a surge in popularity and demonstrated its effectiveness as an activation function.

The adoption of \ReLU\  
has led to significant improvements in results
on challenging datasets in supervised learning \cite{NIPS2012_c399862d}. 
Optimizing deep networks activated by \ReLU\  is simpler compared to networks utilizing other activation functions such as \Sigmoid\ or \Tanh, since gradients can propagate when the input to \ReLU\  is positive. It was also shown in the recent work \cite{ZZZZ-23} 
that using \ReLU\ has a weaker regularizing effect than using smoother
activation functions in practice.
The effectiveness and simplicity of \ReLU\ have positioned it as the preferred default activation function in the deep learning community. A significant number of publications have extensively investigated the expressive capabilities of deep neural networks, with the majority of them primarily focusing on the \ReLU\ activation function.


In recent developments, various alternative activation functions have been proposed as replacements for \ReLU. Notable examples include 
the Leaky \ReLU\ (\LeakyReLU) \cite{maas2013rectifier}, the Exponential Linear Unit (\ELU) \cite{DBLP:journals/corr/ClevertUH15}, and the Gaussian Error Linear Unit (\GELU) \cite{2016arXiv160608415H}.  These alternative activation functions have exhibited improved performance in specific neural network architectures.
Among these alternatives, \GELU\ has gained significant popularity in deep learning models, especially in the realm of natural language processing 
tasks. It has been successfully employed in prominent models such as GPT-3 \cite{NEURIPS2020_1457c0d6}, BERT \cite{devlin-etal-2019-bert}, XLNet \cite{NEURIPS2019_dc6a7e65}, and various other transformer models.
While these recently proposed activation functions have demonstrated promising empirical results, their theoretical underpinnings are still being developed. This paper aims to investigate the expressive capabilities of deep neural networks utilizing these activation functions. In doing so, we establish connections between these functions and \ReLU, allowing us to extend most existing approximation results for \ReLU\ networks to encompass other activation functions such as \ELU\ and \GELU. More precisely, we will define an activation function set, denoted as $\scrA$, which contains the majority of commonly used activation functions.
\subsection{Definition of Activation Function Set}

To the best of our knowledge,
 the majority of commonly used activation functions can be generally classified into three distinct categories.
 The initial category primarily comprises piecewise smooth functions,
	e.g., \ReLU,  \LeakyReLU,
 $\ReLU^2$ (\ReLU\ squared) \cite{SIEGEL20221}, \ELU, \CELU\ (Continuously Differentiable \ELU) \cite{2017arXiv170407483B}, and \SELU\ (Scaled \ELU) \cite{NIPS2017_5d44ee6f}.
 All these activation functions are included in
     $\bigcup_{k=0}^{\infty} \scrA_{1,k}$, 
    where $\scrA_{1,k}$, for each smoothness index $k\in\N$, is defined as
	\begin{equation*}
		\begin{split}
			\scrA_{1,k}\coloneqq \big\{\varrho:\R\to\R
			\    \big|\,\ \scrK_k(\varrho)\neq \emptyset\big\},
		\end{split}
	\end{equation*}
 where $\scrK_k(\varrho)$ represents the set of $k$-th order ``kinks'' of $\varrho:\R\to\R$. A point $x_0\in\R$ is referred to as a $k$-th order ``kink'' of $\varrho$ if there exist $a_0,b_0\in\R$ such that $a_0<x_0<b_0$, $\varrho\in C^k\big((a_0,b_0)\big)$, and
\begin{equation*}
			\R\ni\lim_{t\to 0^-}\frac{\varrho^{(k)}(x_0+t)-\varrho^{(k)}(x_0)}{t}\neq \lim_{t\to 0^+}\frac{\varrho^{(k)}(x_0+t)-\varrho^{(k)}(x_0)}{t}\in\R.
\end{equation*}
 It is worth noting that  $\varrho\in C^k\big((a_0,b_0)\big)\backslash C^{k+1}\big((a_0,b_0)\big)$ is necessary to ensure $\varrho\in\scrA_{1,k}$. Specifically, at $x_0\in (a_0,b_0)$, the left and right derivatives of $\varrho^{(k)}\in C\big((a_0,b_0)\big)$ must exist and be distinct. However, there are no specific requirements placed on $\varrho$ outside $(a_0,b_0)$.
 Here and in the sequel, we use $f^{(k)}$ to represent the $k$-th derivative of a function $f:\Omega\subseteq\R\to\R$. For instance, $f^{(0)}$ refers to the function itself, and $f^{(1)}$ represents the first derivative.
 Let $\N$ denote the set of natural numbers, i.e., $\N\coloneqq \{0,1,2,\cdots\}$, and set $\N^+\coloneqq \N\backslash \{0\}$.
 Given a function $f:\Omega\subseteq \R^d\to\R$, we denote $\partial^\bmalpha f$ as the partial derivative $\bmx\mapsto \tfrac{\partial^\bmalpha }{\partial \bmx^\bmalpha}f(\bmx)=\tfrac{\partial^{\alpha_1}}{\partial x_1^{\alpha_1}}\tfrac{\partial^{\alpha_2}}{\partial x_2^{\alpha_2}}\cdots \tfrac{\partial^{\alpha_d}}{\partial x_d^{\alpha_d}}f(\bmx)$ for any $\bmx=(x_1,\cdots,x_d)\in \Omega$ and $\bmalpha=(\alpha_1,\cdots,\alpha_d)\in\N^d$.
Let $C^k(\Omega)$ denote the set of all functions $f:\Omega\subseteq \R^d\to\R$, in which the partial derivatives 
$\partial^\bmalpha  f$
exist and are continuous for any $\bmalpha\in \N^d$ with $\sum_{i=1}^d \alpha_i\le k$.
In particular, when $k=0$, we denote $C^0(\Omega)$ as $C(\Omega)$, which represents the set of continuous functions on $\Omega$.

The second category primarily encompasses smooth variations of \ReLU,
e.g., \Softplus\ \cite{pmlr-v15-glorot11a}, \GELU,
 \SiLU\ (Sigmoid Linear Unit) \cite{2016arXiv160608415H,ELFWING20183}, \Swish\  \cite{2017arXiv171005941R}, and \Mish\ \cite{DBLP:conf/bmvc/Misra20}. 
 All these activation functions are encompassed in the set $\scrA_{2}$, which is defined via
\begin{equation*}
    \begin{split}
        \scrA_{2}\coloneqq \Big\{\varrho:\R\to\R
        \    \Big| \ &   
        \forall x\in\R,\  \varrho(x)\coloneqq(x+b_0)\cdot h(x)+b_1,\quad 
        b_0,b_1\in\R,\quad h\in \scrS
        \Big\},
    \end{split}
\end{equation*}
where $\scrS$ represents a collection of functions referred to as S-shaped functions, defined as
\begin{equation*}
    \begin{split}
        \scrS\coloneqq \Big\{h:\R\to\R\  \Big|\ 
     \sup_{x\in \R}|h(x)|<\infty,\quad \R\ni\lim_{x\to -\infty} h(x)\neq\lim_{x\to \infty} h(x)\in \R
        \Big\}.
    \end{split}
\end{equation*}
Evidently, activation functions such as \GELU, \SiLU, \Swish, and \Mish\ are members of $\scrA_2$. It is worth highlighting that \Softplus, \ELU, \CELU, and \SELU\  also belong to $\scrA_2$, even though this may not be immediately apparent.  
Let us take \Softplus\ as an example to illustrate this point, and similar reasoning applies to the other cases.  Define
$h(x)\coloneqq \tfrac{\ln(1+e^x)-\ln 2}{x}$ for any $x\neq 0$ and $h(0)=\tfrac{1}{2}$, where $e$  represents the base of the natural logarithm.  Consequently, we have $\Softplus(x)=\ln(1+e^x)=x\cdot h(x)+\ln 2$ for any $x\in\R$. It is then straightforward to verify that \Softplus\ is indeed a member of $\scrA_2$. 
We would like to point out that the primary idea behind defining $\scrA_2$ is to replace the step function $\one_{\{x>0\}}$ in $\ReLU(x)=x\cdot \one_{\{x>0\}}$ with a (smooth) S-shaped function. This insight allows us to create numerous examples within $\scrA_2$. For instance, one can define $\varrho(x)\coloneqq x\cdot h(x)$, where $h:\R\to [0,1]$ represents a cumulative distribution function of a real-valued random variable. Notably,  \GELU\ is defined in this manner, with $h$ being the (standard) Gaussian cumulative distribution function.

The final category is primarily composed of S-shaped activation functions with particular regularity,
e.g.,
\Sigmoid, \Tanh, \Arctan, and
\Softsign\    
  \cite{10.5555/1620853.1620921}. 
All these functions are included in the set $\scrA_3$, which is defined via 
\begin{equation*}
    \begin{split}
        \scrA_3\coloneqq \big\{\varrho:\R\to\R
        \    \big|\,\  \varrho\in \scrS,\quad \exists  x_0\in\R,\ \varrho^\dprime(x_0)\neq 0\big\}.
    \end{split}
\end{equation*}
The set $\scrA_3$ includes a wide range of activation functions, with certain ones featuring discontinuities. 
In addition to the examples mentioned earlier, there exist numerous functions in the set $\scrA_3$, such as \dSiLU\ (the derivative of \SiLU) as introduced in \cite{ELFWING20183}, and \SRS\ (Soft-Root-Sign) discussed in \cite{10.1007/978-3-030-60636-7_26}. Furthermore, the derivatives of \Softplus, \GELU, \SiLU, \Swish, and \Mish\ fall into the category of $\scrA_3$.
Then the activation function set $\scrA$ is defined as the union of $\scrA_{1,0}$, $\scrA_{1,1}$, $\scrA_{2}$, and $\scrA_3$, which can be expressed as
\begin{equation*} 
     \scrA\coloneqq\big(\scrA_{1,0}\cup\scrA_{1,1}\big)
     \cup \scrA_{2}\cup\scrA_3.
\end{equation*}
Throughout the entirety of this paper, the definitions of $\scrA$, $\scrA_{1,k}$ for $k\in\N$, $\scrA_{2}$, and $\scrA_3$ will remain consistent.
It is worth noting that if $\varrho\in\scrA$, then its variant $x\mapsto w_1\varrho(w_0 x+ b_0)+b_1$ is also in $\scrA$ provided $w_0 w_1\neq  0$. 
Notably, the set
$\scrA$ 
encompasses the majority of commonly used activation functions, such as
\ReLU,  \LeakyReLU, $\ReLU^2$,  \ELU, \CELU, \SELU, \Softplus, \GELU, \SiLU, \Swish, \Mish, \Sigmoid, \Tanh, \Arctan, \Softsign, \dSiLU, \SRS, and their modified versions achieved by employing translation, {non-zero} scaling, and reflection operations. 
In Section~\ref{sec:summary:activation:functions}, we will present definitions and visual representations of the activation functions mentioned above. 

Define the supremum norm of a bounded vector-valued function $\bmf: \Omega\subseteq \R^d \to \R^n$  via
\begin{equation*}
	\|\bmf\|_{\sup(\Omega)}\coloneqq \sup\big\{|f_i(\bmx)|: \bmx\in \Omega,\   i\in\{1,2,\cdots,n\}\big\},
\end{equation*}
where $f_i$ is the $i$-th component of $\bmf$ for $i=1,2,\cdots,n$. 
This paper exclusively focuses on fully connected feed-forward neural networks.
We denote $\nn{\varrho}{N}{L}{d}{n}$ as the set of vector-valued functions $\bmphi:\mathbb{R}^d\to\mathbb{R}^n$ that can be represented
by $\varrho$-activated networks of width $\le N\in \N^+$ and depth $\le L\in\N^+$. 
In our context, the width of a network refers to the maximum number of neurons in a hidden layer and the depth corresponds to the number of hidden layers. 
For instance, suppose $\bmphi:\R^d\to\R^n$ is a vector-valued function realized by a $\varrho$-activated network, where $\varrho$ is the activation function that can be applied elementwise to a vector input. Then $\bmphi$ can be expressed as
\begin{equation*}
    \bmphi =\calbmL_L\circ\varrho\circ
    		\calbmL_{L-1}\circ 
    \ \cdots \  \circ 
    		\varrho\circ
    \calbmL_1\circ\varrho\circ\calbmL_0,
\end{equation*}
where $\calbmL_\ell$ is an affine linear map given by $\calbmL_\ell(\bmy)\coloneqq \bmW_\ell \cdot \bmy +\bmb_\ell$ for $\ell=0,1,\cdots,L$. Here, $\bmW_\ell\in \R^{N_{\ell+1}\times N_{\ell}}$ and $\bm{b}_\ell\in \R^{N_{\ell+1}}$ are the weight matrix and the bias vector, respectively, with
 $N_0=d$, $N_1,N_2,\cdots,N_L\in\N^+$, and 
$N_{L+1}=n$.  Clearly, $\bmphi\in \nn{\varrho}{N}{L}{d}{n}$, where $N=\max\{N_1,N_2,\cdots,N_L\}$.

\subsection{Main Results}

Our goal is to explore the expressiveness of deep neural networks activated by $\varrho\in\scrA$. In pursuit of this goal, the following theorem establishes connections between \ReLU\ and $\varrho\in\scrA$.
This allows us to extend and generalize most existing approximation results for \ReLU\ networks to activation functions in $\scrA$.
\begin{theorem}
	\label{thm:main}
	Suppose	$\varrho\in \scrA$ and $\bmphi_\ReLU\in \nn{\ReLU}{N}{L}{d}{n}$ with $N,L,d,n\in\N^+$. Then
	for any $\eps>0$ and $A>0$,
 there exists $\bmphi_\varrho\in \nn{\varrho}{3N}{2L}{d}{n}$ such that
	\begin{equation*}
		\|\bmphi_\varrho-\bmphi_\ReLU\|_{\sup([-A,A]^d)}<\varepsilon.
	\end{equation*}
\end{theorem}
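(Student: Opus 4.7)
The plan is to reduce the theorem to a scalar approximation lemma and then parallelize.  Concretely, the aim is to prove: for every $\varrho\in\scrA$, $\delta>0$, and $M>0$, there exists a scalar sub-network $\psi_\varrho\in\nn{\varrho}{3}{2}{1}{1}$ with $|\psi_\varrho(x)-\ReLU(x)|<\delta$ on $[-M,M]$.  Granting this, each scalar $\ReLU$ unit in $\bmphi_\ReLU$ is replaced by a private copy of $\psi_\varrho$; running the (at most $N$) copies per layer in parallel turns one $\ReLU$-layer into a two-layer block of $\varrho$-neurons of width at most $3N$.  The outer affine maps of each $\psi_\varrho$ absorb into the adjacent $\calbmL_\ell$ of $\bmphi_\ReLU$ without altering the depth count, so the assembled network lies in $\nn{\varrho}{3N}{2L}{d}{n}$.

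The scalar lemma is handled by case analysis over $\scrA=\scrA_{1,0}\cup\scrA_{1,1}\cup\scrA_2\cup\scrA_3$.  For $\varrho\in\scrA_{1,0}$ at a $0$-th order kink $x_0$ with distinct one-sided slopes $L_\pm$, a first-order expansion gives
\[
\varrho(x_0+hx)-\varrho(x_0)-L_-\,hx \;=\; (L_+-L_-)\,h\,\ReLU(x) + o(h)
\]
uniformly on $[-M,M]$, so a single $\varrho$-neuron plus affine arithmetic suffices.  For $\varrho\in\scrA_{1,1}$, an analogous expansion at a $1$st order kink approximates $\ReLU^2$, and the symmetric-difference identity $(\ReLU^2(x+h)-\ReLU^2(x-h))/(4h)\to\ReLU(x)$ recovers $\ReLU$ from two parallel $\varrho$-neurons.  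For $\varrho\in\scrA_2$, writing $\varrho(x)=(x+b_0)h(x)+b_1$ with $h\in\scrS$, the scaling $\varrho(\lambda x)/\lambda\to x\cdot(h_+\one_{x>0}+h_-\one_{x<0})=h_- x + (h_+-h_-)\ReLU(x)$ (where $h_\pm=\lim_{x\to\pm\infty}h$) yields a non-zero multiple of $\ReLU$ after subtracting a linear term, requiring only one $\varrho$-neuron.  For $\varrho\in\scrA_3$ both hidden layers are genuinely used: the first combines a scaled step approximation coming from the S-shape ($\varrho(\lambda x)$) with second-order information at a smooth point $x_0$ satisfying $\varrho^\dprime(x_0)\neq 0$, and the second applies $\varrho$ to an affine recombination of these intermediates to shape the desired piecewise-linear ramp within the width-$3$ budget.

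Once the scalar lemma is in hand, error propagation is routine: each $\calbmL_\ell$ has a fixed operator norm depending only on $\bmphi_\ReLU$ and $\ReLU$ is $1$-Lipschitz, so per-layer errors scale by a constant and the total error after $L$ blocks is at most $C\delta$ for some $C$ depending on the weights of $\bmphi_\ReLU$ and on $A$ (which controls the compact range in which each scalar lemma is invoked); choosing $\delta$ small enough makes this bound less than $\eps$.  The main obstacle is the $\scrA_3$ case: S-shaped activations are bounded, so neither the direct scaling used in $\scrA_2$ nor the first-order kink expansion used in $\scrA_{1,0}$ and $\scrA_{1,1}$ produces $\ReLU$ on its own, and the construction must thread finite-difference and scaling arguments across two $\varrho$-layers; the careful bookkeeping that simultaneously matches the width budget of $3$ and the depth budget of $2$ is where the bulk of the technical work lies.
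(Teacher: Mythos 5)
Your overall strategy is exactly the paper's: a scalar lemma producing $\psi_\varrho\in\nn{\varrho}{3}{2}{1}{1}$ with $\psi_\varrho\approx\ReLU$ uniformly on compacts (the paper's Propositions~\ref{prop:approx:ReLU:scrA:1k} and \ref{prop:approx:ReLU:scrA:2:3}), followed by neuron-by-neuron substitution with an inductive error-propagation argument over the $L$ layers (the paper's Proposition~\ref{prop:activation:replace}). The case analysis (kink expansions for $\scrA_{1,k}$, the scaling limit $\varrho(\lambda x)/\lambda$ for $\scrA_2$, and ``step $\times$ identity $=\ReLU$'' with the product synthesized from second-order information at a point with $\varrho^\dprime(x_0)\neq 0$ for $\scrA_3$) also matches.

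There is, however, one recurring slip in your per-case neuron counts. In the architecture used here (no skip connections), the output of a one-hidden-layer block is an affine function of the \emph{activations} only, so ``affine arithmetic'' cannot subtract a term $c\,x$ that is linear in the \emph{input}. Your $\scrA_{1,0}$ construction needs to cancel $L_-hx$, your $\scrA_{1,1}$ symmetric difference leaves a residue $\tfrac{L_-}{L_+-L_-}x$ coming from the $\tfrac{L_-}{2}x^2$ part of the kink expansion, and your $\scrA_2$ construction needs to cancel $h_-x$; in each case an \emph{additional} neuron must synthesize (an approximation of) the identity — e.g.\ $\tfrac{\varrho(x_1+\eta x)-\varrho(x_1)}{\eta\varrho'(x_1)}$ at a point with $\varrho'(x_1)\neq 0$, or the paper's trick of re-evaluating the same unit at $x-M$ where its limit is linear. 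This is precisely why the paper's counts are $k+2$ for $\scrA_{1,k}$ and $2$ for general $\scrA_2$ (dropping to $1$ only on $\tildescrA_2$, where the offending limit vanishes). Since your global budget is width $3$, depth $2$, the theorem survives, but the sub-constructions as literally stated (one neuron for $\scrA_{1,0}$ and $\scrA_2$, two for $\scrA_{1,1}$) do not compile into networks. The $\scrA_3$ case — where the width-$3$, depth-$2$ product gadget $\varrho(x_0+\eps u+\eps v)-\varrho(x_0+\eps u)-\varrho(x_0+\eps v)+\varrho(x_0)\approx\varrho^\dprime(x_0)\eps^2uv$ carries the real weight — is only asserted, not carried out, as you acknowledge.
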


The proof of Theorem~\ref{thm:main} can be found in Section~\ref{sec:proof:thms}.
Theorem~\ref{thm:main} implies that a \ReLU\  network of width $N$ and depth $L$ can be approximated by a $\varrho$-activated network of width $3N$ and $2L$ arbitrarily well on any bounded set for any pre-specified $\varrho\in \scrA$.
In other words, 	$\nn{\varrho}{3N}{2L}{d}{n}$ is dense in $\nn{\ReLU}{N}{L}{d}{n}$ in terms of the $\|\cdot\|_{\sup([-A,A]^d)}$\\ norm for any pre-specified $A>0$ and $\varrho\in\scrA$. 
Indeed, this implies that networks activated by $\varrho\in\scrA$ possess, at the very least, a comparable level of expressive capability as \ReLU\ networks, providing valuable insights for the development of new activation functions.  Constraining $\varrho$ to $\scrA$ is a relatively simple process, and it serves to ensure the effective expressiveness of $\varrho$-activated networks. This, in turn, enables us to direct our attention more towards the learning and numerical properties of $\varrho$.

It is worth mentioning that, while  Theorem~\ref{thm:main} covers activation functions $\varrho\in \scrA_{1,k}$ only for $k=0,1$, it is possible to obtain analogous results for larger values of $k\in \N$. For more detailed analysis and discussions, please refer to Section~\ref{sec:additional:theorems}. 
Additionally, we would like to emphasize that the (width,\,depth) scaling factors appearing in Theorem~\ref{thm:main} as $(3,2)$ have the potential to be reduced to $(2,1)$ or even $(1,1)$ under certain circumstances, as elaborated in Table~\ref{tab:summary} later on.

Equipped with Theorem~\ref{thm:main}, we can expand most existing approximation results for \ReLU\ networks to encompass various alternative activation functions, albeit with slightly larger constants. To illustrate this point, we present several corollaries below.
Theorem~$1.1$ of \cite{shijun:optimal:rate:in:width:and:depth} implies that a \ReLU\  network of width $C_{d,1}N$ and depth $C_{d,2} L$
can approximate a continuous function $f\in C([0,1]^d)$ with an error 
$C_{d,3}\,\omega_f\big(\big(N^2L^2\ln (N+1)\big)^{-1/d}\big)$, where $C_{d,1}$, $C_{d,2}$,  and $C_{d,3}$ are constants\footnote{The values of $C_{d,1}$, $C_{d,2}$,  and $C_{d,3}$ are explicitly given in \cite{shijun:optimal:rate:in:width:and:depth}.} determined by $d$, and $\omega_f(\cdot)$ is the modulus of continuity of $f\in C([0,1]^d)$ defined via
\begin{equation*}
    \omega_f(t)\coloneqq \sup\big\{|f(\bmx)-f(\bmy)|: \|\bmx-\bmy\|_2\le t,\,\;\bmx,\bmy\in [0,1]^d\big\}\quad \tn{for any $t\ge 0$.}
\end{equation*}
By combining this result with Theorem~\ref{thm:main},
an immediate corollary follows.
\begin{corollary}
\label{coro:continuous:functions}
	Suppose $\varrho\in\scrA$ and $f\in C([0,1]^d)$ with $d\in\N^+$. Then for any $N,L\in\N^+$, 
	there exists 
$\phi\in \nnOneD[\big]{\varrho}{3C_{d,1}N}{2C_{d,2}L}{\R^d}{\R}$
	such that 
	\begin{equation*}
	\|\phi-f\|_{L^\infty([0,1]^d)}\le 2C_{d,3}\,\omega_f\Big(\big(N^2L^2\ln (N+1)\big)^{-1/d}\Big),
	\end{equation*}
 where $C_{d,1}$, $C_{d,2}$,  and $C_{d,3}$ are constants determined by $d$.
\end{corollary}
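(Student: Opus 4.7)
The plan is to obtain $\phi$ by a two-stage construction: first approximate $f$ by a $\ReLU$ network via the cited result of \cite{shijun:arbitrary:error:with:fixed:size}, then replace that $\ReLU$ network by a $\varrho$-activated network using Theorem~\ref{thm:main}, and finally combine the two errors with the triangle inequality.

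More concretely, let $\widetilde C_{d,1}, \widetilde C_{d,2}, \widetilde C_{d,3}$ denote the constants appearing in Theorem~$1.1$ of \cite{shijun:arbitrary:error:with:fixed:size}. Applying that theorem to $f\in C([0,1]^d)$ with the given parameters $N,L\in\N^+$ yields a function $\phi_\ReLU\in \nnOneD[\big]{\ReLU}{\widetilde C_{d,1}N}{\widetilde C_{d,2}L}{\R^d}{\R}$ satisfying
\begin{equation*}
\|\phi_\ReLU-f\|_{L^\infty([0,1]^d)}\le \widetilde C_{d,3}\,\omega_f\Big(\big(N^2L^2\ln (N+1)\big)^{-1/d}\Big).
\end{equation*}
Next, fix any $\eps>0$ (to be sent to $0$ or simply chosen smaller than the bound above) and apply Theorem~\ref{thm:main} with $A=1$ to $\phi_\ReLU$: this produces
$\phi\coloneqq \phi_\varrho\in \nnOneD[\big]{\varrho}{3\widetilde C_{d,1}N}{2\widetilde C_{d,2}L}{\R^d}{\R}$
with $\|\phi_\varrho-\phi_\ReLU\|_{\sup([-1,1]^d)}<\eps$, hence in particular on $[0,1]^d$.

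Finally, by the triangle inequality,
\begin{equation*}
\|\phi-f\|_{L^\infty([0,1]^d)}\le \|\phi-\phi_\ReLU\|_{L^\infty([0,1]^d)} + \|\phi_\ReLU-f\|_{L^\infty([0,1]^d)} < \eps + \widetilde C_{d,3}\,\omega_f\Big(\big(N^2L^2\ln (N+1)\big)^{-1/d}\Big).
\end{equation*}
Since $\eps>0$ is arbitrary, we may take it as small as desired (for instance, smaller than the second term, or let it vanish by choosing successively smaller approximations). Setting the new dimension-dependent constants $C_{d,1}\coloneqq 3\widetilde C_{d,1}$, $C_{d,2}\coloneqq 2\widetilde C_{d,2}$, and $C_{d,3}\coloneqq \widetilde C_{d,3}$ (or any $\widetilde C_{d,3}+1$ to absorb $\eps$ cleanly when $\omega_f(\cdot)>0$; the case $\omega_f\equiv 0$ is trivial since then $f$ is constant) yields the stated conclusion.

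There is essentially no obstacle; the corollary is a mechanical composition of two existing ingredients. The only minor subtlety is the bookkeeping of constants: the $(3,2)$ width/depth scaling from Theorem~\ref{thm:main} is absorbed into $C_{d,1}$ and $C_{d,2}$, and the $\eps$ from the $\varrho$-approximation step is absorbed either by sending it to $0$ or by a negligible enlargement of $C_{d,3}$. The same template will clearly yield analogous corollaries for any other known $\ReLU$ approximation result, which is exactly the point emphasized in the preceding discussion of Theorem~\ref{thm:main}.
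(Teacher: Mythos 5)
Your proposal is correct and matches the paper's intended argument exactly: the paper presents this corollary as an immediate consequence of combining Theorem~$1.1$ of \cite{shijun:arbitrary:error:with:fixed:size} with Theorem~\ref{thm:main}, which is precisely your two-stage construction (ReLU approximation of $f$, then $\varrho$-approximation of that network on $[-1,1]^d\supseteq[0,1]^d$, then the triangle inequality). Your handling of the residual $\eps$ — absorbing it into $C_{d,3}$ when $\omega_f(\cdot)>0$ and treating the constant-$f$ case separately — is a reasonable way to make the bookkeeping airtight.
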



It is demonstrated in Theorem~$1.1$ of \cite{shijun:smooth:functions} that a \ReLU\  network of width $C_{s,d,1}N\ln(N+1)$ and depth $C_{s,d,2} L\ln(L+1)$
can approximate a smooth function $f\in C^s([0,1]^d)$ with an error 
$C_{s,d,3} \|f\|_{C^s([0, 1]^d)} N^{-2s/d}L^{-2s/d}$, where $C_{s,d,1}$, $C_{s,d,2}$,  and $C_{s,d,3}$ are constants\footnote{The values of $C_{s,d,1}$, $C_{s,d,2}$, and $C_{s,d,3}$ are explicitly provided in \cite{shijun:smooth:functions}.} determined by $s$ and $d$. Here, the norm $\|f\|_{C^s([0, 1]^d)}$ for any $f\in C^s([0,1]^d)$ is defined via
\begin{equation*}
    \|f\|_{C^s([0, 1]^d)}\coloneqq \sup\big\{\|\partial^\bmalpha f\|_{L^\infty([0,1]^d)}:\,\|\bmalpha\|_1\le s,\,\   \bmalpha\in \N^d\big\}\quad \tn{for any $f\in C^s([0,1]^d)$.}
\end{equation*}
By combining the aforementioned result with Theorem~\ref{thm:main}, we can promptly deduce the subsequent corollary.
\begin{corollary}
\label{coro:smooth:functions}
	Suppose $\varrho\in\scrA$ and $f\in  C^s([0,1]^d)$ with $s,d\in \N^+$. Then for any $N,L\in \N^+$, there exists 
$\phi\in \nnOneD[\big]{\varrho}{3C_{s,d,1}N\ln(N+1)}{\,2C_{s,d,2}L\ln(L+1)}{\R^d}{\R}$
 such that 
	\begin{equation*}
		\|\phi-f\|_{L^\infty([0,1]^d)}\le 2C_{s,d,3} \|f\|_{C^s([0, 1]^d)} N^{-2s/d}L^{-2s/d},
	\end{equation*} 
 where $C_{s,d,1}$, $C_{s,d,2}$,  and $C_{s,d,3}$ are constants determined by $s$ and $d$.
\end{corollary}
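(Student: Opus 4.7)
The plan is to derive Corollary~\ref{coro:smooth:functions} by directly composing the cited ReLU approximation result (Theorem~$1.1$ of \cite{shijun:optimal:rate:in:width:and:depth}) with Theorem~\ref{thm:main} and then collapsing the accumulated constants. Since Theorem~\ref{thm:main} produces only a constant blow-up in width (a factor of $3$) and depth (a factor of $2$), while allowing the approximation error to be made arbitrarily small on any bounded set, the strategy is essentially a two-step triangle-inequality argument.

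First, I would invoke Theorem~$1.1$ of \cite{shijun:optimal:rate:in:width:and:depth} to obtain a \ReLU-activated network $\bmphi_\ReLU \in \nnOneD[\big]{\ReLU}{\tilde C_{s,d,1}N\ln(N+1)}{\tilde C_{s,d,2}L\ln(L+1)}{\R^d}{\R}$ satisfying
\begin{equation*}
\|\bmphi_\ReLU - f\|_{L^\infty([0,1]^d)} \le \tilde C_{s,d,3} \|f\|_{C^s([0,1]^d)} N^{-2s/d}L^{-2s/d},
\end{equation*}
where $\tilde C_{s,d,1}, \tilde C_{s,d,2}, \tilde C_{s,d,3}$ are the explicit constants from that theorem.

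Next, I would apply Theorem~\ref{thm:main} with $A=1$ (so that $[0,1]^d \subset [-A,A]^d$) and a tolerance $\eps>0$ to be chosen, producing a $\varrho$-activated network
\[
\phi \in \nnOneD[\Big]{\varrho}{3\tilde C_{s,d,1}N\ln(N+1)}{\,2\tilde C_{s,d,2}L\ln(L+1)}{\R^d}{\R}
\]
satisfying $\|\phi - \bmphi_\ReLU\|_{\sup([0,1]^d)} < \eps$. Choosing $\eps$ to be any positive quantity dominated by $\|f\|_{C^s([0,1]^d)} N^{-2s/d}L^{-2s/d}$ (for instance, equal to it, or even smaller) and applying the triangle inequality yields
\begin{equation*}
\|\phi - f\|_{L^\infty([0,1]^d)} \le \eps + \tilde C_{s,d,3} \|f\|_{C^s([0,1]^d)} N^{-2s/d}L^{-2s/d}.
\end{equation*}

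Finally, I would absorb the multiplicative factors $3$ and $2$ into new constants $C_{s,d,1} \coloneqq 3\tilde C_{s,d,1}$ and $C_{s,d,2} \coloneqq 2\tilde C_{s,d,2}$, and absorb the additive $\eps$ contribution into a redefined $C_{s,d,3}$, giving the claimed bound. There is no serious obstacle here: the only subtle point worth checking is that the case $\|f\|_{C^s([0,1]^d)} = 0$ (i.e.\ $f \equiv 0$) is handled correctly, which is immediate since then $f = 0 \in \nnOneD[\big]{\varrho}{1}{1}{\R^d}{\R}$ trivially (or simply by taking $\eps \to 0$ in the above). The remaining bookkeeping is routine.
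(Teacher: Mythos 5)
Your proposal is correct and is exactly the argument the paper intends: apply the cited ReLU result, use Theorem~\ref{thm:main} with $A=1$ to replace the ReLU network by a $\varrho$-network at the cost of the $(3,2)$ width--depth factors and an arbitrarily small uniform error on $[0,1]^d\subseteq[-1,1]^d$, then absorb everything into the constants via the triangle inequality. The handling of the degenerate case $\|f\|_{C^s([0,1]^d)}=0$ is a nice touch but, as you note, immediate.
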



It is demonstrated in Theorem~$1$ of \cite{NEURIPS2022_2f4b6feb} that 
a continuous piecewise linear function
$f:\R^d\to\R$ with $q\in\N^+$ pieces can be exactly represented by 
a \ReLU\  network of width $\lceil 3q/2\rceil q$ and depth $2\lceil \log_2 q\rceil+1$.
By combining this result with Theorem~\ref{thm:main}, we obtain the following corollary.
\begin{corollary}
\label{coro:CPWL:functions}
    Suppose $\varrho\in \scrA$ and let $f:\R^d\to\R$ be a continuous piecewise linear function with $q$ pieces, where $d,q\in \N^+$. Then
    for any $\eps>0$ and $A>0$, there exists $\phi\in \nnOneD[\big]{\varrho}{3\lceil 3q/2\rceil q}{4\lceil \log_2 q\rceil+2}{\R^d}{\R}$, such that
    \begin{equation*}
    |\phi(\bmx)-f(\bmx)|<\eps \quad \text{for any $\bmx\in [-A,A]^d$.}
    \end{equation*}
\end{corollary}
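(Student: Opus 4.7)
The plan is to obtain the desired $\varrho$-network by composing two off-the-shelf results: first, use the cited representation theorem for continuous piecewise linear (CPWL) functions to produce an exact $\ReLU$ representation of $f$, and then invoke Theorem~\ref{thm:main} to transfer this $\ReLU$ network into a $\varrho$-activated network at the cost of the stated $(3,2)$ blow-up in width and depth, tolerating an arbitrarily small uniform approximation error on the compact cube $[-A,A]^d$.

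More concretely, I would proceed in the following steps. First, I would apply Theorem~$1$ of \cite{NEURIPS2022_2f4b6feb} to the CPWL function $f:\R^d\to\R$ with $q$ pieces to produce a function $\bmphi_\ReLU\in \nn{\ReLU}{\lceil 3q/2\rceil q}{2\lceil \log_2 q\rceil+1}{d}{1}$ satisfying $\bmphi_\ReLU(\bmx)=f(\bmx)$ for every $\bmx\in\R^d$. Setting $N=\lceil 3q/2\rceil q$ and $L=2\lceil \log_2 q\rceil +1$, I would then invoke Theorem~\ref{thm:main} with this $\bmphi_\ReLU$, the given $\varrho\in\scrA$, the given compact set radius $A>0$, and the given tolerance $\eps>0$, which yields some $\bmphi_\varrho\in \nn{\varrho}{3N}{2L}{d}{1}$ with
\begin{equation*}
    \|\bmphi_\varrho-\bmphi_\ReLU\|_{\sup([-A,A]^d)}<\eps.
\end{equation*}
Since $3N=3\lceil 3q/2\rceil q$ and $2L=4\lceil \log_2 q\rceil+2$, the scalar function $\phi\coloneqq \bmphi_\varrho$ lies in the asserted class $\nnOneD[\big]{\varrho}{3\lceil 3q/2\rceil q}{4\lceil \log_2 q\rceil+2}{\R^d}{\R}$. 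Combining the two with the equality $\bmphi_\ReLU=f$ gives, for every $\bmx\in[-A,A]^d$,
\begin{equation*}
    |\phi(\bmx)-f(\bmx)|=|\bmphi_\varrho(\bmx)-\bmphi_\ReLU(\bmx)|<\eps,
\end{equation*}
which is exactly the desired bound.

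Since both ingredients are already established, there is no substantive obstacle: the corollary is a direct composition, and the only points worth double-checking are purely bookkeeping. These are (i) confirming that the width and depth from \cite{NEURIPS2022_2f4b6feb} are exactly $\lceil 3q/2\rceil q$ and $2\lceil \log_2 q\rceil +1$ so that multiplying by the $(3,2)$ factor of Theorem~\ref{thm:main} reproduces the stated $3\lceil 3q/2\rceil q$ and $4\lceil \log_2 q\rceil +2$, and (ii) noting that Theorem~\ref{thm:main} is stated for general $n\in\N^+$, so the scalar output case $n=1$ needed here is covered without modification.
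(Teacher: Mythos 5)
Your proposal is correct and follows exactly the route the paper intends: represent $f$ exactly by a $\ReLU$ network of width $\lceil 3q/2\rceil q$ and depth $2\lceil \log_2 q\rceil+1$ via Theorem~$1$ of \cite{NEURIPS2022_2f4b6feb}, then apply Theorem~\ref{thm:main} with its $(3,2)$ (width,\,depth) scaling factors. The bookkeeping of the resulting width $3\lceil 3q/2\rceil q$ and depth $4\lceil \log_2 q\rceil+2$ is also right.
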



It is demonstrated in \cite{shijun:RCNet} that
even though a single fixed-size \ReLU\ network has limited expressive capabilities, repeatedly composing it can create surprisingly expressive networks.
Specifically, Theorem~$1.1$ of \cite{shijun:RCNet}  establishes that
$\calL_2\circ \bmg^{\circ (3r+1)}\circ \calbmL_1$ can approximate a continuous function $f\in C([0,1]^d)$ with an
error $6\sqrt{d}\,\omega_f(r^{-1/d})$, where $\bmg\in\nn{\ReLU}{69d+48}{5}{5d+5}{5d+5}$, $\calbmL_1$ and $\calL_2$ are two affine
linear maps matching the dimensions, and $\bmg^{\circ   r}$
denotes the $r$-times composition of $\bmg$.
By merging this outcome with Theorem~\ref{thm:main}, we can promptly deduce the subsequent corollary.
\begin{corollary}
\label{coro:RCNet}
	Suppose $\varrho\in\scrA$ and $f\in C([0,1]^d)$ with $d\in\N^+$. Then for any $r\in \N^+$ and $p\in [1,\infty)$, 
	there exist $\bmg\in \nn{\varrho}{207d+144}{10}{5d+5}{5d+5}$
	and two affine linear maps
	$\calbmL_1:\R^d\to\R^{5d+5}$ and $\calL_2:\R^{5d+5}\to\R$
	such that 
	\begin{equation*}
		\big\|\calL_2\circ \bmg^{\circ (3r+1)}\circ \calbmL_1-f\big\|_{L^p([0,1]^d)}\le 7\sqrt{d}\,\omega_f(r^{-1/d}).
	\end{equation*}
\end{corollary}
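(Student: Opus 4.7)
The plan is to combine Theorem~$1.1$ of \cite{shijun:RCNet}, which handles the \ReLU\ case, with Theorem~\ref{thm:main} to swap the inner \ReLU\ block for a $\varrho$-activated one. The \ReLU\ result already delivers the error bound $6\sqrt{d}\,\omega_f(r^{-1/d})$; the constant is loosened to $7\sqrt{d}$ in the corollary precisely to leave a budget of $\sqrt{d}\,\omega_f(r^{-1/d})$ for the substitution error. If $\omega_f(r^{-1/d})=0$ then $f$ is constant and the claim is trivial, so I assume $\omega_f(r^{-1/d})>0$.

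First invoke Theorem~$1.1$ of \cite{shijun:RCNet} to fix an inner block $\bmg_\ReLU\in\nn{\ReLU}{69d+48}{5}{5d+5}{5d+5}$ and affine maps $\calbmL_1,\calL_2$ with
\begin{equation*}
\big\|\calL_2\circ\bmg_\ReLU^{\circ(3r+1)}\circ\calbmL_1-f\big\|_{L^p([0,1]^d)}\le 6\sqrt{d}\,\omega_f(r^{-1/d}).
\end{equation*}
By compactness of $[0,1]^d$ and continuity of all the maps, there exists $A>0$ with $\bmg_\ReLU^{\circ k}\circ\calbmL_1([0,1]^d)\subset[-A,A]^{5d+5}$ for every $k=0,1,\cdots,3r+1$; set $Q\coloneqq[-A-1,A+1]^{5d+5}$. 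As a finite composition of affine maps and \ReLU, the block $\bmg_\ReLU$ is globally Lipschitz, say with constant $\Lambda\ge 1$. Apply Theorem~\ref{thm:main} on $Q$ with some tolerance $\eps\in(0,1]$ to be chosen shortly, obtaining $\bmg\in\nn{\varrho}{207d+144}{10}{5d+5}{5d+5}$ with $\|\bmg-\bmg_\ReLU\|_{\sup(Q)}<\eps$.

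An induction on $k$ then establishes simultaneously that $\bmg^{\circ k}\circ\calbmL_1([0,1]^d)\subset Q$ and
\begin{equation*}
\big\|\bmg^{\circ k}\circ\calbmL_1-\bmg_\ReLU^{\circ k}\circ\calbmL_1\big\|_{\sup([0,1]^d)}\le\eps\sum_{j=0}^{k-1}\Lambda^j,\qquad k=0,1,\cdots,3r+1,
\end{equation*}
using at each step the splitting $\|\bmg\circ\bmu-\bmg_\ReLU\circ\bmv\|\le\|\bmg\circ\bmu-\bmg_\ReLU\circ\bmu\|+\Lambda\|\bmu-\bmv\|$, valid whenever the previous iterate lies in $Q$. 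Choosing $\eps$ small enough so that $\eps\sum_{j=0}^{3r}\Lambda^j\le 1$ keeps the orbit in $Q$, and shrinking $\eps$ further (recall $r$, $d$, $\Lambda$, and $\calL_2$ are fixed) makes the composed error $\|\calL_2\circ\bmg^{\circ(3r+1)}\circ\calbmL_1-\calL_2\circ\bmg_\ReLU^{\circ(3r+1)}\circ\calbmL_1\|_{\sup([0,1]^d)}\le\sqrt{d}\,\omega_f(r^{-1/d})$. A triangle inequality, together with $\|\cdot\|_{L^p([0,1]^d)}\le\|\cdot\|_{\sup([0,1]^d)}$ since $[0,1]^d$ has unit measure, yields the stated $7\sqrt{d}\,\omega_f(r^{-1/d})$ bound.

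The principal difficulty is that Theorem~\ref{thm:main} only controls a single block on a prescribed bounded set, whereas the $(3r+1)$-fold composition can in principle amplify small perturbations exponentially and push trajectories out of any fixed domain. I handle this by enlarging to the buffer cube $Q$ and exploiting the explicit Lipschitz bound $\Lambda$ of the piecewise-linear map $\bmg_\ReLU$ on $Q$: the geometric sum $\sum_{j=0}^{3r}\Lambda^j$ is finite since $r$ is fixed, so a correspondingly small tolerance $\eps$ keeps the iterates of $\bmg$ confined to $Q$ and caps the composed error below the $\sqrt{d}\,\omega_f(r^{-1/d})$ slack allocated in the statement.
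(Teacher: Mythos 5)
Your proposal is correct and follows essentially the same route as the paper: the paper's own sketch uses exactly the induction on the number of compositions with the splitting $\|\bmg_\eps\circ \bmg_\eps^{\circ m}-\bmg \circ \bmg_\eps^{\circ m}\|+\|\bmg \circ \bmg_\eps^{\circ m}-\bmg\circ \bmg^{\circ m}\|$, the first term controlled by the block approximation from Theorem~\ref{thm:main} and the second by the regularity of the \ReLU\ block on a compact set. Your only deviation is to replace the paper's appeal to uniform continuity by the explicit global Lipschitz constant $\Lambda$ of the piecewise-linear block, which is legitimate and makes the confinement of the perturbed iterates to the buffer cube $Q$ more transparent.
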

This corollary is not immediately apparent and necessitates a derivation: $\bmg_\eps\approx \bmg$ implies $\bmg_\eps^{\circ (3r+1)}\approx \bmg^{\circ (3r+1)}$ for small $\eps>0$, where $\bmg$ and $\bmg_\eps$ represent \ReLU\ and $\varrho$-activated networks, respectively. 
The proof relies on mathematical induction and the following equation
\begin{equation*}
    \big\|\bmg_\eps^{\circ  (m+1)}-\bmg^{\circ  (m+1)}\big\|_{\sup(\calK)}\le
    \big\|\bmg_\eps\circ \bmg_\eps^{\circ  m}-\bmg \circ \bmg_\eps^{\circ  m}\big\|_{\sup(\calK)}
    +
    \big\|\bmg \circ \bmg_\eps^{\circ  m}-\bmg\circ \bmg^{\circ  m}\big\|_{\sup(\calK)}
\end{equation*}
for any compact set $\calK$ and $m\in\N$,
where the first term of the above equation is constrained by  $\bmg_\eps\approx \bmg$, and the second term is controlled by the induction hypothesis and the uniform continuity of $\bmg$ on a compact set.
It is worth highlighting that the approximation error in Corollary~\ref{coro:RCNet} is 
measured using the $L^p$-norm for any $p\in [1,\infty)$. Nevertheless, it is feasible to
generalize this result to the $L^\infty$-norm as well, 
though it comes with larger associated constants.
To accomplish this, we only need to combine Theorem~$1.3$ of \cite{shijun:RCNet} with Theorem~\ref{thm:main}.

The remainder of this paper is organized as follows. 
In Section~\ref{sec:further:discussion}, we explore some additional related topics.
We present four supplementary theorems, Theorems~\ref{thm:main:kth:derivative}, \ref{thm:main:scrA:1k}, \ref{thm:main:scrA:2}, and \ref{thm:main:scrA:2:new} in Section~\ref{sec:additional:theorems} to complement Theorem~\ref{thm:main}, and we summarize main results of this paper in Table~\ref{tab:summary}.  We also discuss related work in Section~\ref{sec:related:work} and provide definitions and illustrations of common activation functions in Section~\ref{sec:summary:activation:functions}.
Moving forward to Section~\ref{sec:proof:thms}, we establish the proofs of Theorems~\ref{thm:main}, \ref{thm:main:kth:derivative}, \ref{thm:main:scrA:1k}, \ref{thm:main:scrA:2}, and \ref{thm:main:scrA:2:new}. In Section~\ref{sec:notation}, we introduce the notations used throughout this paper. In Section~\ref{sec:props:proof:thms}, we present several propositions, namely Propositions~\ref{prop:activation:replace}, \ref{prop:approx:f:nth:D}, \ref{prop:approx:ReLU:scrA:1k}, and \ref{prop:approx:ReLU:scrA:2:3}, 
outlining the underlying ideas for proving Theorems~\ref{thm:main}, \ref{thm:main:kth:derivative},  \ref{thm:main:scrA:1k}, \ref{thm:main:scrA:2}, and \ref{thm:main:scrA:2:new}. Subsequently, by assuming the validity of propositions, we provide the proof of Theorem~\ref{thm:main} in Section~\ref{sec:proof:thm:main}, followed by the subsequent proofs of Theorems~\ref{thm:main:kth:derivative}, \ref{thm:main:scrA:1k}, \ref{thm:main:scrA:2}, and \ref{thm:main:scrA:2:new} in Section~\ref{sec:proof:thms:main:others}. Finally,  we prove Propositions~\ref{prop:activation:replace}, \ref{prop:approx:f:nth:D}, \ref{prop:approx:ReLU:scrA:1k}, and \ref{prop:approx:ReLU:scrA:2:3}  in Sections~\ref{sec:proof:prop:activation:replace}, \ref{sec:proof:prop:approx:f:nth:D}, \ref{sec:proof:prop:approx:ReLU:scrA:1k}, and \ref{sec:proof:prop:approx:ReLU:scrA:23}, respectively.


\section{Further Discussions}
\label{sec:further:discussion}

In this section, we explore some additional related topics.
We first present four supplementary theorems, namely Theorems~\ref{thm:main:kth:derivative}, \ref{thm:main:scrA:1k}, \ref{thm:main:scrA:2}, and \ref{thm:main:scrA:2:new},
which complement Theorem~\ref{thm:main} and are
covered in detail
in Section~\ref{sec:additional:theorems}.
Additionally, we discuss related work in Section~\ref{sec:related:work} and provide 
comprehensive explanations and visual examples
of commonly used activation functions in Section~\ref{sec:summary:activation:functions}.

\subsection{Additional Results}
\label{sec:additional:theorems}

It is important to note that Theorem~\ref{thm:main} specifically focuses on activation functions $\varrho\in \scrA_{1,k}$ with $k=0,1$. However, we can also obtain similar results for larger values of $k\in \N$, where $\varrho\in \scrA_{1,k}$ exhibits even smoother properties. 
In particular, we establish that for any $\varrho\in  C^k(\R)$ with $k\in\N$, a $\varrho^{(k)}$-activated network of width $N$ and depth $L$ can be approximated to arbitrary precision by a $\varrho$-activated network of width $(k+1)N$ and depth $L$ on any bounded set.
\begin{theorem}
	\label{thm:main:kth:derivative}
	Given any $k\in\N$ and $\varrho\in C^k(\R)$, suppose
	$\bmphi_{\varrho^{(k)}}\in \nn{\varrho^{(k)}}{N}{L}{d}{n}$ with $N,L,d,n\in\N^+$. Then for any $\eps>0$ and $A>0$, there exists $\bmphi_\varrho\in \nn{\varrho}{(k+1)N}{L}{d}{n}$ such that
	\begin{equation*}
		\|\bmphi_\varrho-\bmphi_{\varrho^{(k)}}\|_{\sup([-A,A]^d)}<\varepsilon.
	\end{equation*}
\end{theorem}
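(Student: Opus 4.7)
The plan is to use the classical $k$-th order forward finite difference formula for the derivative: since $\varrho \in C^k(\R)$, for every $x \in \R$,
\begin{equation*}
\varrho^{(k)}(x) = \lim_{h \to 0^+} \frac{1}{h^k} \sum_{j=0}^{k} (-1)^{k-j} \binom{k}{j}\, \varrho(x + jh),
\end{equation*}
and the convergence is uniform on any compact subset of $\R$. This identity lets me realize an approximation of a single $\varrho^{(k)}$-neuron $\bmy \mapsto \varrho^{(k)}(\bmw^\ts\bmy + b)$ by the linear combination of $k+1$ shifted $\varrho$-neurons $\bmy \mapsto \varrho(\bmw^\ts\bmy + b + jh)$, with coefficients $h^{-k}(-1)^{k-j}\binom{k}{j}$. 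The cost is an extra factor of $k+1$ in width, while the depth is unchanged.

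First I would fix the $\varrho^{(k)}$-network $\bmphi_{\varrho^{(k)}} = \calbmL_L \circ \varrho^{(k)} \circ \calbmL_{L-1} \circ \cdots \circ \varrho^{(k)} \circ \calbmL_0$, and construct the candidate $\bmphi_\varrho$ layer by layer. For each hidden layer, I stack $k+1$ shifted copies of each of its $\le N$ neurons, so the new hidden width is $\le (k+1)N$; the coefficients of the finite difference are absorbed into the next affine map, and the resulting composition is still a valid $\varrho$-activated network of width $(k+1)N$ and depth $L$. This is essentially the mechanism of the earlier Proposition~\ref{prop:activation:replace}, which I would invoke to make the substitution rigorous.

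Next I would propagate the error through the $L$ layers by induction. Since $\bmphi_{\varrho^{(k)}}$ restricted to $[-A,A]^d$ produces bounded intermediate outputs, at each layer $\ell$ the relevant pre-activations lie in some compact set $K_\ell \subset \R$. For $h$ small enough, the uniform-on-$K_\ell$ convergence of the finite difference gives an arbitrarily small per-layer error. This error is then transported through the subsequent affine maps (which are fixed) and through $\varrho^{(k)}$, which is uniformly continuous on compact sets; composing the per-layer bounds gives an overall $\sup$-norm error less than $\eps$.

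The main obstacle is the bookkeeping in the induction: the compact neighborhoods on which I need uniform continuity and uniform finite-difference convergence depend on the intermediate outputs of the approximating network, which in turn depend on $h$. The standard fix is to define, a priori, slightly enlarged compact sets $\tildecal{K}_\ell$ around the exact intermediate outputs, choose $h$ small enough that every approximate intermediate output stays inside $\tildecal{K}_\ell$, and apply uniform continuity of $\varrho^{(k)}$ and of the fixed affine maps on these $h$-independent sets. Once this is set up, the induction is routine and yields the claimed $\sup$-norm bound on $[-A,A]^d$.
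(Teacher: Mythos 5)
Your proposal is correct and follows essentially the same route as the paper: the $k$-th order forward finite difference realizes $\varrho^{(k)}$ as a uniform limit (on compacts) of width-$(k+1)$, depth-$1$ $\varrho$-networks (this is exactly Proposition~\ref{prop:approx:f:nth:D}), and the layer-by-layer substitution with the compact-set/uniform-continuity bookkeeping you describe is precisely the content of Proposition~\ref{prop:activation:replace}, which the paper then invokes to conclude.
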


Furthermore, the following theorem  specifically addresses $\varrho \in \scrA_{1,k}$ for any $k\in\N$.
Specifically,
we demonstrate that for any $\varrho\in \scrA_{1,k}$ with $k\in\N$, a \ReLU\  network of width $N$ and depth $L$  can be approximated with arbitrary precision by a $\varrho$-activated network of width $(k+2)N$ and depth $L$ on any bounded set.
\begin{theorem}
	\label{thm:main:scrA:1k}
	Suppose
	$\bmphi_\ReLU\in \nn{\ReLU}{N}{L}{d}{n}$ with $N,L,d,n\in\N^+$.
	Then for any $\eps>0$, $A>0$, $k\in\N$, and $\varrho\in \scrA_{1,k}$, there exists $\bmphi_\varrho\in \nn{\varrho}{(k+2)N}{L}{d}{n}$ such that
	\begin{equation*}
		\|\bmphi_\varrho-\bmphi_\ReLU\|_{\sup([-A,A]^d)}<\varepsilon.
	\end{equation*}
\end{theorem}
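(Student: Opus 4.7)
My plan is to reduce the theorem to a per-neuron replacement lemma: every ReLU neuron $z \mapsto \ReLU(z)$ can be approximated, up to arbitrarily small uniform error on a bounded range, by an affine combination of $k+2$ parallel $\varrho$-neurons. Granted this, depth $L$ is preserved, each hidden layer of width at most $N$ becomes one of width at most $(k+2)N$, and a layer-by-layer induction, using uniform continuity of the remaining ReLU-affine maps on compact sets, propagates the error to a uniform bound under $\eps$ on $[-A,A]^d$. I would package the lifting step as a statement analogous to Proposition~\ref{prop:activation:replace}, so that the substantive work sits in the per-neuron construction, which plays the role of Proposition~\ref{prop:approx:ReLU:scrA:1k}.

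The central idea is to exploit the $k$-th order kink of $\varrho$ at some $x_0$. In a neighborhood of $x_0$ the function $\varrho^{(k)}$ is continuous and satisfies the expansion $\varrho^{(k)}(x_0+t) = \varrho^{(k)}(x_0) + c_- t + (c_+ - c_-)\ReLU(t) + o(t)$ with $c_+ \neq c_-$. I would approximate $\varrho^{(k)}(x_0 + hz)$ by a scaled $k$-th order finite difference
\begin{equation*}
\frac{1}{\delta^{k}}\sum_{j=0}^{k}\binom{k}{j}(-1)^{k-j}\varrho(x_0 + j\delta + hz),
\end{equation*}
which consumes exactly $k+1$ parallel $\varrho$-neurons. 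Combining this with the local expansion and sending $\delta, h \to 0$ yields $(c_+ - c_-)\,h\,\ReLU(z)$ plus an affine-in-$z$ remainder, uniformly on $z \in [-A,A]$. The constant piece of the remainder is absorbed into the downstream bias, but the linear-in-$z$ piece proportional to $c_- h z$ must be canceled, so I add one further $\varrho$-neuron approximating an identity-type map in $z$, bringing the total to $k+2$.

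How I produce that identity neuron splits by cases. For $k \geq 1$, $\varrho$ is $C^1$ near $x_0$, so I pick $y_0$ with $\varrho'(y_0) \neq 0$ and use $\varrho(y_0 + \eta z) = \varrho(y_0) + \eta z\,\varrho'(y_0) + O(\eta^2)$. For $k = 0$, where $\varrho$ need only be $C^0$ near $x_0$, I instead place a shifted copy $\varrho(x_0 + h(z - s))$ with a fixed $s > A$: its kink at $z = s$ sits outside $[-A,A]$, so the one-sided expansion of $\varrho$ at $x_0$ from the left yields $\varrho(x_0) + c_- h(z - s) + o(h)$ uniformly in $z \in [-A,A]$, supplying the needed linear-in-$z$ term whenever $c_- \neq 0$; if instead $c_- = 0$, a symmetric shift to the right side uses $c_+ \neq 0$. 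At least one of $c_+, c_-$ must be nonzero because $c_+ \neq c_-$.

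The hard part is the $k = 0$ subcase, where $\varrho$ is only assumed $C^0$ on a neighborhood of $x_0$ and need not be differentiable anywhere else, so the shifted-kink construction above is essential and its uniform validity must be argued from the one-sided derivative alone. Once the per-neuron lemma is in place, the final step is a short induction on depth: choose the per-neuron tolerance small enough that the cumulative error under composition through $L$ layers, controlled by the operator norms of the affine maps and the uniform continuity of $\ReLU$, stays below $\eps$ on $[-A,A]^d$.
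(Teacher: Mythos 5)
Your proposal follows essentially the same route as the paper: a per-neuron lemma (the paper's Proposition~\ref{prop:approx:ReLU:scrA:1k}) built from a $k$-th order finite difference on $k+1$ neurons plus one extra neuron for the linear correction, with the same $k=0$ versus $k\ge 1$ case split and the same shifted-copy trick to stay on one side of the kink, all lifted to full networks by an activation-replacement argument identical to Proposition~\ref{prop:activation:replace}. The only detail to be careful about when writing it out is that the finite-difference parameter $\delta$ must be sent to zero fast enough relative to $h$ (the paper enforces an error of order $\eps^2$ before dividing by $\eps$), but your plan is otherwise correct.
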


Moving forward, let us delve into the concept of optimality, particularly concerning the potential for further reducing 
the (width,\,depth) scaling factors
in Theorem~\ref{thm:main}.
Despite our diligent efforts, we have yet to establish lower bounds that correspond to the (width,\,depth) scaling factors we have identified in this study. Consequently, it remains an open question whether the (width,\,depth) scaling factors we have found are the best possible in the general context. Nonetheless, by targeting specific categories of activation functions, we have succeeded in deriving improved scaling factors.

Next, we introduce two specific scenarios aimed at diminishing the (width,\,depth) scaling factors in Theorem~\ref{thm:main}. The first scenario is directed towards the situation where $\varrho$ belongs to the set $\scrA_{2}$. In this particular case, we demonstrate that the  (width,\,depth) scaling factors can be reduced to $(2,1)$, as exemplified in Theorem~\ref{thm:main:scrA:2} below.

\begin{theorem}
	\label{thm:main:scrA:2}
	Suppose
	$\bmphi_\ReLU\in \nn{\ReLU}{N}{L}{d}{n}$ with $N,L,d,n\in\N^+$.
	Then for any $\eps>0$, $A>0$, and $\varrho\in \scrA_{2}$, there exists $\bmphi_\varrho\in \nn{\varrho}{2N}{L}{d}{n}$ such that
	\begin{equation*}
		\|\bmphi_\varrho-\bmphi_\ReLU\|_{\sup([-A,A]^d)}<\varepsilon.
	\end{equation*}
\end{theorem}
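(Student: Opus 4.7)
The plan is a two-step reduction routed through the generic activation-replacement lemma (Proposition~\ref{prop:activation:replace}). The first step is to establish a shallow local approximation: for every $\varrho\in\scrA_2$, every $A'>0$, and every $\eta>0$, the scalar map $\ReLU$ can be approximated to within $\eta$ uniformly on $[-A',A']$ by a $\varrho$-activated network of width $2$ and depth $1$, that is, by an expression of the form $\alpha_1\varrho(w_1 y+c_1)+\alpha_2\varrho(w_2 y+c_2)+\gamma$. The second step is to invoke Proposition~\ref{prop:activation:replace} to replace each of the $N$ scalar $\ReLU$-activations at each of the $L$ hidden layers by the corresponding two-neuron $\varrho$-block in place: the widths then multiply by $2$ while depth is preserved, and a standard inductive error-propagation argument across the $L$ layers (using that the intervening affine maps are Lipschitz) converts a sufficiently small per-neuron sup-error into the prescribed $\eps$ on $[-A,A]^d$.

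For the shallow construction, write $\varrho(y)=(y+b_0)h(y)+b_1$ and set $L_\pm\coloneqq\lim_{y\to\pm\infty}h(y)$. The key asymptotic is
\begin{equation*}
\tfrac{1}{M}\varrho(My)\ \longrightarrow\ L_+\ReLU(y)-L_-\ReLU(-y)\qquad\text{uniformly on }[-A',A']\text{ as }M\to\infty.
\end{equation*}
This is proved by splitting $[-A',A']=[-\delta,\delta]\cup\bigl([-A',-\delta]\cup[\delta,A']\bigr)$: on the small neighborhood both sides are $O(\delta)$ because $h$ is bounded, while on the complement $|My|\to\infty$ uniformly and the $\eps$-$X$ definition of $\lim_{y\to\pm\infty}h(y)$ upgrades the pointwise convergence to a uniform one. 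Swapping $M$ for $-M$ if necessary, we may arrange $L_+\ne 0$. To eliminate the unwanted $-L_-\ReLU(-y)$ term, the key idea is a horizontal shift: picking any constant $a<-A'$ so that $y-a>0$ for every $y\in[-A',A']$, the same asymptotic argument yields $\varrho(M(y-a))/M\to L_+(y-a)$ uniformly on $[-A',A']$. Using the identity $y=\ReLU(y)-\ReLU(-y)$, the coefficients $\alpha_1=1/(L_+-L_-)$, $\alpha_2=-L_-/\bigl(L_+(L_+-L_-)\bigr)$, and a matching bias $\gamma=-L_- a/(L_+-L_-)$ make the linear combination $\alpha_1\varrho(My)/M+\alpha_2\varrho(M(y-a))/M+\gamma$ converge uniformly to $\ReLU(y)$ on $[-A',A']$, yielding the width-$2$, depth-$1$ approximation.

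The main obstacle is the degenerate case $L_+=-L_-\ne 0$ (exemplified by $h=\tanh$, i.e.\ $\varrho(y)=y\tanh(y)$): the naive symmetric ansatz $\alpha_1\varrho(My)+\alpha_2\varrho(-My)+\gamma$ collapses, because both $\varrho(\pm My)/M$ converge uniformly to the same multiple of $|y|$, so the resulting two-neuron family spans only $|y|$ and constants and cannot recover the linear-in-$y$ piece needed to separate $\ReLU(y)=(|y|+y)/2$ from $|y|/2$. The horizontal-shift trick is exactly what dispatches this: forcing the second neuron's argument into a one-sided regime turns it asymptotically into a genuine linear function of $y$, restoring the missing independent direction. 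The nonzero denominator $L_+-L_-$ (guaranteed by the S-shapedness requirement in the definition of $\scrS$) then ensures that the $2\times 2$ system for $\alpha_1,\alpha_2$ is always solvable, and Proposition~\ref{prop:activation:replace} delivers the conclusion.
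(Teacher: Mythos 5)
Your proposal is correct and follows essentially the same route as the paper: reduce to a width-$2$, depth-$1$ uniform approximation of $\ReLU$ on a bounded interval (the paper's Proposition~\ref{prop:approx:ReLU:scrA:2:3}, Case $\varrho\in\scrA_2$) and then invoke Proposition~\ref{prop:activation:replace}, whose proof is exactly the layerwise error-propagation induction you sketch. Your two-neuron construction is also the same in spirit — one scaled neuron whose limit is $\ReLU(y)+\tfrac{L_-}{L_+-L_-}y$ up to normalization, plus a shifted copy of the same neuron that becomes asymptotically affine and cancels the linear remainder — the only real difference being the direction of the shift: you push the second neuron's argument onto the positive ray (so its slope is governed by $L_+$, forcing the preliminary sign swap to ensure $L_+\neq 0$), whereas the paper shifts by $-M$ onto the negative ray where $\ReLU$ vanishes, so the second neuron reproduces the linear term $\tfrac{L_1}{L_2-L_1}x$ with the same coefficient as the first and no nondegeneracy assumption beyond $L_1\neq L_2$ is needed. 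Both variants are valid; your coefficient computation checks out, and your observation about the degenerate case $L_+=-L_-$ correctly identifies why a genuinely one-sided second neuron is indispensable.
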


The second scenario revolves around a specific subset of $\scrA_{2}$, denoted as $\tildescrA_{2}$, and it is defined as
\begin{equation*}
    \begin{split}
        \tildescrA_{2}\coloneqq \Big\{&\varrho:\R\to\R
        \    \Big| \   
        \forall x\in\R,\  \varrho(x)\coloneqq(x+b_0)\cdot h(x)+b_1,\quad 
        b_0,b_1\in\R,\quad
       h\in\tildescrS\Big\},
    \end{split}
\end{equation*}
where $\tildescrS$ represents a refined subset of $\scrS$ and is defined as
\begin{equation*}
    \begin{split}
        \tildescrS\coloneqq \Big\{ h:\R\to\R
        \    \Big| \   
        h\in \scrS,\quad \Big(\lim_{x\to -\infty} h(x)\Big) \cdot \Big(\lim_{x\to \infty} h(x)\Big)  =0
        \Big\}.
    \end{split}
\end{equation*}
The only difference between $\scrA_{2}$ and $\tildescrA_{2}$ lies in the limits defined for $h$ therein. In $\scrA_{2}$, it only requires the existence and distinctness of $\lim_{x\to -\infty} h(x)$ and $\lim_{x\to \infty} h(x)$.  In contrast, $\tildescrA_{2}$ introduces an additional requirement where either $\lim_{x\to -\infty} h(x)$ or $\lim_{x\to \infty} h(x)$ must equal $0$.
As illustrated in the theorem below, the (width,\,depth) scaling factors in this particular instance can be reduced to (1,1), which 
is the best scaling obtained by our construction.

\begin{theorem}
	\label{thm:main:scrA:2:new}
	Suppose
	$\bmphi_\ReLU\in \nn{\ReLU}{N}{L}{d}{n}$ with $N,L,d,n\in\N^+$.
	Then for any $\eps>0$, $A>0$, and $\varrho\in \tildescrA_{2}$, there exists $\bmphi_\varrho\in \nn{\varrho}{N}{L}{d}{n}$ such that
	\begin{equation*}
		\|\bmphi_\varrho-\bmphi_\ReLU\|_{\sup([-A,A]^d)}<\varepsilon.
	\end{equation*}
\end{theorem}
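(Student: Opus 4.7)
The plan is to obtain Theorem~\ref{thm:main:scrA:2:new} via a one-for-one substitution: each $\ReLU$ activation in $\bmphi_\ReLU$ is replaced by a single $\varrho$ activation, with all rescaling folded into the surrounding weights and biases, so that neither the width $N$ nor the depth $L$ changes. This reduces the theorem to two ingredients: (i) uniform approximation, on any compact interval, of $\ReLU$ by a one-neuron family $z\mapsto\alpha\,\varrho(\lambda z)+\beta$; and (ii) an activation-replacement principle that propagates such a per-neuron approximation through the network. Ingredient (ii) is the same general mechanism used for Theorems~\ref{thm:main}--\ref{thm:main:scrA:2}; the novelty here is establishing (i) with only one $\varrho$-neuron, which is precisely where the extra condition defining $\tildescrA_{2}$ enters.

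By definition, $\varrho(x)=(x+b_0)\,h(x)+b_1$ with $h\in\tildescrS$, and the hypothesis $\big(\lim_{x\to-\infty}h(x)\big)\cdot\big(\lim_{x\to\infty}h(x)\big)=0$ forces one tail limit of $h$ to vanish. Without loss of generality I assume $\lim_{x\to-\infty}h(x)=0$ and set $c:=\lim_{x\to\infty}h(x)\neq 0$; the opposite case reduces to this one by applying the argument to $x\mapsto-\varrho(-x)\in\tildescrA_{2}$ and absorbing the sign flips into the network weights. I then consider the candidate
\begin{equation*}
\psi_\lambda(z)\;:=\;\tfrac{1}{\lambda c}\bigl(\varrho(\lambda z)-b_1\bigr)\;=\;\tfrac{(\lambda z+b_0)\,h(\lambda z)}{\lambda c},
\end{equation*}
which is realized by a single $\varrho$-unit with incoming weight $\lambda$ and an outgoing affine correction, and I aim to show $\|\psi_\lambda-\ReLU\|_{\sup([-A',A'])}\to 0$ as $\lambda\to\infty$ for any $A'>0$.

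The technical heart is this uniform estimate. Pointwise convergence is immediate from the tail limits of $h$: for $z>0$ we have $h(\lambda z)\to c$ so $\psi_\lambda(z)\to z$, and for $z<0$ we have $h(\lambda z)\to 0$ so $\psi_\lambda(z)\to 0$. The delicate point is uniformity near $z=0$, where $\lambda z$ need not be deep in either tail. I would handle this by splitting $[-A',A']$ into an outer regime $|z|\ge\lambda^{-1/2}$, in which $|\lambda z|\ge\lambda^{1/2}\to\infty$ and the tail limits of $h$ apply uniformly, and an inner regime $|z|<\lambda^{-1/2}$, in which boundedness of $h$ together with the prefactor $(\lambda z+b_0)/\lambda$ already yields $|\psi_\lambda(z)|+|z|=O(\lambda^{-1/2})$. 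Combining the two regimes gives a uniform error controlled by $\lambda^{-1/2}$ plus the tail modulus of $h$ beyond $\pm\lambda^{1/2}$, both of which vanish as $\lambda\to\infty$.

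Given the one-neuron approximation, the theorem follows by propagating the error through the network: since the inputs lie in $[-A,A]^d$ and $\bmphi_\ReLU$ has fixed weights, every pre-activation along the network remains in a common compact interval $[-A',A']$ determined by $A$ and the network; substituting $\psi_\lambda$ for $\ReLU$ at every neuron and taking $\lambda$ sufficiently large drives the accumulated $\sup$-error below any prescribed $\eps$ via Lipschitz bounds on each affine layer. The main obstacle, and the only place the stronger defining property of $\tildescrA_{2}$ (rather than of $\scrA_{2}$) is genuinely used, is the uniform-convergence lemma above; for general $h\in\scrS$ the limit of $\psi_\lambda$ is the two-sided piecewise linear function with slopes $\lim_{x\to\pm\infty}h(x)$, and extracting $\ReLU$ from it requires a second $\varrho$-neuron to subtract off the linear background, which is precisely why the scaling factors in Theorem~\ref{thm:main:scrA:2} are $(2,1)$ instead of $(1,1)$.
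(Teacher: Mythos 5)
Your proposal is correct and follows essentially the same route as the paper: a single $\varrho$-neuron $z\mapsto\tfrac{1}{\lambda c}\bigl(\varrho(\lambda z)-b_1\bigr)$ approximating $\ReLU$ uniformly on compacta (this is Case~1 of Proposition~\ref{prop:approx:ReLU:scrA:2:3}, fed through Lemma~\ref{lem:approx:ReLU}), combined with the activation-replacement induction of Proposition~\ref{prop:activation:replace}. The only differences are cosmetic: your $|z|\lessgtr\lambda^{-1/2}$ split versus the paper's split at $|x|=\eps/(4K_0)$, keeping the $b_0/\lambda$ term rather than shifting it out with an affine pre-map, and handling the sign via $x\mapsto-\varrho(-x)$ rather than a $\pm1$ weight.
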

The proofs of Theorems~\ref{thm:main:kth:derivative}, \ref{thm:main:scrA:1k}, \ref{thm:main:scrA:2}, and \ref{thm:main:scrA:2:new} are available in Section~\ref{sec:proof:thms}. To facilitate the reading, we provide a summary and comparison of our primary findings in Table~\ref{tab:summary}. 

\begin{table}[htbp!] 
\setlength{\tabcolsep}{10pt}
\def\arraystretch{1.22}
	\caption{Summary of main results.} 
	\label{tab:summary}
	\centering  
	\resizebox{0.9998\textwidth}{!}{ 
		\begin{tabular}{ccccccc} 
			\toprule
			    &   conditions on $\varrho$  &   (width,\,depth) scaling factors  \\
			
			\midrule
			Theorem~\ref{thm:main}    &  $\varrho\in\scrA=\big( \scrA_{1,0}\cup\scrA_{1,1}\big)\cup\scrA_{2}\cup\scrA_3$,  e.g., \Sigmoid\ and \Tanh & $(3,2)$ \\
   \midrule
Theorem~\ref{thm:main:scrA:1k} & $\varrho\in \scrA_{1,k}$ for any $k\in\N$, e.g., $\ReLU^2\in \scrA_{1,1}$ and $\ReLU^3\in \scrA_{1,2}$ &  $(k+2, 1)$ \\

\midrule
Theorem~\ref{thm:main:scrA:2} & $\varrho\in \scrA_{2}$, e.g., $\varrho(x)\coloneqq x\cdot \Softsign(x)$ and $\varrho(x)\coloneqq  x\cdot \Arctan(x)$  &  $(2, 1)$ \\

\midrule
Theorem~\ref{thm:main:scrA:2:new} & \quad 
 $\varrho\in \tildescrA_{2}\subseteq \scrA_{2}$, e.g.,   \ELU, \CELU, \SELU, \Softplus, \GELU, \SiLU, \Swish, and \Mish\quad   &  $(1, 1)$ \\			
			\bottomrule
		\end{tabular} 
	}
\end{table} 



It is important to highlight that Theorem~\ref{thm:main:scrA:2:new} suggests that the activation function within $\tildescrA_2$ is, at the very least, not inferior to \ReLU\  in terms of approximation. 
Refer to Table~\ref{tab:summary:approx:ReLU} for the comparison of approximation errors when using a single active neuron activated by $\varrho\in\tildescrA_2$ to approximate \ReLU. 
Definitions and visual depictions of the activation functions referenced in Table~\ref{tab:summary:approx:ReLU} are provided in Section~\ref{sec:summary:activation:functions}.

\begin{table}[htbp!] 
\setlength{\tabcolsep}{10pt}
\def\arraystretch{1.22}
	\caption{Comparison of approximation errors when using a single active neuron activated by $\varrho\in\tildescrA_2$ to approximate the \ReLU\ activation function. } 
	\label{tab:summary:approx:ReLU}
	\centering  
	\resizebox{0.998\textwidth}{!}{ 
		\begin{tabular}{ccccccc} 
			\toprule
			  $\varrho$  &  \qquad  approximation error (for any $x\in\R$ and $K>0$)\qquad \qquad  &    constant estimate  
 \\			
			\midrule
			$\ELU\ (\alpha>0)$ or $\CELU\ (\alpha>0)$    &  $0\le \ReLU(x)-\varrho(Kx)/K\le C_1\cdot \alpha/K$ & $C_1=1 $\\
\midrule
		
			$\Softplus$    &  $0\le \ReLU(x)-\big(\varrho(Kx)-\ln 2\big)/K\le C_2/K$ & $C_2=\ln 2 \approx  0.693 $\\
   \midrule
			$\GELU\ (\mu=0,\, \sigma>0)$    &  $0\le \ReLU(x)-\varrho(Kx)/K\le C_3\cdot\sigma/K$ & $C_3\approx  0.170 $\\  
   \midrule
			$\SiLU$    &  $0\le \ReLU(x)-\varrho(Kx)/K\le C_4/K$ & $C_4\approx
 0.278 $\\
   \midrule
			$\Swish\ (\beta>0)$    &  $0\le \ReLU(x)-\varrho(Kx)/K\le C_5/(\beta\cdot K)$ & $C_5=C_4\approx  0.278 $\\	
   \midrule 
			$\Mish$    &  $0\le \ReLU(x)-\varrho(Kx)/K\le C_6/K$ & $C_6\approx  0.309 $\\  
    \midrule 
        $\varrho(x)\coloneqq x\cdot\dSiLU(x)$  
        &  $-\tildeC_7/K\le \ReLU(x)-\varrho(Kx)/K\le C_7/K$ & $\tildeC_7\approx  0.265,\; C_7\approx 0.131 $\\ 
  \midrule 
        $\varrho(x)\coloneqq x\cdot\big(\Softsign(x)/2+1/2\big)$
        &  $0\le \ReLU(x)-\varrho(Kx)/K\le C_8/K$ & $C_8 = 0.5 $\\   
  \midrule 
        $\varrho(x)\coloneqq x\cdot\big(\Arctan(x)/\pi+1/2\big)$
        &  $0\le \ReLU(x)-\varrho(Kx)/K\le C_9/K$ & $C_9=1/\pi \approx 0.318 $\\ 
			\bottomrule
		\end{tabular} 
	}
\end{table} 

Let us briefly discuss how to estimate the approximation errors in Table~\ref{tab:summary:approx:ReLU}.
According to the definition of $\varrho\in\tildescrA_2$, up to suitable affine input/output transformations,  it can be represented as $\varrho(x) = x \cdot h(x)$ for any $x\in\R$, where $h$ is an S-shaped function with either $\lim_{x\to -\infty} h(x)$ or $\lim_{x\to \infty} h(x)$ equal to $0$.
Without loss of generality, through scaling and reflection if necessary, it can be assumed that $\lim_{x\to -\infty} h(x)=0$ and $\lim_{x\to \infty} h(x)=1$.
Then for any $x\in\R$ and $K>0$, we have
\begin{equation*}
    \ReLU(x)-\frac{\varrho(Kx)}{K}
    =x\cdot \one_{\{x>0\}}-\frac{Kx\cdot h(Kx)}{K}
    =\frac{Kx\cdot\big(\one_{\{Kx>0\}}- h(Kx)\big)}{K}
    \in \Big[\frac{m}{K},\,\frac{M}{K}\Big],
\end{equation*}
where $m,M\in \R\cup\{-\infty,\infty\}$ are given by
\begin{equation*}
    m=\inf\big\{y\cdot\big(\one_{\{y>0\}}-h(y)\big):y\in\R\big\}\quad \tn{and}\quad
    M=\sup\big\{y\cdot\big(\one_{\{y>0\}}-h(y)\big):y\in\R\big\}.
\end{equation*}
To ensure $-\infty < m \le  M < \infty$, we just need the set $\big\{y \cdot \big(\one_{\{y > 0\}} - h(y)\big) : y \in \R\big\}$ to be bounded.
We can then estimate the values of $m$ and $M$ by examining the characteristics of the derivatives, including higher-order ones.
Refer to Figure~\ref{fig:act:in:tildescrA:2:approx:ReLU} for 
visual representations demonstrating that the activation functions in $\tildescrA_2$ require only a single active neuron to approximate the \ReLU\ activation function.

We would like to emphasize that, as we will demonstrate later in Proposition~\ref{prop:approx:ReLU:scrA:2:3}, any activation function $\varrho\in\tildescrA_2$ can effectively employ just one active neuron to provide an arbitrarily accurate approximation of \ReLU\ within a \textit{bounded} subset. It is worth noting that, by introducing a mild condition, this approximation extends to the entire real number line $\R$, rather than being limited to a bounded subset.
To elucidate this, when considering any activation function $\varrho\in\tildescrA_2$, we can identify two affine linear maps $\calL_1,\calL_2:\R\to\R$ 
such that $\calL_2\circ\varrho\circ\calL_1(y)=y\cdot \tildeh(y)$ for any $y\in\R$,
where $\tildeh:\R\to\R$ is an S-shaped function with 
\begin{equation*}
    \sup_{x\in\R}|\tildeh(x)|<\infty,\quad \lim_{x\to-\infty}\tildeh(x)=0,\quad \tn{and}\quad \lim_{x\to\infty}\tildeh(x)=1. 
\end{equation*}
Therefore, by ensuring that the set $\big\{y \cdot \big(\one_{\{y > 0\}} - \tildeh(y)\big) : y \in \R\big\}$ remains bounded, we provide adequate conditions for $\varrho$ to employ a single active neuron to accurately approximate the \ReLU\ activation function on the entire real number
line $\R$.

Finally, 
let us briefly identify the types of functions that do not belong to the previously mentioned activation function sets $\scrA_{1,k}$ for $k\in\N$, $\scrA_2$, and $\scrA_3$. In essence, we aim to characterize the activation function $\varrho$ for which a fixed-size $\varrho$-activated network cannot achieve an arbitrarily accurate approximation of the \ReLU\ activation function by solely adjusting its parameters. Notably, polynomials serve as evident examples in this context. In fact, all rational functions also fall into this category. 
Below, a concise outline of the proof using the method of contradiction will be provided.
Let us assume that $\varrho$ is a rational function, and a fixed-size $\varrho$-activated network has the capability to approximate \ReLU\ arbitrarily well. It is important to note that a fixed-size $\varrho$-activated network can be represented as a rational function of a certain degree. Consequently, a rational function of a certain degree can approximate \ReLU\  arbitrarily well, and by extension, the absolute value function, which is the sum of $\ReLU(x)$ and $\ReLU(-x)$. However, this scenario leads to a contradiction since the approximation error has a lower bound that depends on the degree of the rational function when using it to approximate the absolute value function \citep[e.g., see][]{2020arXiv200502736G}.

\begin{figure}[H]
    \centering	
    \begin{subfigure}[c]{0.32455\textwidth}
    \centering            \includegraphics[width=0.998055\textwidth]{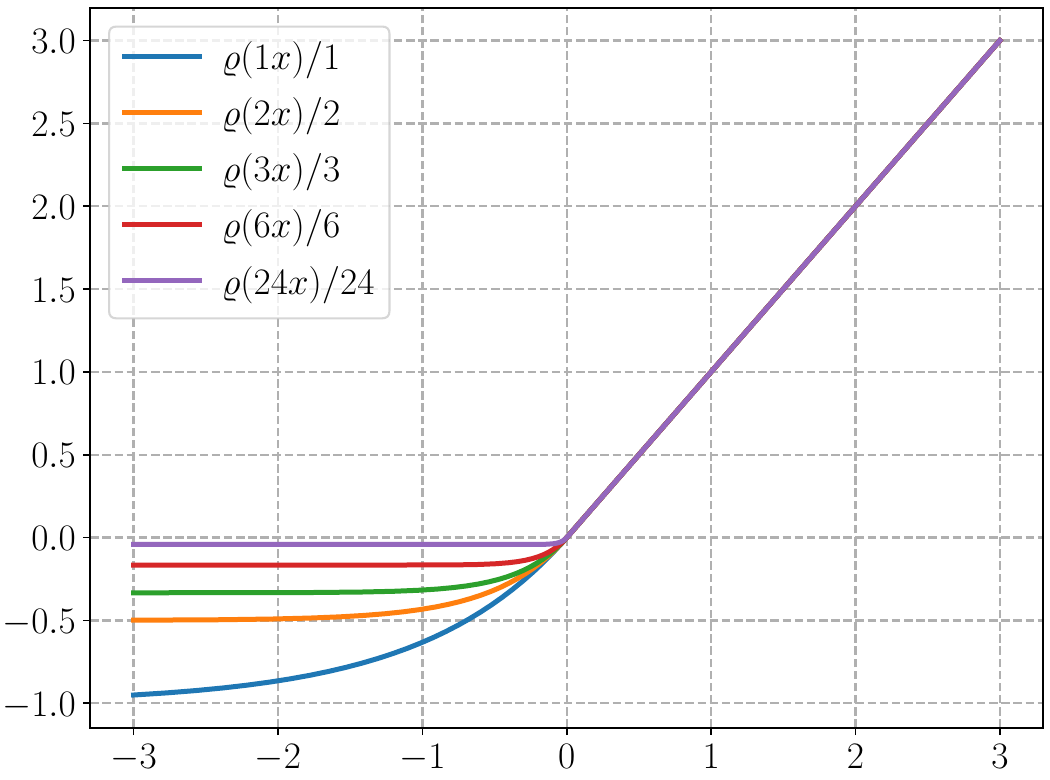}
    \subcaption{$\varrho=\ELU=\CELU\ (\alpha=1)$.}
    \end{subfigure}\hfill
             \begin{subfigure}[c]{0.32455\textwidth}
    \centering            \includegraphics[width=0.998055\textwidth]{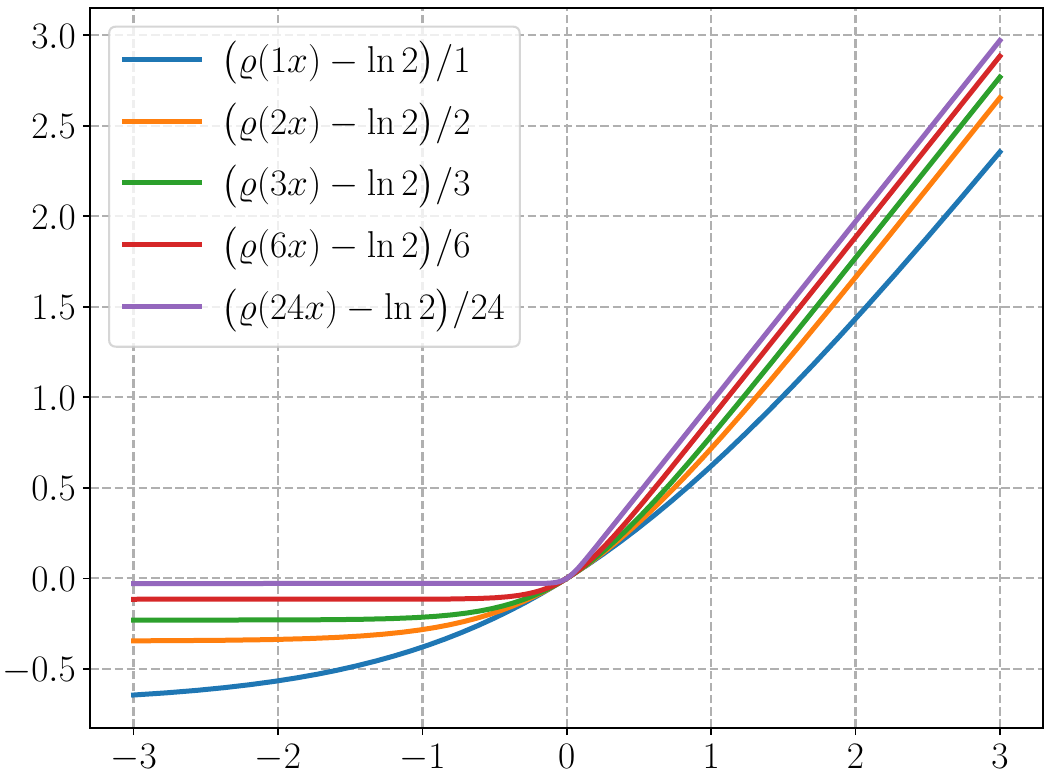}
    \subcaption{$\varrho=\Softplus$.}
    \end{subfigure}\hfill
    \begin{subfigure}[c]{0.32455\textwidth}
    \centering            \includegraphics[width=0.998055\textwidth]{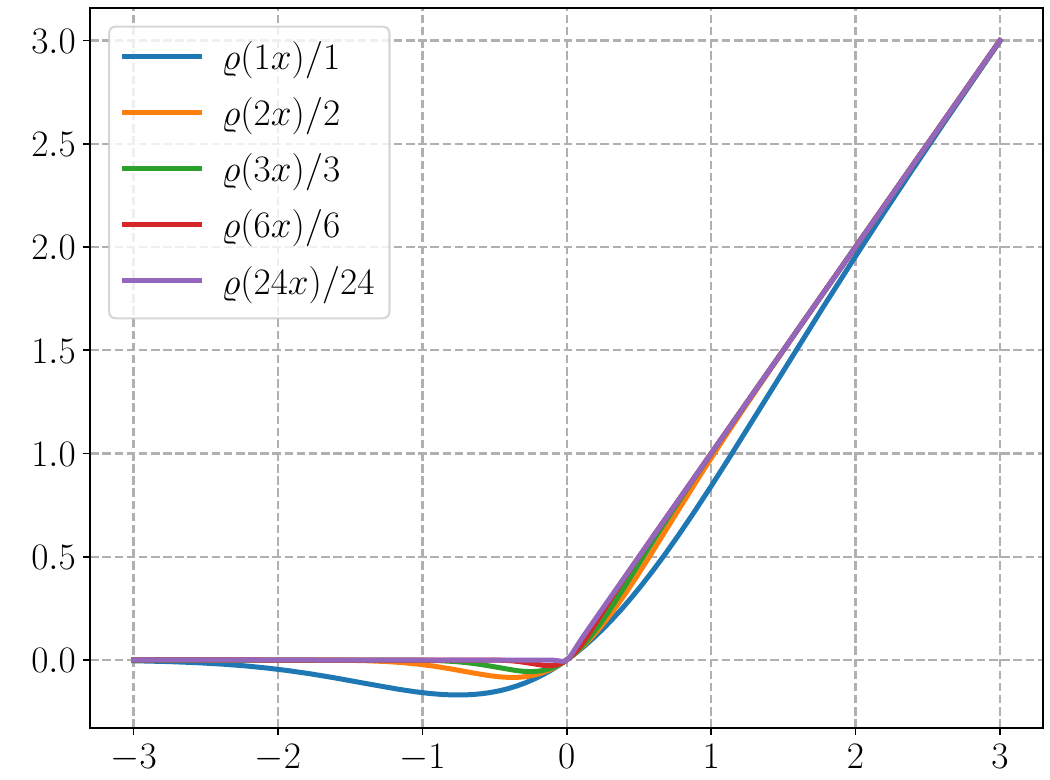}
     \subcaption{$\varrho=\GELU\  (\mu=0,\, \sigma=1)$.}
    \end{subfigure}\\  \vspace{5pt}
        \begin{subfigure}[c]{0.32455\textwidth}
    \centering            \includegraphics[width=0.998055\textwidth]{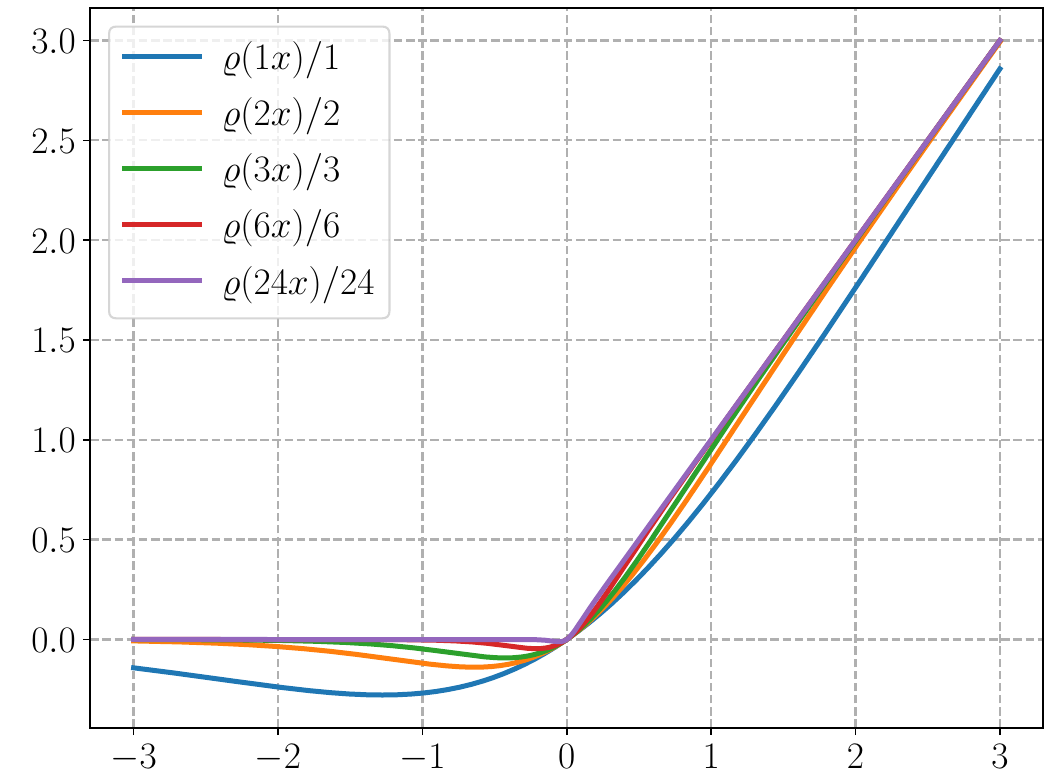}
    \subcaption{$\varrho=\SiLU$.}
    \end{subfigure}\hfill
             \begin{subfigure}[c]{0.32455\textwidth}
    \centering            \includegraphics[width=0.998055\textwidth]{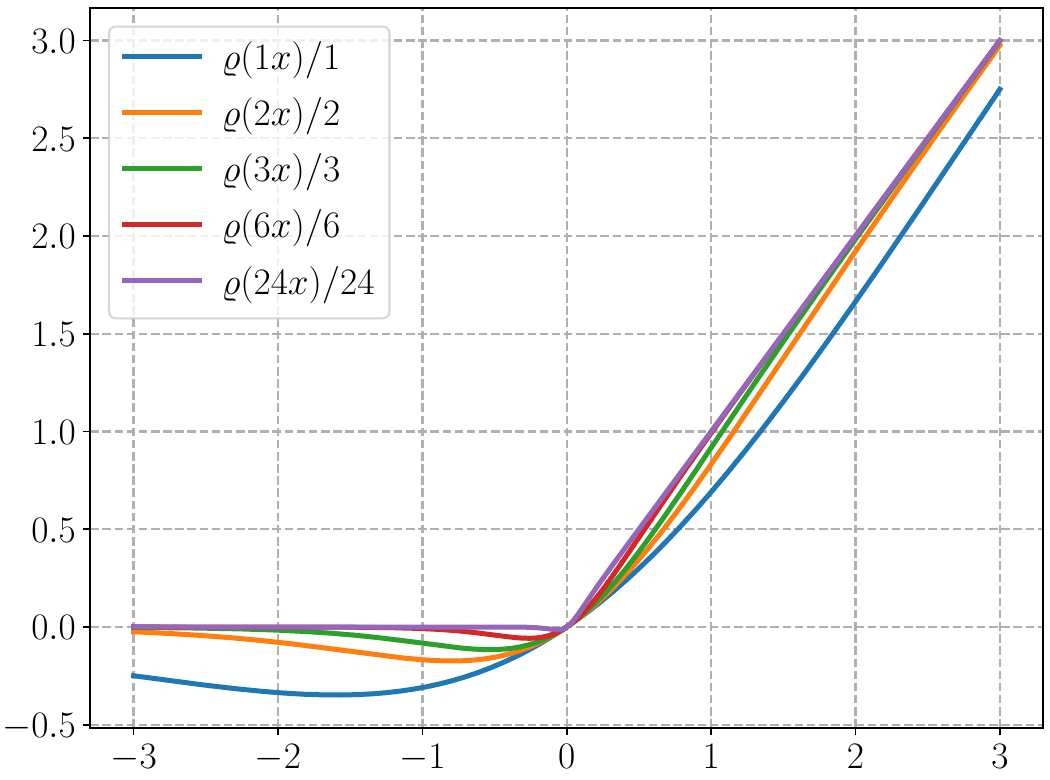}
    \subcaption{$\varrho=\Swish\ (\beta=0.8)$.}
    \end{subfigure}\hfill
    \begin{subfigure}[c]{0.32455\textwidth}
    \centering            \includegraphics[width=0.998055\textwidth]{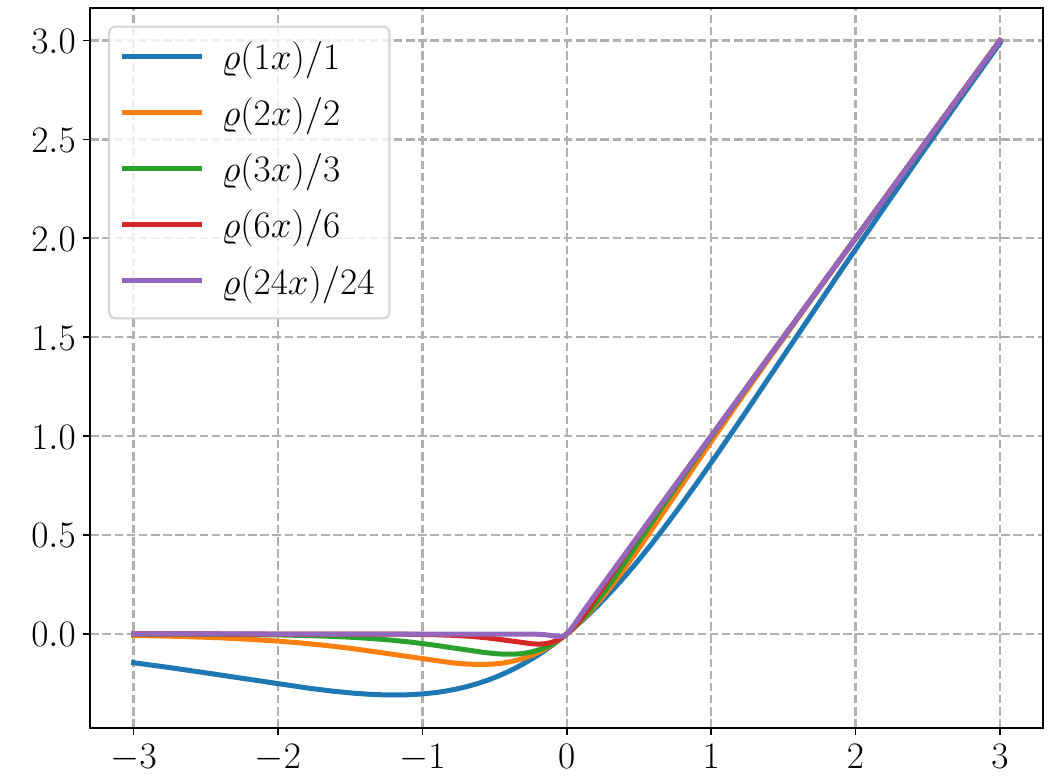}
     \subcaption{$\varrho=\Mish$.}
    \end{subfigure}\\  \vspace{5pt}
        \begin{subfigure}[c]{0.32455\textwidth}
    \centering            \includegraphics[width=0.998055\textwidth]{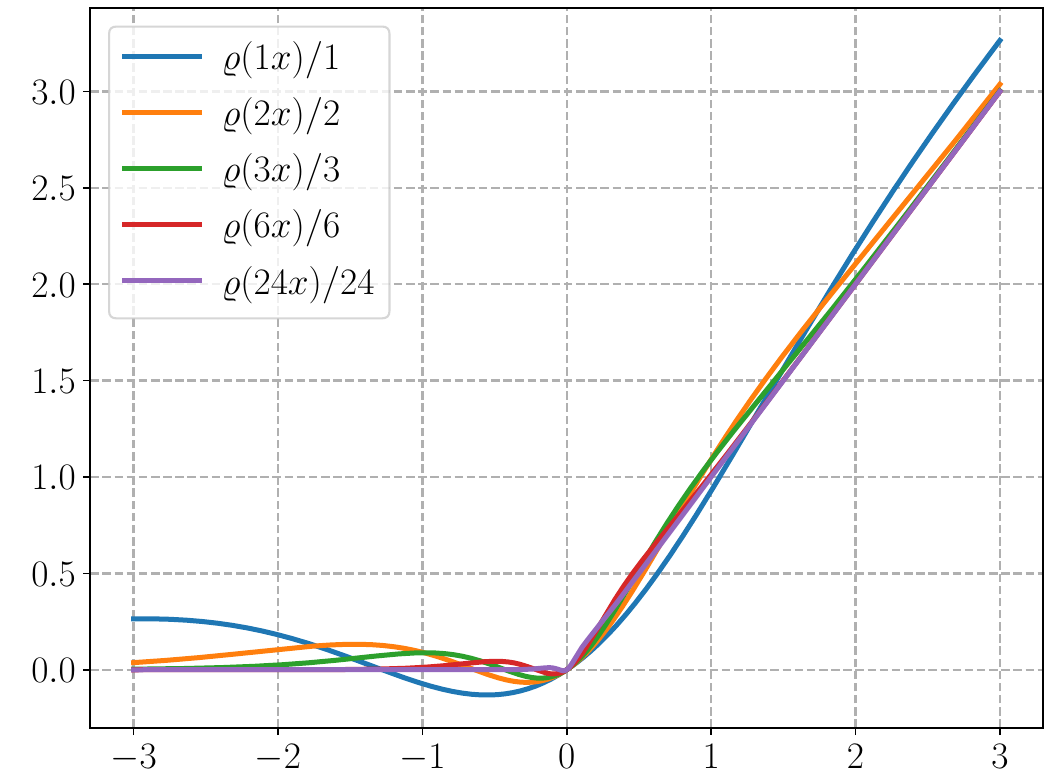}
    \subcaption{$\varrho(x)\coloneqq x\cdot\dSiLU(x)$.}
    \end{subfigure}\hfill
             \begin{subfigure}[c]{0.32455\textwidth}
    \centering            \includegraphics[width=0.998055\textwidth]{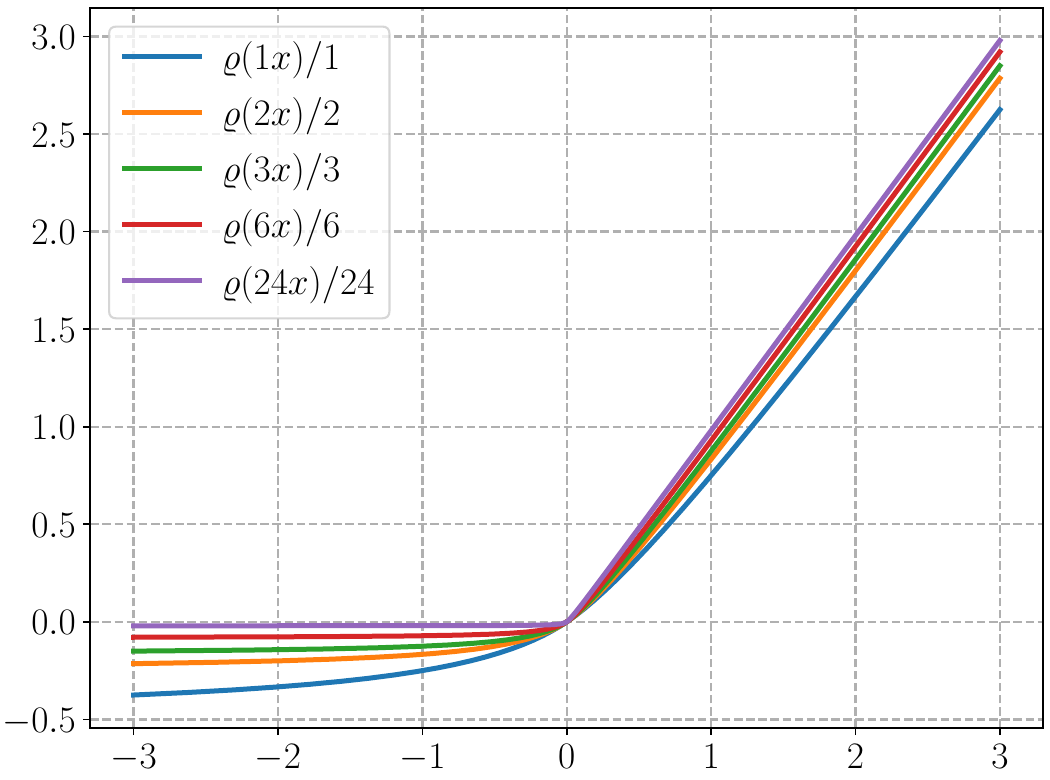}
    \subcaption{$\varrho(x)\coloneqq x\cdot\big(\tfrac{\Softsign(x)}{2}+\tfrac{1}{2}\big)$.}
    \end{subfigure}\hfill
    \begin{subfigure}[c]{0.32455\textwidth}
    \centering            \includegraphics[width=0.998055\textwidth]{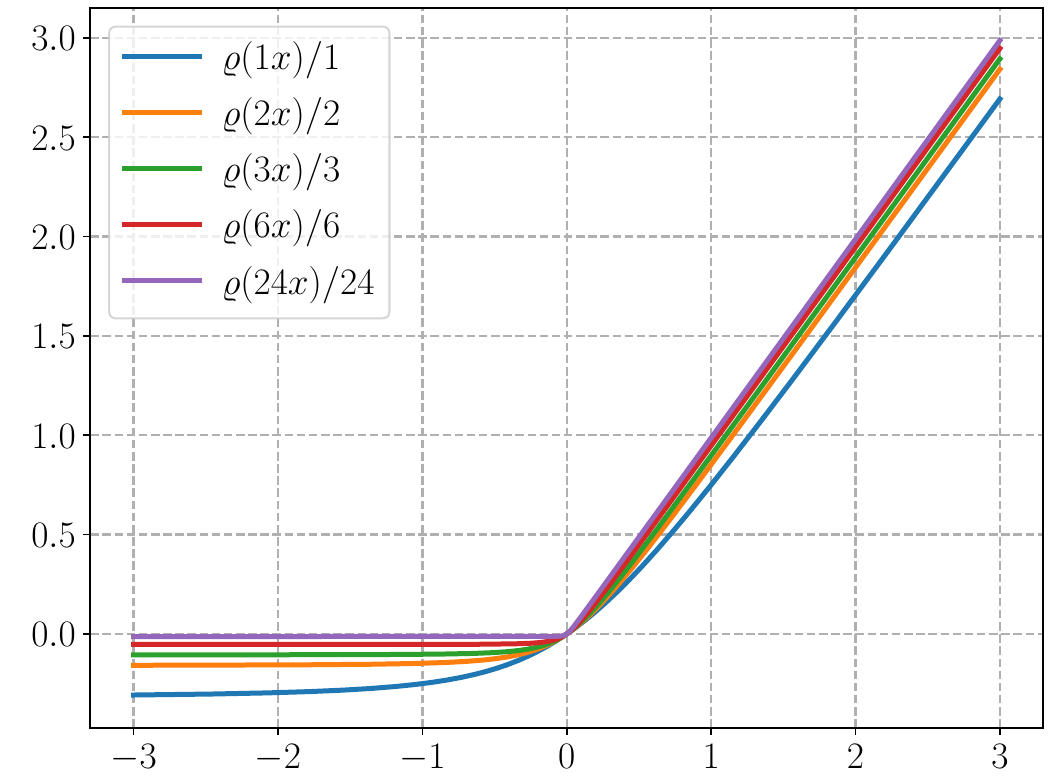}
     \subcaption{$\varrho(x)\coloneqq x\cdot\big(\tfrac{\Arctan(x)}{\pi}+\tfrac{1}{2}\big)$.}
    \end{subfigure}
    \caption{Illustrations of how a single active neuron activated by $\varrho\in \tildescrA_2$ is adequate for approximating  the \ReLU\ activation function.}
    	\label{fig:act:in:tildescrA:2:approx:ReLU}
\end{figure}

\subsection{Related Work}
\label{sec:related:work}

Extensive research has been conducted to explore the approximation capabilities of neural networks, and a multitude of publications have focused on the construction of various neural network architectures to approximate a wide range of target functions. Noteworthy examples of such studies include 
\cite{Cybenko1989ApproximationBS,HORNIK1989359,barron1993,yarotsky18a,yarotsky2017,doi:10.1137/18M118709X,ZHOU2019,10.3389/fams.2018.00014,2019arXiv190501208G,2019arXiv190207896G,suzuki2018adaptivity,Ryumei,Wenjing,Bao2019ApproximationAO,2019arXiv191210382L,MO,shijun:NonlineArpprox,shijun:Characterized:by:Numer:Neurons,shijun:smooth:functions,shijun:arbitrary:error:with:fixed:size,shijun:thesis,shijun:intrinsic:parameters}.
During the early stages of this field, the primary focus was on investigating the universal approximation capabilities of single-hidden-layer networks. The universal approximation theorem \cite{Cybenko1989ApproximationBS,HORNIK1991251,HORNIK1989359} demonstrated that when a neural network is sufficiently large, it can approximate a particular type of target function with arbitrary precision, without explicitly quantifying the approximation error in relation to the size of the network.
Subsequent research, exemplified by \cite{barron1993,barron2018approximation}, delved into analyzing the approximation error of single-hidden-layer networks with a width of $n$. These studies demonstrated an asymptotic approximation error of $\calO(n^{-1/2})$ in the $L^2$-norm for target functions possessing certain smoothness properties.

In recent years, the most widely used and effective activation function is \ReLU. 
The adoption of \ReLU\  has marked a significant improvement of results on challenging  datasets in supervised learning \cite{NIPS2012_c399862d}.
Optimizing deep neural networks activated by \ReLU\  is comparatively simpler than networks utilizing other activation functions such as \Sigmoid\ or \Tanh, since gradients can propagate when the input to \ReLU\  is positive. 
The effectiveness and simplicity of \ReLU\ have positioned it as the preferred default activation function in the deep learning community.
Extensive research has investigated the expressive capabilities of deep neural networks, with a majority of studies focusing on the \ReLU\ activation function 
\cite{yarotsky18a,yarotsky2017,shijun:NonlineArpprox,shijun:Characterized:by:Numer:Neurons,shijun:smooth:functions,shijun:RCNet,shijun:net:arc:beyond:width:depth,shijun:thesis}.
In recent advancements, several alternative activation functions have emerged as potential replacements for \ReLU. Section~\ref{sec:intro} provides numerous examples of these alternatives. Although these newly proposed activation functions have shown promising empirical results, their theoretical foundations are still being developed. The objective of this paper is to explore the expressive capabilities of deep neural networks using these activation functions. By establishing connections between these functions and \ReLU, we aim to expand most existing approximation results for \ReLU\ networks to encompass a wide range of  activation functions.
\subsection{Definitions and Illustrations of Common Activation Functions}
\label{sec:summary:activation:functions}

We will provide definitions and visual representations
of activation functions mentioned in Section~\ref{sec:intro}, including
\ReLU,  \LeakyReLU, $\ReLU^2$,  \ELU, \CELU, \SELU, \Softplus, \GELU, \SiLU, \Swish, \Mish, \Sigmoid, \Tanh, \Arctan, \Softsign, \dSiLU, and \SRS.
The definitions of these 17 activation functions are presented below.
The first 6 activation functions are given by
\begin{equation*}
\ReLU(x)=\max\{0, x\},\qquad
	\LeakyReLU(x)=\begin{cases}
		x & \tn{if}\  x\ge 0,\\
		\alpha   x &\tn{if}\   x<0
	\end{cases}
 \quad \tn{with $\alpha\in\R,$}
\end{equation*}
\begin{equation*}
	\ReLU^2(x)=(\max\{0, x\})^2,\qquad	\ELU(x)=\begin{cases}
		x & \tn{if}\ x\ge 0,\\
		\alpha(e^x-1) & \tn{if}\ x<0
	\end{cases}
\quad \tn{with $\alpha\in \R$,}
\end{equation*}
\begin{equation*}
    \CELU(x)=\begin{cases}
		x & \tn{if}\ x\ge 0,\\
		\alpha(e^{x/\alpha}-1) & \tn{if}\ x<0
	\end{cases}
\quad \tn{with $\alpha\in (0,\infty)$,}
\end{equation*}
and
\begin{equation*}
    \SELU(x)=\lambda\begin{cases}
x &\tn{if}\   x\ge 0,\\
\alpha(e^x-1)&\tn{if}\   x<0
\end{cases}\quad\tn{with $\lambda\in (0,\infty)$ and $\alpha\in\R$,}
\end{equation*}
where $e$ is the base of the natural logarithm.
For the last 6 activation functions,
\Arctan\  is the inverse tangent function and the other 5 activation functions are given by
\begin{equation*}
\Sigmoid(x)=\frac{1}{1+e^{-x}},
\qquad
	\Tanh(x)=\frac{e^x-e^{-x}}{e^x+e^{-x}},\qquad
 \Softsign(x)=\frac{x}{1+|x|},
\end{equation*}
\begin{equation*}
\dSiLU(x)=\frac{1+e^{-x}+x e^{-x}}{(1+e^{-x})^2},\qquad \tn{and}\qquad
\SRS(x)=\frac{x}{x/\alpha+e^{-x/\beta}}
\quad\tn{with $\alpha,\beta\in (0,\infty)$.}
\end{equation*}
The remaining 5 activation functions are given by
\begin{equation*}
\Softplus(x)=\ln(1+e^x),
\qquad
	\SiLU(x)=\frac{x}{1+e^{-x}},
\end{equation*}
\begin{equation*}
 \Swish(x)=\frac{x}{1+e^{-\beta  x}}\quad \tn{with $\beta\in (0,\infty)$},\qquad \Mish(x)=x\cdot\Tanh\big(\Softplus(x)\big),
\end{equation*}
 and
\begin{equation*}
	\GELU(x)=x\int_{-\infty}^{x} \tfrac{1}{\sigma\sqrt{2\pi}}e^{-\frac{1}{2}(\frac{t-\mu}{\sigma})^2}d t\quad\tn{with $\mu\in\R$ and $\sigma\in (0,\infty)$.}
\end{equation*}
Refer to Figure~\ref{fig:scrA:egs:in:intro} for visual representations of all these activation functions.

\begin{figure}[htbp!]
    \centering	
    \begin{subfigure}[c]{0.32455\textwidth}
    \centering            \includegraphics[width=0.998055\textwidth]{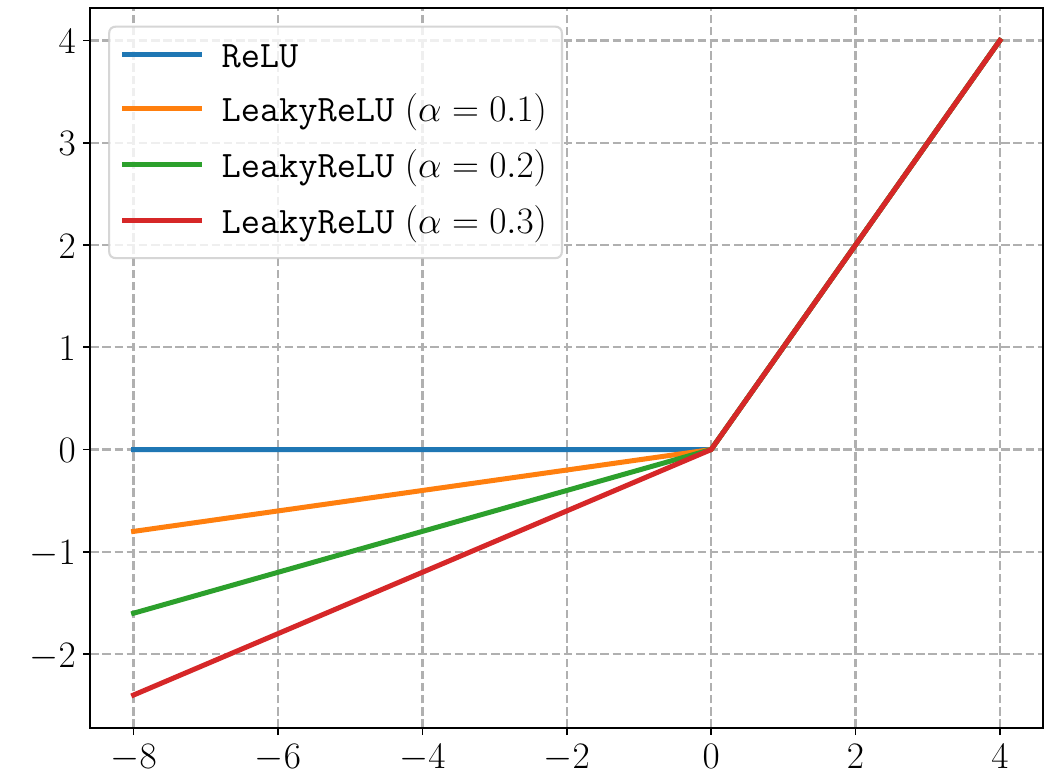}
    \end{subfigure}\hfill
             \begin{subfigure}[c]{0.32455\textwidth}
    \centering            \includegraphics[width=0.998055\textwidth]{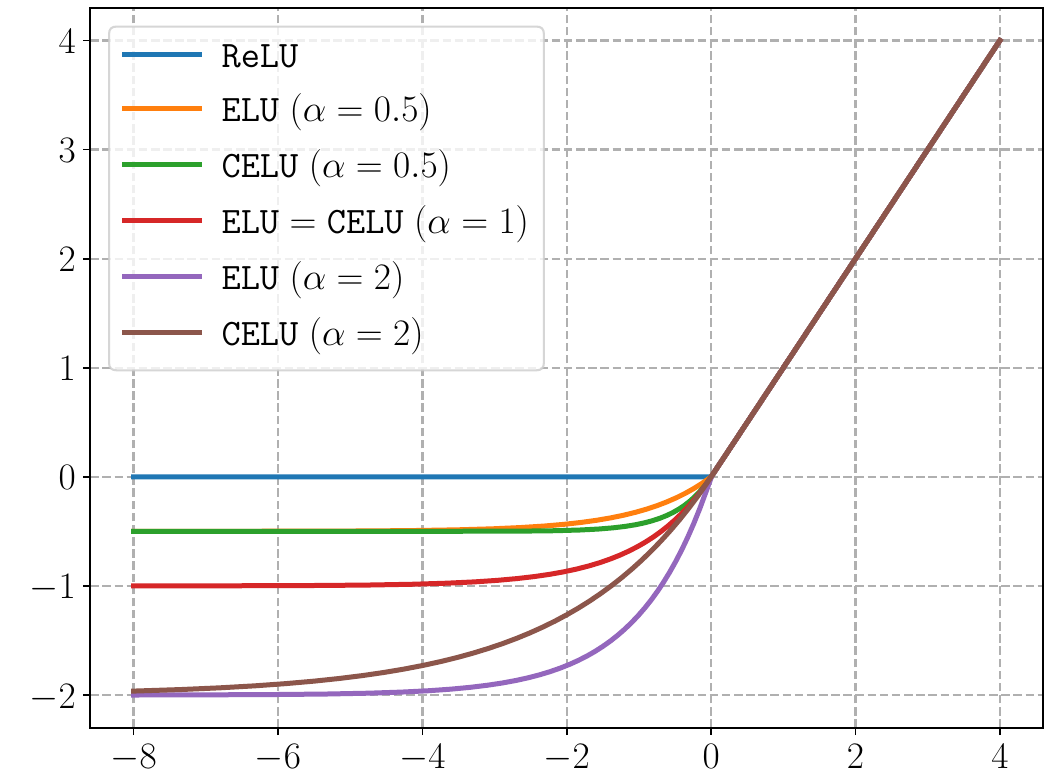}
    \end{subfigure}\hfill
    \begin{subfigure}[c]{0.32455\textwidth}
    \centering            \includegraphics[width=0.998055\textwidth]{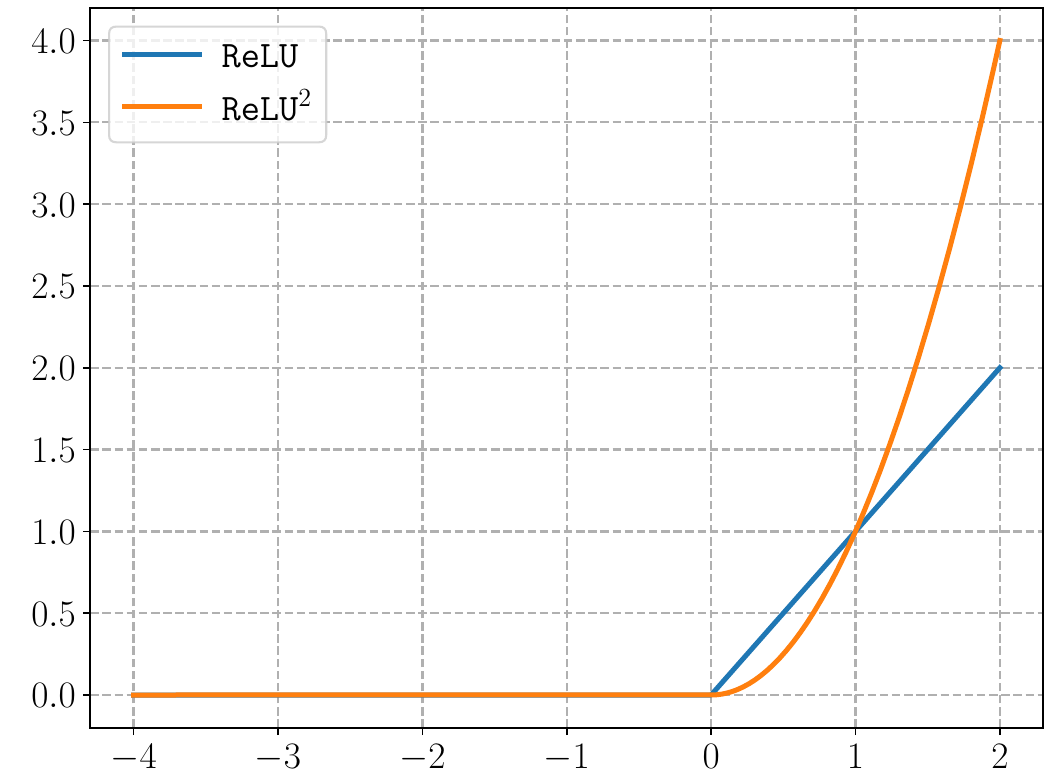}
    \end{subfigure}\\   \vspace{11.8pt}
    \begin{subfigure}[c]{0.32455\textwidth}
    \centering            \includegraphics[width=0.998055\textwidth]{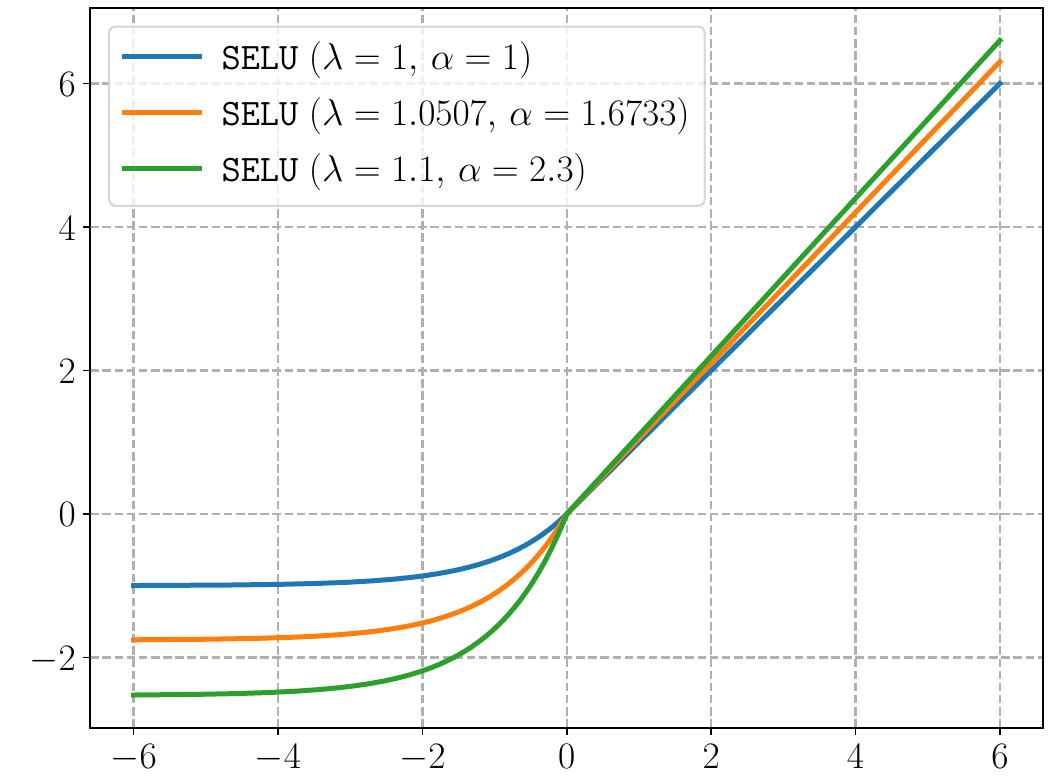}
    \end{subfigure}\hfill
    \begin{subfigure}[c]{0.32455\textwidth}
    \centering            \includegraphics[width=0.998055\textwidth]{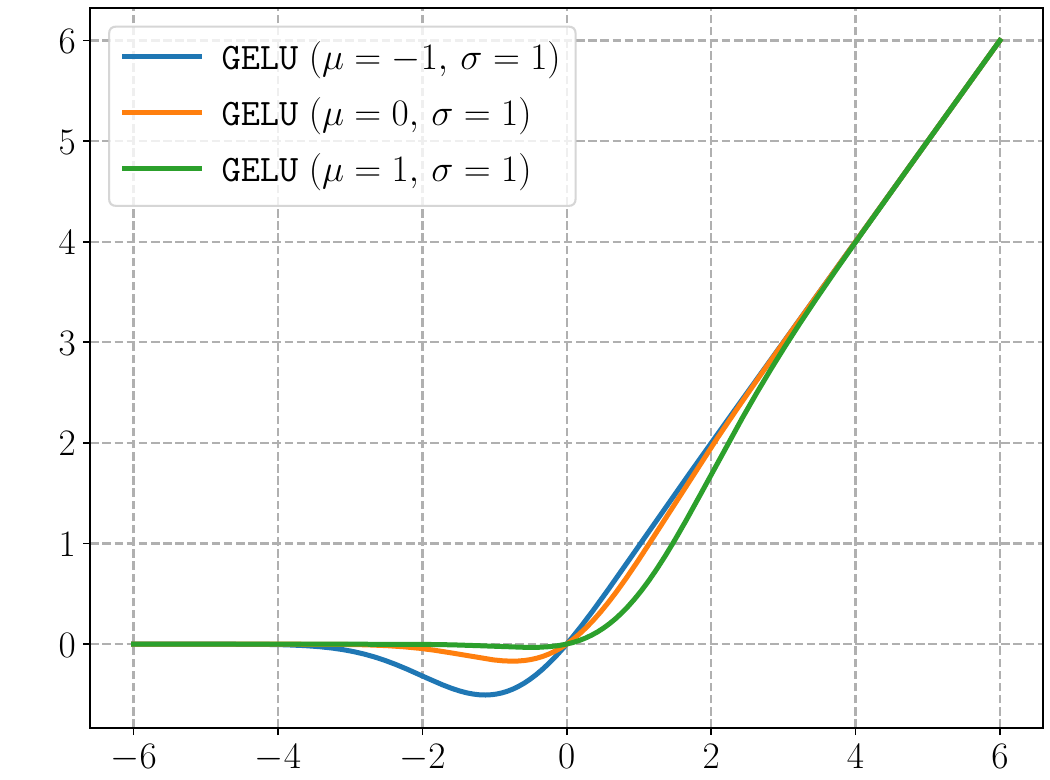}
    \end{subfigure}\hfill
    \begin{subfigure}[c]{0.32455\textwidth}
    \centering            \includegraphics[width=0.998055\textwidth]{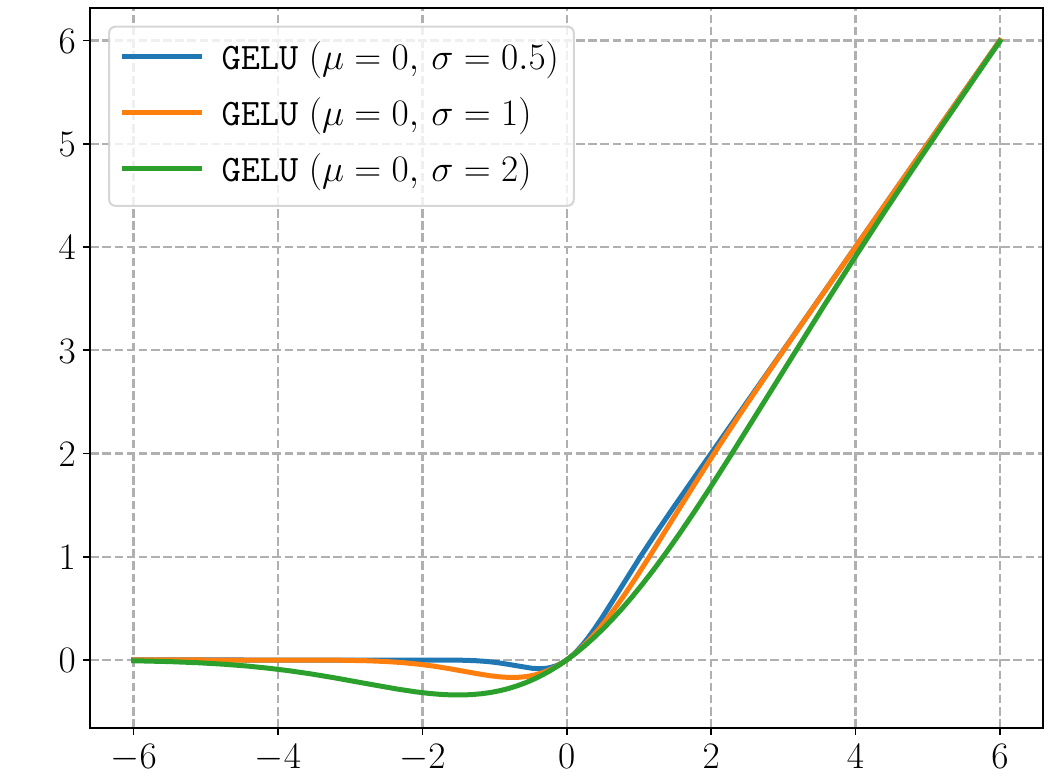}
    \end{subfigure}
\\   \vspace{11.8pt}
    \begin{subfigure}[c]{0.32455\textwidth}
    \centering            \includegraphics[width=0.998055\textwidth]{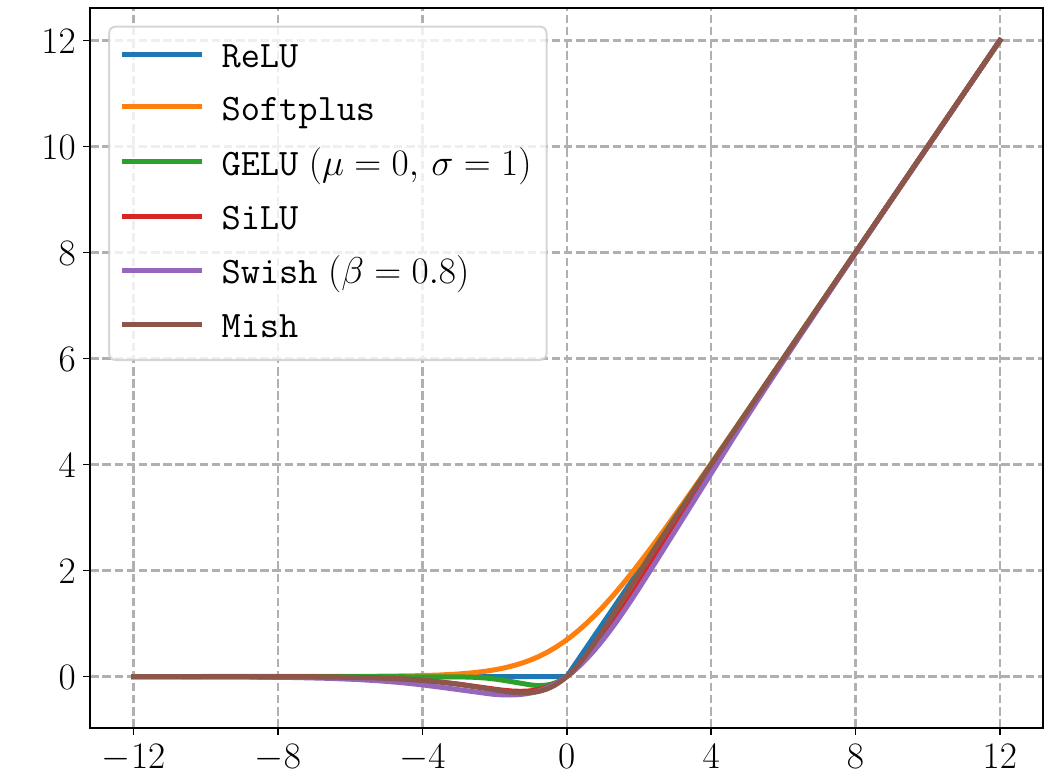}
    \end{subfigure}\hfill
        \begin{subfigure}[c]{0.32455\textwidth}
    \centering            \includegraphics[width=0.998055\textwidth]{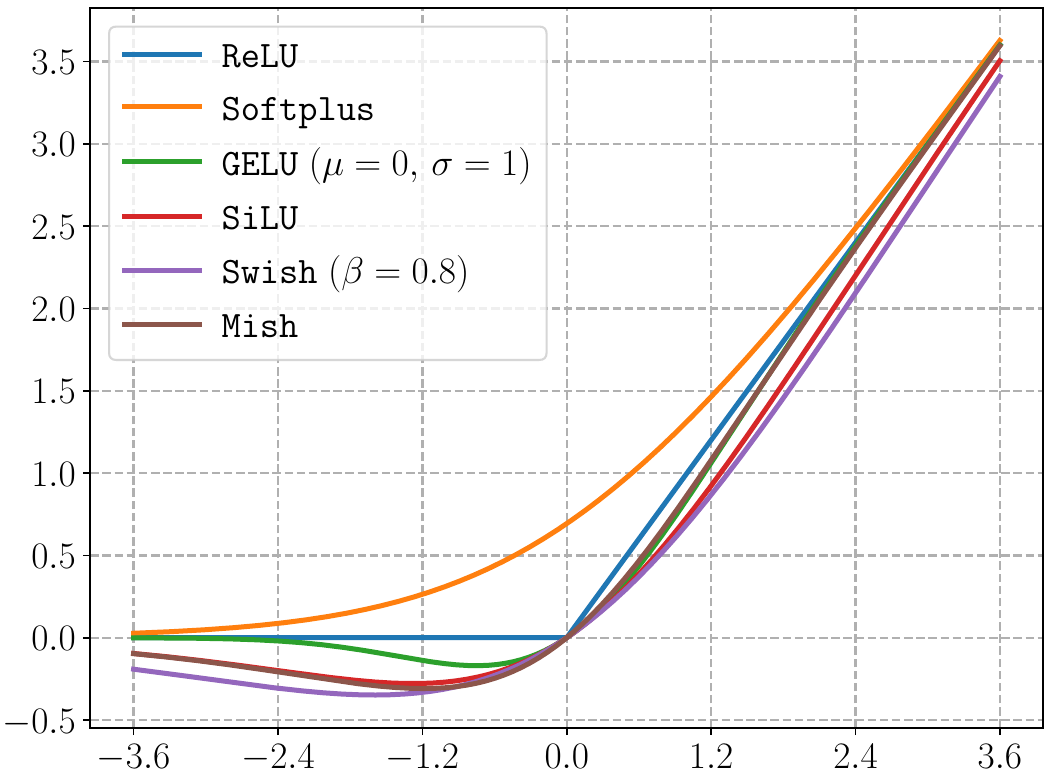}
    \end{subfigure}\hfill
       \begin{subfigure}[c]{0.32455\textwidth}
    \centering            \includegraphics[width=0.998055\textwidth]{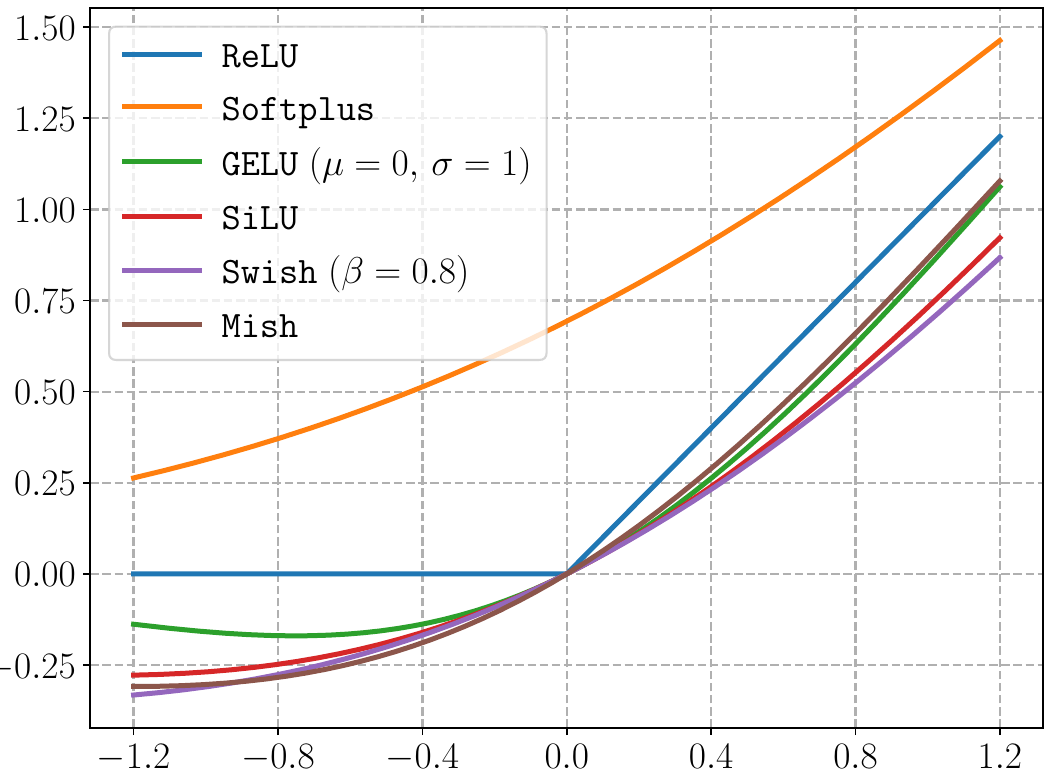}
    \end{subfigure}
    \\   \vspace{11.8pt}
        \begin{subfigure}[c]{0.32455\textwidth}
    \centering            \includegraphics[width=0.998055\textwidth]{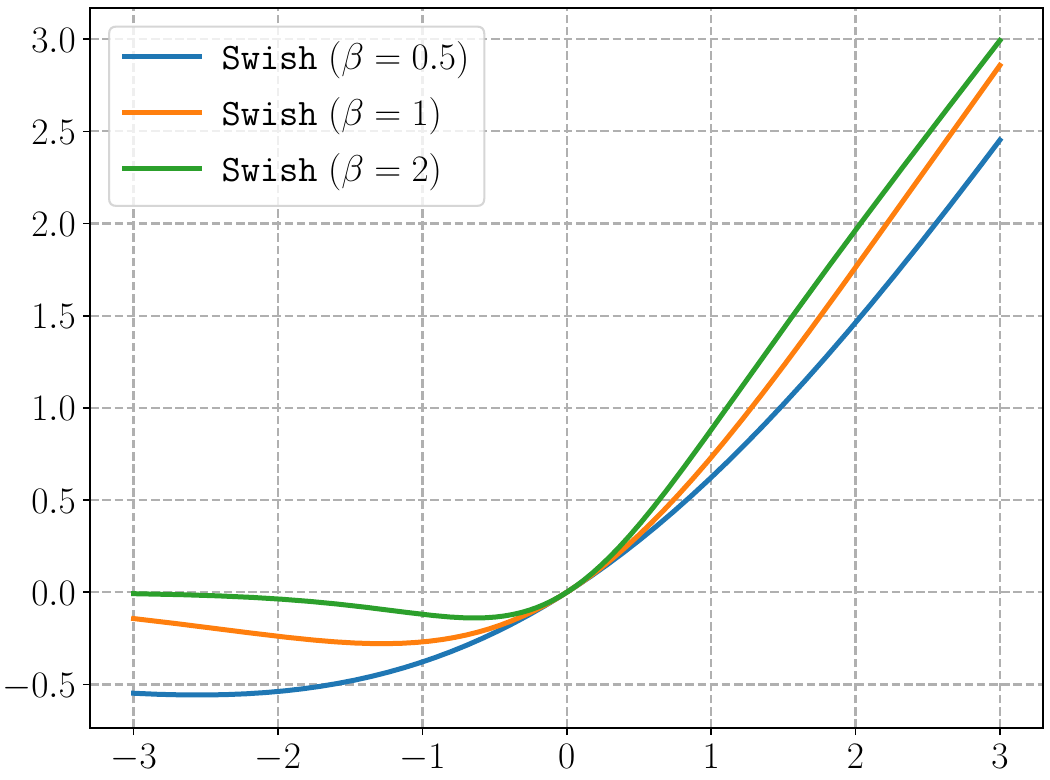}
    \end{subfigure}\hfill
    \begin{subfigure}[c]{0.32455\textwidth}
    \centering            \includegraphics[width=0.998055\textwidth]{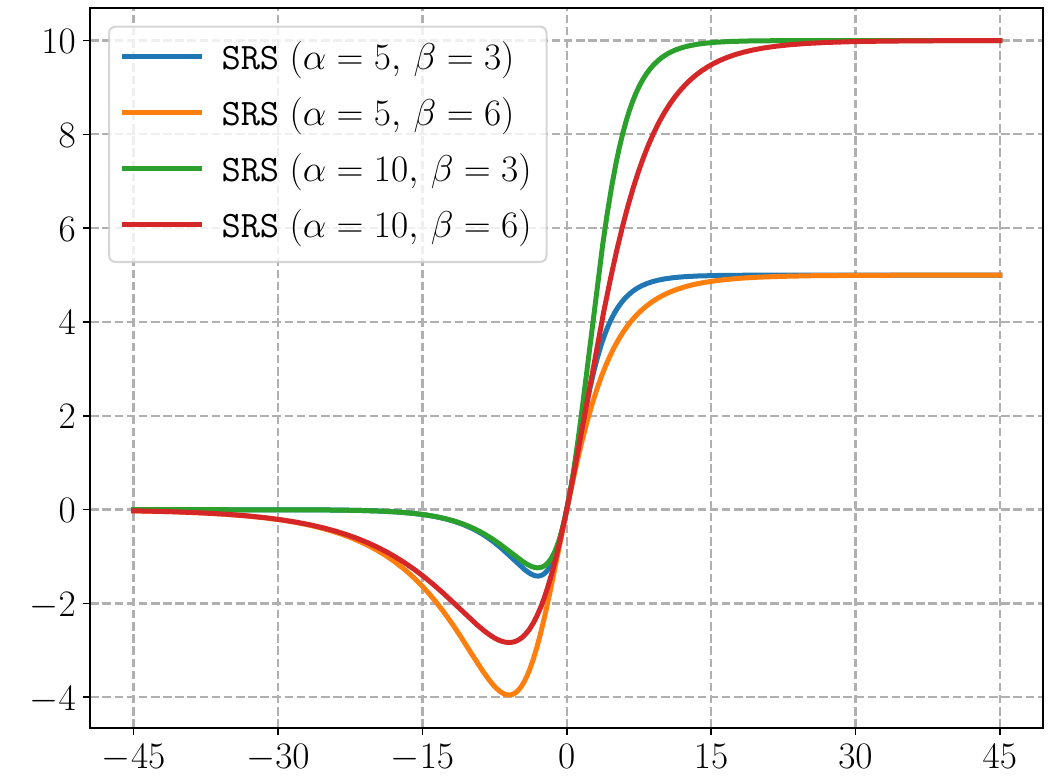}
    \end{subfigure}\hfill
    \begin{subfigure}[c]{0.32455\textwidth}
    \centering            \includegraphics[width=0.998055\textwidth]{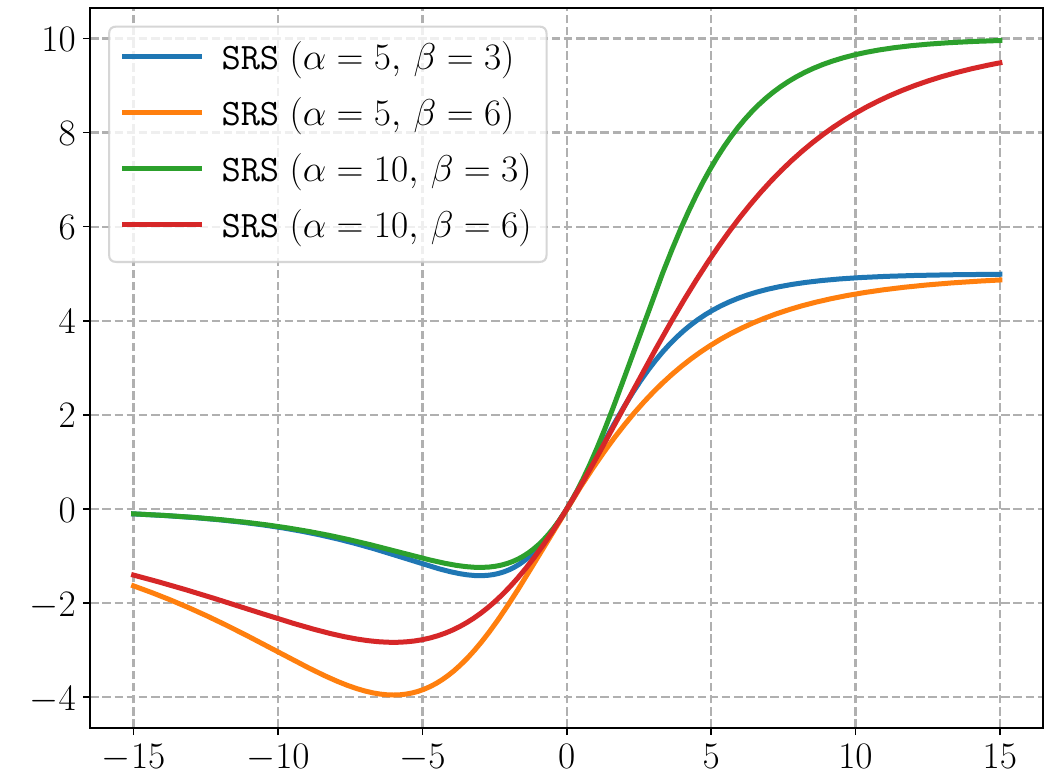}
    \end{subfigure}\\   \vspace{11.8pt}
    \begin{subfigure}[c]{0.32455\textwidth}
    \centering            \includegraphics[width=0.998055\textwidth]{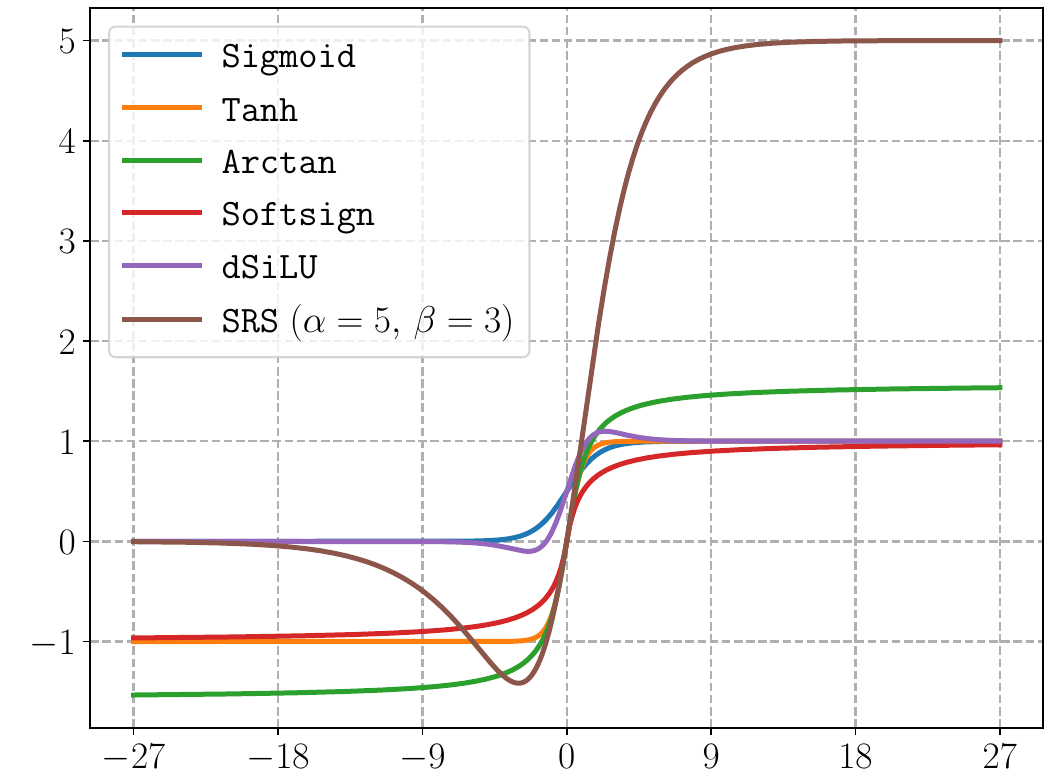}
    \end{subfigure}\hfill
        \begin{subfigure}[c]{0.32455\textwidth}
    \centering     
    \includegraphics[width=0.998055\textwidth]{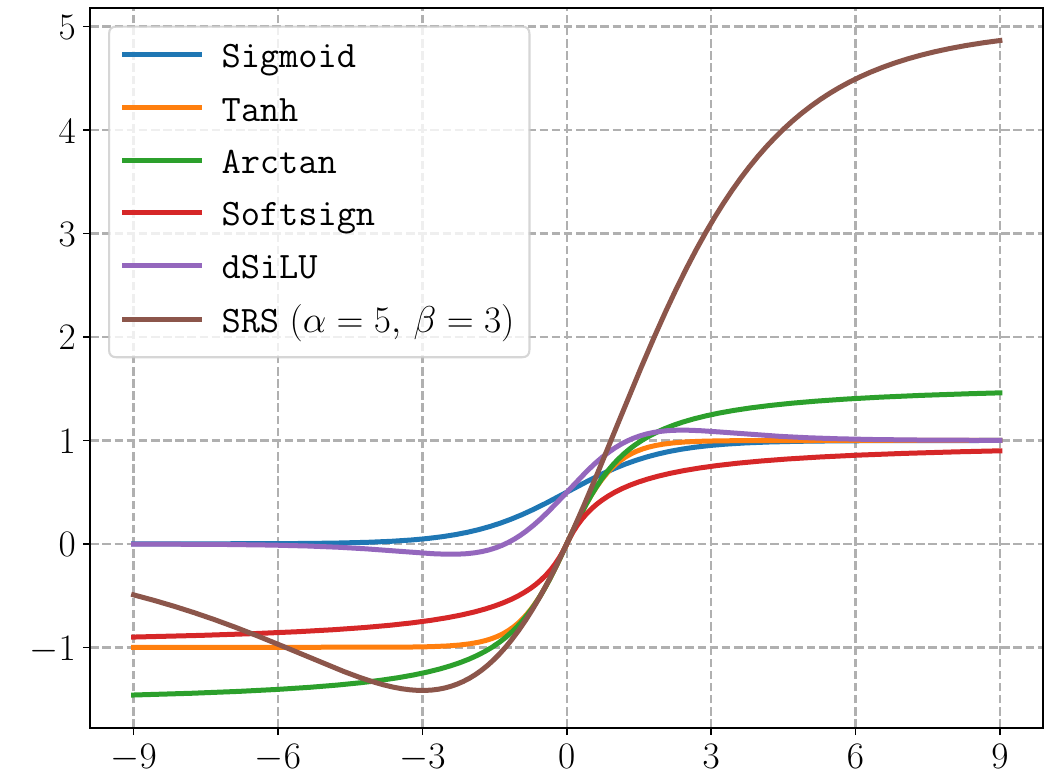}
    \end{subfigure}\hfill
            \begin{subfigure}[c]{0.32455\textwidth}
    \centering     
    \includegraphics[width=0.998055\textwidth]{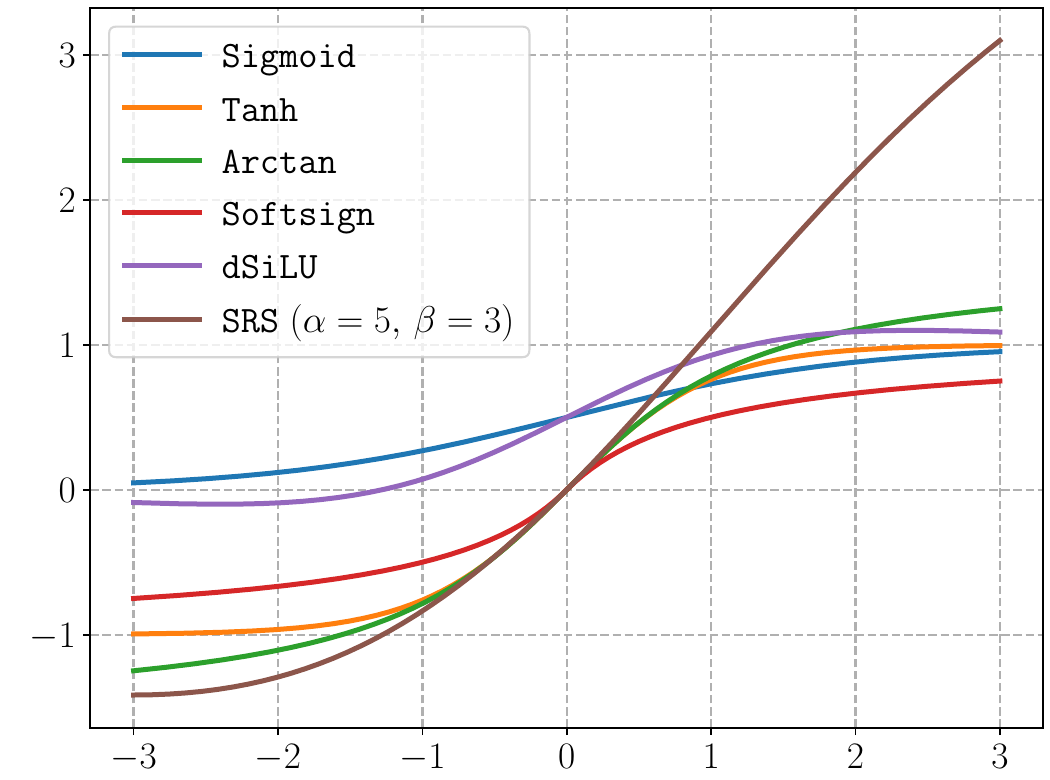}
    \end{subfigure}\\ \vspace{6pt}
    \caption{Illustrations of \ReLU,  \LeakyReLU, $\ReLU^2$,  \ELU, \CELU, \SELU, \Softplus, \GELU, \SiLU, \Swish, \Mish, \Sigmoid, \Tanh, \Arctan, \Softsign, \dSiLU, and \SRS.}
    	\label{fig:scrA:egs:in:intro}
\end{figure}

\section{Proofs of Theorems in Sections~\ref{sec:intro} and \ref{sec:further:discussion}}
\label{sec:proof:thms}

In this section, we will prove the theorems in Sections~\ref{sec:intro} and \ref{sec:further:discussion}, i.e., Theorems~\ref{thm:main}, \ref{thm:main:kth:derivative}, \ref{thm:main:scrA:1k}, \ref{thm:main:scrA:2}, and \ref{thm:main:scrA:2:new}.
To enhance clarity, Section~\ref{sec:notation} offers a concise overview of the notations employed throughout this paper.
Next in Section~\ref{sec:props:proof:thms}, we present the ideas for proving Theorems~\ref{thm:main}, \ref{thm:main:kth:derivative}, \ref{thm:main:scrA:1k}, \ref{thm:main:scrA:2}, and \ref{thm:main:scrA:2:new}.
Moreover, to simplify the proofs, we establish several propositions, which will be proved in later sections.
By assuming the validity of these propositions,
we provide the proof of Theorem~\ref{thm:main}
in Section~\ref{sec:proof:thm:main}
and give the proofs of 
Theorems~\ref{thm:main:kth:derivative}, \ref{thm:main:scrA:1k}, \ref{thm:main:scrA:2}, and \ref{thm:main:scrA:2:new} in Section~\ref{sec:proof:thms:main:others}.



\subsection{Notations}
\label{sec:notation}

The following is an overview of the basic notations used in this paper.
\begin{itemize}
	\item The set difference of two sets $A$ and $B$ is denoted as $A\backslash B\coloneqq\{x:x\in A,\ x\notin B\}$. 
	
	\item 
The symbols $\N$, $\Z$, $\Q$, and $\R$ are used to denote the sets of natural numbers (including $0$), integers, rational numbers, and real numbers, respectively. The set of positive natural numbers is denoted as $\N^+=\N\backslash\{0\}=\{1,2,3,\cdots\}$.

 \item The base of the natural logarithm is denoted as $e$, i.e., $e=\lim_{n\to\infty}(1+\tfrac{1}{n})^n\approx 2.71828$.
\item The indicator (or characteristic) function of a set $A$, denoted by $\one_{A}$, is a function that takes the value $1$ for elements of $A$ and $0$ for elements not in $A$.
	
	
	\item 
 The floor and ceiling functions of a real number $x$ can be represented as
	$\lfloor x\rfloor=\max \{n: n\le x,\ n\in \Z\}$ and $\lceil x\rceil=\min \{n: n\ge x,\ n\in \Z\}$.
	

	
	\item Let $\tbinom{n}{k}$ denote the coefficient of the $x^k$ term in the polynomial expansion of the binomial power $(1+x)^n$ for any $n,k\in\N$ with $n\ge k$, i.e., $\tbinom{n}{k}=\tfrac{n!}{k!(n-k)!}$.

	\item Vectors are denoted by bold lowercase letters, such as $\bma=(a_1,\cdots,a_d)\in\R^d$. On the other hand, matrices are represented by bold uppercase letters. For example, $\bm{A}\in\mathbb{R}^{m\times n}$ refers to a real matrix of size $m\times n$, and $\bm{A}^\ts$ denotes the transpose of matrix $\bm{A}$.
	
	
	
	\item Given any $p\in [1,\infty]$, the $p$-norm (also known as $\ell^p$-norm) of a vector $\bmx=(x_1,\cdots,x_d)\in\R^d$ is defined via
	\begin{equation*}
		\|\bmx\|_p=\|\bmx\|_{\ell^p}\coloneqq \big(|x_1|^p+\cdots+|x_d|^p\big)^{1/p}\quad \tn{if $p\in [1,\infty)$}
	\end{equation*}
	and
	\begin{equation*}		\|\bmx\|_{\infty}=\|\bmx\|_{\ell^\infty}\coloneqq \max\big\{|x_i|: i=1,2,\cdots,d\big\}.
	\end{equation*}
	
		\item Let ``$\rightrightarrows$" denote the uniform convergence. For example, if $\bmf:\R^d\to\R^n$ is a vector-valued function and $\bmf_\delta(\bmx)\rightrightarrows \bmf(\bmx)$ as $\delta\to 0^+$ for any $\bmx\in \Omega\subseteq \R^d$, then
	for any $\eps>0$, there exists $\delta_\eps\in (0,1)$ such that
 \begin{equation*}		
 \|\bmf_\delta-\bmf\|_{\sup(\Omega)}< \eps\quad \tn{for any $\delta\in (0,\delta_\eps)$.}
	\end{equation*}

 \item A network is labeled as ``a network of width $N$ and depth $L$'' when it satisfies the following  two conditions.
\begin{itemize}
\item The count of neurons in each hidden layer of the network does not exceed $N$.
\item The total number of hidden layers in the network is at most $L$.
\end{itemize}
	
	
	\item 
	Suppose $\bmphi:\R^d\to\R^n$ is a vector-valued function realized by a $\varrho$-activated network. Then $\bmphi$ can be expressed as
	\begin{equation*}
		\begin{aligned}
			\bm{x}=\widetilde{\bm{h}}_0 
			\myto{2.0242}^{\bmW_0,\ \bm{b}_0}_{\calbmL_0} \bm{h}_1
			\myto{1.13015}^{\varrho} \widetilde{\bm{h}}_1 \quad \cdots\quad \myto{2.97}^{\bmW_{L-1},\ \bm{b}_{L-1}}_{\calbmL_{L-1}} \bm{h}_L
			\myto{1.13015}^{\varrho} \widetilde{\bm{h}}_L
			\myto{2.342}^{\bmW_{L},\ \bm{b}_{L}}_{\calbmL_L} \bm{h}_{L+1}=\bmphi(\bm{x}),
		\end{aligned}
	\end{equation*}
	where $N_0=d$, $N_1,N_2,\cdots,N_L\in\N^+$,  
$N_{L+1}=n$, and $\bmcalL_i$ is an affine linear map given by $\calbmL_i:\bmx\mapsto\bmW_i \bmx +\bmb_i$ with $\bmW_i\in \R^{N_{i+1}\times N_{i}}$ and $\bm{b}_i\in \R^{N_{i+1}}$ being the weight matrix and the bias vector, respectively, for $i=0,1,\cdots,L$. Here
	\[\bm{h}_{i+1} =\calbmL_i(\tildebmh_{i})=\bmW_i\cdot \tildebmh_{i} + \bm{b}_i \quad \tn{for $i=0,1,\cdots,L$}\]  
	and
	\[	\widetilde{\bm{h}}_i=\varrho(\bm{h}_i)\quad \tn{for $i=1,2,\cdots,L$,}
	\]
 where $\varrho$ is the activation function that can be applied elementwise to a vector input.
 Clearly, $\bmphi\in \nn{\varrho}{N}{L}{d}{n}$, where $N=\max\{N_1,N_2,\cdots,N_L\}$.
Furthermore, $\bmphi$ can be expressed as a composition of functions
	\begin{equation*}
		\bmphi =\calbmL_L\circ\varrho\circ
				\calbmL_{L-1}\circ 
		\ \cdots \  \circ 
				\varrho\circ
		\calbmL_1\circ\varrho\circ\calbmL_0.
	\end{equation*}
	Refer to Figure~\ref{fig:varrhoNetEg} for an illustration.
	
	\begin{figure}[htbp!]     		
		\centering            \includegraphics[width=0.75\textwidth]{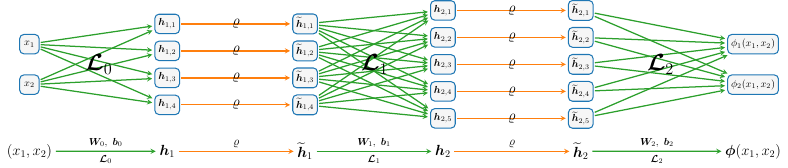}
		\caption{An example of a $\varrho$-activated network of width $5$ and depth $2$. The network realizes a vector-valued function $\bmphi=(\phi_1,\phi_2)$.  Here, $\bmh_{i,j}$ (or $\tildebmh_{i,j}$) represents the $j$-th entry of $\bmh_{i}$ (or $\tildebmh_{i}$) for $(i,j)\in 
  \big\{(1,j):j=1,2,3,4\big\}\cup \big\{(2,j):j=1,2,3,4,5\big\}$. }
		\label{fig:varrhoNetEg}	
   \vspace*{-2.9pt}
	\end{figure}

\end{itemize}


\subsection{Propositions for Proving Theorems in Sections~\ref{sec:intro} and \ref{sec:further:discussion}}
\label{sec:props:proof:thms}

We now present the key ideas for proving theorems introduced in Sections~\ref{sec:intro} and \ref{sec:further:discussion}, i.e., Theorems~\ref{thm:main}, \ref{thm:main:kth:derivative}, \ref{thm:main:scrA:1k}, \ref{thm:main:scrA:2}, and \ref{thm:main:scrA:2:new}.
These five theorems collectively convey a narrative wherein a $\tildevarrho$-activated network can be accurately approximated by a $\varrho$-activated network, provided certain assumptions are met regarding $\varrho$ and $\tildevarrho$. Consequently, it becomes imperative to establish an auxiliary theorem that allows for the substitution of the network's activation function(s) at the cost of a sufficiently small error.

\begin{proposition}\label{prop:activation:replace}
	Given two functions $\varrho,\tildevarrho:\R\to\R$ with $\tildevarrho\in C(\R)$, suppose for any $M>0$, there exists
	$\tildevarrho_\eta\in \nnOneD[\big]{\varrho}{\tildeN}{\tildeL}{\R}{\R}$ for each $\eta\in (0,1)$
	such that
	\begin{equation*}
		\tildevarrho_\eta(x)\rightrightarrows \tildevarrho(x)\quad \tn{as}\  \eta\to 0^+\quad \tn{for any $x\in [-M,M]$.}
	\end{equation*}
	Assuming $\bmphi_{\tildevarrho}\in \nn[\big]{\tildevarrho}{N}{L}{d}{n}$, for any $\eps>0$ and $A>0$, there exists
	$\bmphi_{\varrho}\in \nn[\big]{\varrho}{\tildeN\cdot  N}{\  \tildeL\cdot  L}{d}{n}$
	such that
	\begin{equation*}
		\big\|\bmphi_\varrho-\bmphi_\tildevarrho\big\|_{\sup([-A,A]^d)}<\eps.
	\end{equation*}
\end{proposition}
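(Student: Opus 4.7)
The plan is to replace each $\tildevarrho$ activation in $\bmphi_\tildevarrho$ by the approximating sub-network $\tildevarrho_\eta$ and to track how the resulting per-neuron error propagates through the layers. First I would expand $\bmphi_\tildevarrho$ into its layered form
\begin{equation*}
\bmphi_\tildevarrho=\calbmL_L\circ\tildevarrho\circ\calbmL_{L-1}\circ\cdots\circ\tildevarrho\circ\calbmL_0,
\end{equation*}
where each affine map $\calbmL_i$ has output dimension at most $N$, and define the pre-activation vectors $\bmh_i(\bmx)$ for $\bmx\in[-A,A]^d$. Since $[-A,A]^d$ is compact and each $\calbmL_i$ and $\tildevarrho$ is continuous, there is a finite uniform bound $M_0$ with $\|\bmh_i(\bmx)\|_\infty\le M_0$ for all $i$ and all $\bmx\in[-A,A]^d$. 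Pick $M\coloneqq M_0+1$ and apply the hypothesis to obtain $\tildevarrho_\eta\in\nnOneD{\varrho}{\tildeN}{\tildeL}{\R}{\R}$ with $\tildevarrho_\eta\rightrightarrows\tildevarrho$ on $[-M,M]$ as $\eta\to 0^+$.

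Next I would form the candidate $\bmphi_\varrho^{(\eta)}$ by replacing, in parallel at each hidden layer, every univariate activation $\tildevarrho$ by the scalar sub-network $\tildevarrho_\eta$. Since the $N_i\le N$ activations in layer $i$ act independently, this parallelization produces a $\varrho$-activated network whose width is at most $\tildeN\cdot N$; and because each replacement contributes $\tildeL$ hidden layers (whose outermost affine maps merge into the neighboring $\calbmL_i$), the total depth is at most $\tildeL\cdot L$. Thus $\bmphi_\varrho^{(\eta)}\in\nn{\varrho}{\tildeN\cdot N}{\,\tildeL\cdot L}{d}{n}$, which is the required architecture.

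The heart of the proof is then a layer-by-layer induction showing $\bmphi_\varrho^{(\eta)}\rightrightarrows\bmphi_\tildevarrho$ on $[-A,A]^d$ as $\eta\to 0^+$. Denote by $\tildebmh_i^{(\eta)}$ and $\tildebmh_i$ the post-activation vectors of the approximated and original networks at layer $i$. Writing
\begin{equation*}
\tildebmh_i^{(\eta)}-\tildebmh_i=\bigl(\tildevarrho_\eta(\bmh_i^{(\eta)})-\tildevarrho(\bmh_i^{(\eta)})\bigr)+\bigl(\tildevarrho(\bmh_i^{(\eta)})-\tildevarrho(\bmh_i)\bigr),
\end{equation*}
the first term is controlled by the uniform convergence $\tildevarrho_\eta\rightrightarrows\tildevarrho$ on $[-M,M]$, and the second by the uniform continuity of $\tildevarrho$ on $[-M,M]$ together with the induction hypothesis bounding $\|\bmh_i^{(\eta)}-\bmh_i\|_\infty$; propagation through the affine map $\calbmL_i$ multiplies errors by a fixed constant depending only on $\|\bmW_i\|$. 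For $\eta$ small enough we also keep $\bmh_i^{(\eta)}\in[-M,M]^{N_i}$ so that the approximation hypothesis applies, which is why the cushion $M=M_0+1$ was chosen. Iterating this bound $L$ times and choosing $\eta$ sufficiently small makes the final error less than $\eps$, yielding $\bmphi_\varrho\coloneqq\bmphi_\varrho^{(\eta)}$.

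The main technical obstacle is managing this error propagation while ensuring the pre-activations of the perturbed network remain inside the interval on which the scalar approximation is valid; this requires choosing $M$ strictly larger than $M_0$ and then picking $\eta$ small enough (depending on $L$, the operator norms of all $\bmW_i$, and the modulus of continuity of $\tildevarrho$ on $[-M,M]$) so that the perturbed trajectory does not escape $[-M,M]$ and the cumulative error stays below $\eps$.
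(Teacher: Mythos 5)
Your proposal is correct and follows essentially the same route as the paper's proof: the same parallel replacement of each activation by the sub-network $\tildevarrho_\eta$, the same choice of $M$ as a unit cushion above the supremum of the true pre-activations, and the same layer-by-layer induction using the two-term decomposition controlled by uniform convergence on $[-M,M]$ and uniform continuity of $\tildevarrho$. No gaps.
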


The proof of Proposition~\ref{prop:activation:replace} can be found in Section~\ref{sec:proof:prop:activation:replace}.
The utilization of Proposition~\ref{prop:activation:replace} simplifies our task of proving Theorems~\ref{thm:main}, \ref{thm:main:kth:derivative}, \ref{thm:main:scrA:1k}, \ref{thm:main:scrA:2}, and \ref{thm:main:scrA:2:new}. Our focus now shifts to constructing $\varrho$-activated networks that can effectively approximate both $\varrho^{(k)}$ (assuming $\varrho\in C^k(\R)$) and \ReLU. To facilitate this construction process, we introduce the following three propositions.


\begin{proposition}
	\label{prop:approx:f:nth:D}
	Given any $n\in \N$ and $a_0<a<b<b_0$, if	
	$f\in C^n\big((a_0,b_0)\big)$, then
	\begin{equation*}
		\frac{\sum_{\ell=0}^{n}(-1)^\ell\binom{n}{\ell} f(x+\ell  t) }{(-t)^n} 
		\rightrightarrows f^{(n)}(x)\quad\tn{as}\  t\to0\quad \tn{for any $x\in [a,b]$.}
	\end{equation*}
\end{proposition}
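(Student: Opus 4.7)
The plan is to recognize the numerator as a scaled iterated forward finite difference of $f$ and then establish the integral representation
\[
\frac{\sum_{\ell=0}^{n}(-1)^\ell\binom{n}{\ell} f(x+\ell t)}{(-t)^n}
\;=\;
\int_{[0,1]^n} f^{(n)}\!\big(x+t(s_1+\cdots+s_n)\big)\,ds_1\cdots ds_n,
\]
from which uniform convergence on $[a,b]$ is immediate via uniform continuity of $f^{(n)}$ on a compact subset of $(a_0,b_0)$.

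First, I would rewrite the numerator as $(-1)^n\Delta_t^n f(x)$, where $\Delta_t^n f(x)\coloneqq\sum_{\ell=0}^n\binom{n}{\ell}(-1)^{n-\ell}f(x+\ell t)$ is the $n$-th forward finite difference with step $t$. Since $(-t)^n=(-1)^n t^n$, the expression of interest simplifies to $\Delta_t^n f(x)/t^n$, and the target becomes $\Delta_t^n f(x)/t^n\rightrightarrows f^{(n)}(x)$ on $[a,b]$ as $t\to 0$.

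Next I would establish the displayed integral representation by induction on $n$. The base $n=0$ is trivial. For the inductive step, the fundamental theorem of calculus gives
\[
f(y+t)-f(y)=t\int_0^1 f^{(1)}(y+s_1 t)\,ds_1,
\]
so $\Delta_t f(x)=t\int_0^1 f^{(1)}(x+s_1 t)\,ds_1$. Applying the linear operator $\Delta_t^{n-1}$ (which is a finite combination of translations and so commutes with the integral) and invoking the inductive hypothesis on $f^{(1)}\in C^{n-1}((a_0,b_0))$ yields the desired formula. Crucially, this approach consumes exactly one derivative at each step, so the full hypothesis $f\in C^n((a_0,b_0))$ suffices.

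Finally, since $[a,b]\subset(a_0,b_0)$ and the containing interval is open, I would choose $t_0>0$ small enough that the compact set $K\coloneqq [a-n\,t_0,\,b+n\,t_0]$ still lies in $(a_0,b_0)$. For $|t|<t_0$, $x\in[a,b]$, and $(s_1,\ldots,s_n)\in[0,1]^n$, the argument $x+t(s_1+\cdots+s_n)$ lies in $K$ with $|t(s_1+\cdots+s_n)|\le n|t|$. Uniform continuity of $f^{(n)}$ on $K$ then provides, for any $\eps>0$, some $\delta\in(0,t_0)$ such that the integrand differs from $f^{(n)}(x)$ by less than $\eps$ whenever $|t|<\delta$, uniformly in $x\in[a,b]$. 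Integrating over $[0,1]^n$ gives the uniform convergence.

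The main obstacle is avoiding hidden regularity assumptions: a naive induction via a mean value theorem for iterated differences risks implicitly demanding $C^{n+1}$ control. Representing each step as an integral of a single derivative circumvents this cleanly and makes the uniform bound transparent.
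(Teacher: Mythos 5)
Your proposal is correct, and it takes a genuinely different route from the paper. The paper first proves a combinatorial lemma, $\sum_{\ell=0}^{n}(-1)^\ell\binom{n}{\ell}\ell^i=0$ for $i<n$ and $(-1)^n\,n!$ for $i=n$ (established by an inductive polynomial identity), and then applies Cauchy's mean value theorem $n$ times to $g_x(t)=\sum_{\ell}(-1)^\ell\binom{n}{\ell}f(x+\ell t)$ to produce an intermediate point $t_{x,n}$ with $g_x(t)/t^n=g_x^{(n)}(t_{x,n})/n!$, after which uniform continuity of $f^{(n)}$ finishes the argument. You instead derive, by induction on $n$ using the fundamental theorem of calculus and the fact that $\Delta_t^{n-1}$ is a finite combination of translations, the exact representation of $\Delta_t^n f(x)/t^n$ as an average of $f^{(n)}$ over the unit cube; the normalization constant then comes for free (the cube has measure one), so you bypass both the combinatorial lemma and the bookkeeping with the intermediate points and the constants $\lambda_\ell$, $C_n$. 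Both arguments use exactly $C^n$ regularity and both conclude with the same uniform-continuity step on a compact neighborhood $[a-nc_0,\,b+nc_0]$ of $[a,b]$; your version gives a more transparent error formula, while the paper's stays within mean-value-theorem territory and avoids interchanging a (finite) sum with an integral. One small correction to your closing remark: the iterated Cauchy mean value theorem route does \emph{not} secretly require $C^{n+1}$ — each application consumes one derivative of $g_x$, hence one derivative of $f$, so the paper's argument also runs on exactly $f\in C^n\big((a_0,b_0)\big)$.
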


\begin{proposition}
	\label{prop:approx:ReLU:scrA:1k}
	Given any $M>0$, $k\in \N$, and $\varrho\in \scrA_{1,k}$, there exists
	$\phi_\eps\in \nnOneD{\varrho}{k+2}{1}{\R}{\R}$ for each $\eps\in (0,1)$
	such that
	\begin{equation*}
		\phi_\eps(x)\rightrightarrows \ReLU(x)\quad \tn{as}\   \eps\to 0^+\quad \tn{for any $x\in [-M,M]$.}
	\end{equation*}
\end{proposition}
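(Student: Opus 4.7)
Two ingredients combine here. First, Proposition~\ref{prop:approx:f:nth:D} lets a linear combination of $k+1$ shifted copies of $\varrho$ approximate $\varrho^{(k)}$ uniformly near the kink $x_0$. Second, $\varrho\in\scrA_{1,k}$ guarantees that $\varrho^{(k)}$ has distinct one-sided derivatives $\beta_-\ne\beta_+$ at $x_0$, so after rescaling, $\varrho^{(k)}$ looks locally like an affine function plus $(\beta_+-\beta_-)\ReLU$. One extra hidden neuron will supply a linear correction that cancels the affine part, for a total of $k+2$ hidden units.

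Fix $a_0<x_0<b_0$ with $\varrho\in C^k((a_0,b_0))$. For small $\eps>0$ take $K=1/\eps$ and choose $t=t(\eps)\to 0$ small enough that $K\,\omega(kt)\to 0$, where $\omega$ is a modulus of continuity of $\varrho^{(k)}$ on a neighborhood of $x_0$ (such $t$ exists since $\omega(s)\to 0$ as $s\to 0^+$). Form the divided-difference block
\begin{equation*}
    V(x)\coloneqq \frac{1}{(-t)^k}\sum_{\ell=0}^k(-1)^\ell\binom{k}{\ell}\varrho\bigl(x_0+x/K+\ell t\bigr),
\end{equation*}
which uses $k+1$ neurons (collapsing to $\varrho(x_0+x/K)$ when $k=0$). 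The mean-value form of divided differences gives $V(x)=\varrho^{(k)}(\xi)$ for some $\xi\in[x_0+x/K,\,x_0+x/K+kt]$; combining this with the local expansion $\varrho^{(k)}(x_0+z)=\varrho^{(k)}(x_0)+\beta_- z+(\beta_+-\beta_-)z_++r(z)$ (with $r(z)=o(z)$ by the definition of a $k$-th order kink) yields
\begin{equation*}
    K\bigl[V(x)-\varrho^{(k)}(x_0)\bigr]\;\rightrightarrows\;\beta_-\,x+(\beta_+-\beta_-)\ReLU(x)\quad\text{on }[-M,M]\text{ as }\eps\to 0^+,
\end{equation*}
because the divided-difference error contributes $K\,\omega(kt)\to 0$ and the linearization error contributes $K\,|r(x/K)|\le M\sup_{|z|\le M/K}|r(z)|/|z|\to 0$.

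To cancel the unwanted $\beta_-\,x$ term I add one auxiliary neuron. If $k\ge 1$, then $\varrho\in C^1((a_0,b_0))$ is non-constant near $x_0$, so by the mean-value theorem some $x_1\in(a_0,b_0)\setminus\{x_0\}$ satisfies $\varrho'(x_1)\ne 0$; the neuron $\varrho(x_1+\eps x)$ then satisfies $\eps^{-1}[\varrho(x_1+\eps x)-\varrho(x_1)]\rightrightarrows\varrho'(x_1)\,x$ on $[-M,M]$ by the definition of the derivative. If $k=0$, I cannot use any derivative of $\varrho$ away from $x_0$, so I take the auxiliary neuron to be $\varrho(x_0+(x+2M)/K)$, whose argument stays strictly to the right of $x_0$ for every $x\in[-M,M]$; after $K$-rescaling this delivers $\beta_+(x+2M)+o(1)$, with the symmetric left-branch construction when $\beta_+=0$ (using $\beta_-\ne 0$). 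A $3\times 3$ linear system then determines the two non-trivial coefficients together with the output bias, yielding $\phi_\eps\in\nnOneD{\varrho}{k+2}{1}{\R}{\R}$ with $\phi_\eps\rightrightarrows\ReLU$ on $[-M,M]$.

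The main technical hurdle is that $\varrho^{(k)}$ is only continuous (not necessarily Lipschitz) near $x_0$ in general, so the divided-difference remainder has an unspecified rate $\omega(kt)$ that must still be beaten against the $K$-amplification in the final coefficients. Since $\omega(s)\to 0$ one can always pick $t(\eps)$ small enough (depending on $\eps$ and on $\varrho$ through $\omega$) to enforce $K\omega(kt)\to 0$; beyond this, one only needs the routine checks that all $k+2$ evaluation points stay inside $(a_0,b_0)$ and that the auxiliary point stays strictly on one side of $x_0$ when $k=0$, both of which hold for $\eps$ small.
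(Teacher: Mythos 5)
Your proposal is correct and follows essentially the same route as the paper: a $(k+1)$-neuron finite-difference block approximating $\varrho^{(k)}$ zoomed in at the kink (the paper's Proposition~\ref{prop:approx:f:nth:D}), one extra neuron supplying the linear correction, the same case split between $k=0$ (one-sided shift so the argument stays on a single branch of the kink) and $k\ge 1$ (a point $x_1$ with $\varrho'(x_1)\neq 0$), and the same resolution of the two-parameter issue by choosing the difference step small enough relative to the zoom factor $K$. The only cosmetic deviations are your use of the right branch (with a sub-case when $\beta_+=0$) where the paper uses the left branch throughout, and your direct appeal to the mean-value form of divided differences in place of the paper's Cauchy-MVT argument.
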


\begin{proposition}
	\label{prop:approx:ReLU:scrA:2:3}
	Given any $M>0$, for each $\eps\in (0,1)$, there exists
	\begin{equation*}
		\phi_\eps\in 
  \begin{cases}
      \nnOneD{\varrho}{1}{1}{\R}{\R} & \tn{if}\  \varrho\in \tildescrA_{2},\\
      \nnOneD{\varrho}{2}{1}{\R}{\R} & \tn{if}\  \varrho\in \scrA_{2},\\
      \nnOneD{\varrho}{3}{2}{\R}{\R} & \tn{if}\  \varrho\in \scrA_3
  \end{cases}
	\end{equation*}  
	such that
	\begin{equation*}
		\phi_\eps(x)\rightrightarrows \ReLU(x)\quad \tn{as}\   \eps\to 0^+\quad \tn{for any $x\in [-M,M]$.}
	\end{equation*}
\end{proposition}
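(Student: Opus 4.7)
The plan is to treat the three inclusions case by case, each time constructing an explicit $\varrho$-network that converges uniformly to $\ReLU$ on $[-M,M]$. For $\varrho\in\tildescrA_2$, write $\varrho(x)=(x+b_0)h(x)+b_1$ with $h\in\tildescrS$; after a reflection $x\mapsto -x$ and a sign flip on $h$ I may assume $\lim_{x\to-\infty}h(x)=0$ and $h_+\coloneqq\lim_{x\to\infty}h(x)\neq 0$. Taking $\phi_\eps(x)\coloneqq \tfrac{1}{h_+}\bigl(\eps\,\varrho(x/\eps)-\eps b_1\bigr)=\tfrac{1}{h_+}(x+\eps b_0)h(x/\eps)$ gives a single active $\varrho$-neuron with an output bias, hence an element of $\nnOneD{\varrho}{1}{1}{\R}{\R}$. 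On $[\delta,M]$ the argument $x/\eps\to+\infty$ uniformly so $\phi_\eps(x)\to x$ uniformly; on $[-M,-\delta]$ symmetrically $\phi_\eps\to 0$ uniformly; and on the sliver $[-\delta,\delta]$ the bounds $|\phi_\eps(x)|\le (\delta+\eps|b_0|)\sup|h|/|h_+|$ and $|\ReLU(x)|\le\delta$ yield uniform convergence after choosing $\delta$ small and then $\eps$ small.

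For $\varrho\in\scrA_2$ both limits $h_\pm$ of $h$ may be nonzero; after a reflection I may still assume $h_+\neq 0$. I then use the two-neuron function $\phi_\eps(x)\coloneqq c_1\varrho(x/\eps)+c_2\varrho((K-x)/\eps)+d$ with a fixed $K>M$, so that $(K-x)/\eps\to+\infty$ uniformly on $[-M,M]$. Writing $\eps\varrho(y/\eps)=(y+\eps b_0)h(y/\eps)+\eps b_1$ and matching the required slopes on each side of $x=0$ together with a common constant yields a small linear system whose solution is $c_1=\eps/(h_+-h_-)$ and $c_2=\eps h_-/[h_+(h_+-h_-)]$ with a matching $d$; this collapses to the previous one-neuron case when $h_-=0$. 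The same pinching argument near $x=0$ gives uniform convergence, and the network is width-$2$, depth-$1$.

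For $\varrho\in\scrA_3$ I combine the S-shape with the local $C^2$ behavior at a point $x_0$ with $\varrho''(x_0)\neq 0$; this forces $\varrho'$ to be nonconstant near $x_0$, so after replacing $x_0$ by a nearby point I may assume $\varrho'(x_0)\neq 0$. In the first hidden layer I take two neurons $u_1(x)\coloneqq\varrho(x/\eps_1)$ and $u_2(x)\coloneqq\varrho(x_0+\eps_2 x)$, which converge respectively to the step $h_-+(h_+-h_-)\one_{\{x>0\}}$ and, by Taylor expansion, to $\varrho(x_0)+\eps_2\varrho'(x_0)x+O(\eps_2^2)$. In the second hidden layer I take two neurons whose pre-activations $\alpha_j u_1+\beta_j u_2+\gamma_j$ ($j=1,2$) are designed so that for $x<0$ both arguments tend to $-\infty$ (so $\varrho$ saturates to $h_-$), while for $x>0$ the first argument stays near $x_0$ with a small linear-in-$x$ perturbation and the second stays near a fixed point $x_1$ chosen so that $\varrho(x_1)\neq h_-$, which exists because $\varrho(y)\to h_+\neq h_-$. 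The three free parameters $(c_1,c_2,d)$ in the output affine map then solve the three linear conditions -- slope $1$ and intercept $0$ for $x>0$, value $0$ for $x<0$ -- producing a width-$3$, depth-$2$ $\varrho$-network approximating $\ReLU$.

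The main obstacle is the last case: the parameters $\eps_1,\eps_2$ and the saturation scale $\alpha_j$ must be driven to their limits in a coordinated way so that the error is uniform on $[-M,M]$, in particular across the transition window around $x=0$ where $u_1$ has not yet resolved into its step-function limit. Whereas Cases $\tildescrA_2$ and $\scrA_2$ reduce to a single asymptotic parameter handled by one pinching estimate, the $\scrA_3$ construction requires simultaneously controlling the convergence $h(\cdot/\eps_1)\to h_\pm$, the Taylor remainder of $u_2$ around $x_0$, and the saturation of $\varrho$ at $-\infty$.
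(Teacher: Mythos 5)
Your Cases 1 and 2 are correct and essentially match the paper's constructions: a single rescaled neuron whose S-shaped factor sharpens into $\one_{\{x>0\}}$, plus (for $\scrA_{2}$) a second neuron that reproduces and cancels the residual linear part $\tfrac{h_-}{h_+-h_-}x$. Your variant with the saturating argument $(K-x)/\eps$ is a legitimate alternative to the paper's shift-by-$M$ trick, and your three-region pinching estimate is exactly the mechanism behind the paper's Lemma on $g_1\cdot g_2\approx\ReLU$.

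Case 3, however, has a genuine gap, and it is structural rather than a matter of ``coordinating rates.'' To get slope $1$ on $x>0$ from the neuron $\varrho(\alpha_1u_1+\beta_1u_2+\gamma_1)\approx\varrho\big(x_0+\beta_1\eps_2\varrho'(x_0)\,x\big)$ you must linearize $\varrho$ around $x_0$, which forces $\beta_1\eps_2\to 0$ and hence an output weight $c_1\sim 1/\big(\beta_1\eps_2\varrho'(x_0)^2\big)\to\infty$ (keeping $\beta_1\eps_2$ of order one is not an option: the neuron would then compute a bounded, genuinely nonlinear function of $x$ on $[0,M]$, never slope $1$). Now look at the transition window $|x|\lesssim\eps_1$ where $u_1(x)=\varrho(x/\eps_1)$ has not resolved: there the pre-activations sweep from neighborhoods of $x_0$ and $x_1$ all the way down to $-\infty$, so the second-layer outputs take essentially arbitrary values in the range of $\varrho$, and the network output is $c_1\varrho(A_1(x))+c_2\varrho(A_2(x))+d$ with $|c_1|,|c_2|\to\infty$. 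Concretely, if $A_2=A_1+(x_1-x_0)$, the output equals $c_1\big[\varrho(y)-\tfrac{\varrho(x_0)-L_1}{\varrho(x_1)-L_1}\varrho(y+x_1-x_0)\big]+\mathrm{const}$ at $y=A_1(x)$; the bracket is a fixed function of $y$ that vanishes at $y=x_0$ and as $y\to-\infty$ but is generically of order one in between (try $\varrho=\Sigmoid$), so the output blows up like $c_1$ somewhere in the window instead of staying within $\eps$ of $\ReLU(x)\in[0,\delta]$. This is precisely why the paper does not gate by shifting arguments: it uses $\varrho^\dprime(x_0)\neq 0$ --- a hypothesis your construction never invokes, which is itself a warning sign --- to build the width-$3$, depth-$1$ product gadget $\Gamma_\eta(u,v)=\tfrac{\varrho(x_0+\eta u+\eta v)-\varrho(x_0+\eta u)-\varrho(x_0+\eta v)+\varrho(x_0)}{\eta^2\varrho^\dprime(x_0)}\approx uv$ uniformly on a compact square, and then forms $\Gamma_{\eta_\eps}\big(g_1(K_\eps x),\,g_{2,\delta_\eps}(x)\big)$ with $g_1\approx\one_{\{\cdot>0\}}$ and $g_{2,\delta}\approx x$. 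The multiplicative structure is what tames the window: near $x=0$ the factor $g_{2,\delta}(x)\approx x$ is already small, so the product is small no matter what the gate does. You would need to replace your second hidden layer by such a product gadget (or an equivalent mechanism exploiting $\varrho^\dprime(x_0)\neq 0$) to close Case 3.
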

Propositions~\ref{prop:approx:f:nth:D}, \ref{prop:approx:ReLU:scrA:1k}, and \ref{prop:approx:ReLU:scrA:2:3} will be proved in Sections~\ref{sec:proof:prop:approx:f:nth:D}, \ref{sec:proof:prop:approx:ReLU:scrA:1k}, and \ref{sec:proof:prop:approx:ReLU:scrA:23}, respectively. 
Let us briefly discuss the key ideas for proving these three propositions.

The essence of proving Proposition~\ref{prop:approx:f:nth:D} lies in the application of Cauchy's mean value theorem. Through repeated utilization of such a theorem, we can establish the existence of
$|t_n|\in (0,|t|)$ such that
\begin{equation*}
    \frac{\sum_{\ell=0}^{n}(-1)^\ell\binom{n}{\ell} f(x+\ell  t) }{(-t)^n}
    =    \frac{\sum_{\ell=0}^{n}(-1)^\ell\binom{n}{\ell}\ell^n f^{(n)}(x+\ell  t_n) }{(-1)^n\, n!}.
\end{equation*}
Furthermore, we will demonstrate  $\sum_{\ell=0}^{n}(-1)^\ell\binom{n}{\ell}\ell^n=(-1)^n\, n!$ in Lemma~\ref{lem:combinations:sum} later. With the uniform continuity of $f^{(n)}$ on a closed interval, Proposition~\ref{prop:approx:f:nth:D} follows straightforwardly. 
See more details in Section~\ref{sec:proof:prop:approx:f:nth:D}.

The proof of Proposition~\ref{prop:approx:ReLU:scrA:1k} can be divided into two main steps.
The first step involves demonstrating that
\begin{equation*}
	\frac{\varrho^{(k)}(x_0+\eps x)-\varrho^{(k)}(x_0)}{\eps}\rightrightarrows 
 \tau(x)\coloneqq \begin{cases}
 L_1x & \tn{if}\ x<0,\\
 L_2x& \tn{if}\ x\ge 0\\
 \end{cases}
 \quad \tn{for any $x\in [-A,A]$ and $A>0$,}
\end{equation*}
where $\tau$ can be used to generate \ReLU\  and
\begin{equation*}
    L_1=\lim_{t\to 0^-}\frac{\varrho^{(k)}(x_0+t)-\varrho^{(k)}(x_0)}{t}\neq L_2=\lim_{t\to 0^+}\frac{\varrho^{(k)}(x_0+t)-\varrho^{(k)}(x_0)}{t}.
\end{equation*}
The second step involves employing Proposition~\ref{prop:approx:f:nth:D} to uniformly approximate $\varrho^{(k)}$ using a $\varrho$-activated network. By combining these two steps, we can construct a $\varrho$-activated network that effectively approximates \ReLU.
For further details, refer to Section~\ref{sec:proof:prop:approx:ReLU:scrA:1k}.

The core of proving Proposition~\ref{prop:approx:ReLU:scrA:2:3} is the fact $x\cdot \one_{\{x>0\}}=\ReLU(x)$ for any $x\in\R$. 
This fact simplifies our proof considerably. Our focus then shifts toward constructing $\varrho$-activated networks that can effectively approximate $x$, $\one_{\{x>0\}}$, and $xy$ for any $x,y\in [-A,A]$ and $A>0$.
Additional details can be found in Section~\ref{sec:proof:prop:approx:ReLU:scrA:23}.

\subsection{Proof of Theorem~\ref{thm:main} Based on Propositions}
\label{sec:proof:thm:main}

The proof of Theorem~\ref{thm:main} can be easily demonstrated by using Propositions~\ref{prop:activation:replace}, \ref{prop:approx:ReLU:scrA:1k}, and \ref{prop:approx:ReLU:scrA:2:3}.
\begin{proof}[Proof of Theorem~\ref{thm:main}]
	Since $\scrA=\big(\scrA_{1,0}\cup\scrA_{1,1}\big)
     \cup \scrA_{2}\cup\scrA_3$, 
	we can divide the proof into two cases: $\varrho\in \scrA_{1,0}\cup\scrA_{1,1}$ and $\varrho\in \scrA_{2}\cup\scrA_3$. 
	
	We first consider the case $\varrho\in \scrA_{1,0}\cup\scrA_{1,1}$, i.e., $\varrho\in \scrA_{1,k}$ for $k=0,1$. By Proposition~\ref{prop:approx:ReLU:scrA:1k}, for any $M>0$, there exists
	$\tildevarrho_\eta\in \nnOneD{\varrho}{k+2}{1}{\R}{\R}\subseteq\nnOneD{\varrho}{3}{1}{\R}{\R}$ for each $\eta\in (0,1)$ such that
	\begin{equation*}
		\tildevarrho_\eta(x)\rightrightarrows \ReLU(x)\quad \tn{as}\   \eta\to 0^+\quad \tn{for any $x\in [-M,M]$.}
	\end{equation*}
	Then by Proposition~\ref{prop:activation:replace} with $\tildevarrho$ being \ReLU\ therein, for any $\eps>0$, $A>0$, and $\bmphi_\ReLU\in \nn{\ReLU}{N}{L}{d}{n}$, there exists
	$$\bmphi_{\varrho}\in \nn[\big]{\varrho}{3 N}{L}{d}{n}\subseteq \nn[\big]{\varrho}{3 N}{2L}{d}{n}$$
	such that
	\begin{equation*}
		\big\|\bmphi_\varrho-\bmphi_\ReLU\big\|_{\sup([-A,A]^d)}<\eps.
	\end{equation*}
	
		Next, we consider the case $\varrho\in \scrA_{2}\cup\scrA_3$. By Proposition~\ref{prop:approx:ReLU:scrA:2:3}, for any $M>0$, there exists
	$\tildevarrho_\eta\in \nnOneD{\varrho}{3}{2}{\R}{\R}$ for each $\eta\in (0,1)$ such that
	\begin{equation*}
		\tildevarrho_\eta(x)\rightrightarrows \ReLU(x)\quad \tn{as}\   \eta\to 0^+\quad \tn{for any $x\in [-M,M]$.}
	\end{equation*}
	Then by Proposition~\ref{prop:activation:replace} with $\tildevarrho$ being \ReLU\ therein, for any $\eps>0$, $A>0$, and $\bmphi_\ReLU\in \nn{\ReLU}{N}{L}{d}{n}$, there exists
	$$\bmphi_{\varrho}\in  \nn[\big]{\varrho}{3 N}{2L}{d}{n}$$
	such that
	\begin{equation*}
		\big\|\bmphi_\varrho-\bmphi_\ReLU\big\|_{\sup([-A,A]^d)}<\eps.
	\end{equation*}
	Thus, we finish the proof of Theorem~\ref{thm:main}.
\end{proof}

\subsection{Proofs of Theorems in Section~\ref{sec:additional:theorems} Based on Propositions}
\label{sec:proof:thms:main:others}


The proofs of Theorems~\ref{thm:main:kth:derivative}, \ref{thm:main:scrA:1k}, \ref{thm:main:scrA:2}, and \ref{thm:main:scrA:2:new} can be straightforwardly demonstrated by utilizing Propositions~\ref{prop:activation:replace}, \ref{prop:approx:f:nth:D},  \ref{prop:approx:ReLU:scrA:1k}, and \ref{prop:approx:ReLU:scrA:2:3}.
\begin{proof}[Proof of Theorem~\ref{thm:main:kth:derivative}]
 It follows from $\varrho\in C^k(\R)$ that $\varrho\in C^k\big((-M-1,M+1)\big)$ for any $M>0$.
 By Proposition~\ref{prop:approx:f:nth:D},  we have
 \begin{equation*}
 	\frac{\sum_{\ell=0}^{k}(-1)^\ell\binom{k}{\ell} \varrho(x+\ell  t) }{(-t)^k} 
 	\rightrightarrows \varrho^{(k)}(x)\quad\tn{as}\  t\to0\quad \tn{for any $x\in [-M,M]$.}
 \end{equation*}
 For each $\eta\in (0,1)$, we define
 \begin{equation*}
 	\tildevarrho_\eta(x)\coloneqq \frac{\sum_{\ell=0}^{k}(-1)^\ell\binom{k}{\ell} \varrho(x+\ell \eta) }{(-\eta)^k}\quad \tn{for any $x\in \R$.}
 \end{equation*}
 Clearly, $\tildevarrho_\eta\in \nnOneD{\varrho}{k+1}{1}{\R}{\R}$ for each $\eta\in (0,1)$  and 
	\begin{equation*}
		\tildevarrho_\eta(x)\rightrightarrows \varrho^{(k)}(x)\quad \tn{as}\   \eta\to 0^+\quad \tn{for any $x\in [-M,M]$.}
	\end{equation*}
	Then by Proposition~\ref{prop:activation:replace} with $\tildevarrho$ being $\varrho^{(k)}$ therein, for any $\eps>0$, $A>0$, and $\bmphi_{\varrho^{(k)}}\in \nn{\varrho^{(k)}}{N}{L}{d}{n}$, 
 there exists
	$\bmphi_{\varrho}\in \nn[\big]{\varrho}{(k+1) N}{L}{d}{n}$
	such that 
	\begin{equation*}
		\big\|\bmphi_{\varrho}-\bmphi_{\varrho^{(k)}}\big\|_{\sup([-A,A]^d)}<\eps.
	\end{equation*}
	So we finish the proof of Theorem~\ref{thm:main:kth:derivative}.
\end{proof}

\begin{proof}[Proof of Theorem~\ref{thm:main:scrA:1k}]
By Proposition~\ref{prop:approx:ReLU:scrA:1k}, for any $M>0$, $k\in \N$, and $\varrho\in \scrA_{1,k}$, there exists
$\tildevarrho_\eta\in \nnOneD{\varrho}{k+2}{1}{\R}{\R}$ for each $\eta\in (0,1)$ such that
\begin{equation*}
	\tildevarrho_\eta(x)\rightrightarrows \ReLU(x)\quad \tn{as}\   \eta\to 0^+\quad \tn{for any $x\in [-M,M]$.}
\end{equation*}
Then by Proposition~\ref{prop:activation:replace} with $\tildevarrho$ being \ReLU\ therein, for any $\eps>0$, $A>0$, and $\bmphi_\ReLU\in \nn{\ReLU}{N}{L}{d}{n}$, there exists
$\bmphi_{\varrho}\in \nn[\big]{\varrho}{(k+2) N}{L}{d}{n}$
such that
\begin{equation*}
	\big\|\bmphi_\varrho-\bmphi_\ReLU\big\|_{\sup([-A,A]^d)}<\eps.
\end{equation*}
	So we finish the proof of Theorem~\ref{thm:main:scrA:1k}.
\end{proof}

\begin{proof}[Proof of Theorem~\ref{thm:main:scrA:2}]
By Proposition~\ref{prop:approx:ReLU:scrA:2:3}, for any $M>0$ and $\varrho\in \scrA_{2}$, there exists
$\tildevarrho_\eta\in \nnOneD{\varrho}{2}{1}{\R}{\R}$ for each $\eta\in (0,1)$ such that
\begin{equation*}
	\tildevarrho_\eta(x)\rightrightarrows \ReLU(x)\quad \tn{as}\   \eta\to 0^+\quad \tn{for any $x\in [-M,M]$.}
\end{equation*}
Then by Proposition~\ref{prop:activation:replace} with $\tildevarrho$ being \ReLU\ therein, for any $\eps>0$, $A>0$, and $\bmphi_\ReLU\in \nn{\ReLU}{N}{L}{d}{n}$, there exists
$\bmphi_{\varrho}\in \nn[\big]{\varrho}{2 N}{L}{d}{n}$
such that
\begin{equation*}
	\big\|\bmphi_\varrho-\bmphi_\ReLU\big\|_{\sup([-A,A]^d)}<\eps.
\end{equation*}
	So we finish the proof of Theorem~\ref{thm:main:scrA:2}.
\end{proof}

\begin{proof}[Proof of Theorem~\ref{thm:main:scrA:2:new}]
By Proposition~\ref{prop:approx:ReLU:scrA:2:3}, for any $M>0$ and $\varrho\in \tildescrA_{2}$, there exists
$\tildevarrho_\eta\in \nnOneD{\varrho}{1}{1}{\R}{\R}$ for each $\eta\in (0,1)$ such that
\begin{equation*}
	\tildevarrho_\eta(x)\rightrightarrows \ReLU(x)\quad \tn{as}\   \eta\to 0^+\quad \tn{for any $x\in [-M,M]$.}
\end{equation*}
Then by Proposition~\ref{prop:activation:replace} with $\tildevarrho$ being \ReLU\ therein, for any $\eps>0$, $A>0$, and $\bmphi_\ReLU\in \nn{\ReLU}{N}{L}{d}{n}$, there exists
$\bmphi_{\varrho}\in \nn[\big]{\varrho}{ N}{L}{d}{n}$
such that
\begin{equation*}
	\big\|\bmphi_\varrho-\bmphi_\ReLU\big\|_{\sup([-A,A]^d)}<\eps.
\end{equation*}
	So we finish the proof of Theorem~\ref{thm:main:scrA:2:new}.
\end{proof}

\section{Proof of Proposition~\ref{prop:activation:replace}}
\label{sec:proof:prop:activation:replace}


We will prove Proposition~\ref{prop:activation:replace} in this section. The crucial aspect of the proof is the observation that $\tildevarrho\in C(\R)$ implies  $\tildevarrho$ is uniformly continuous on $[-M,M]$ for any $M>0$. 
Further information and specific details are provided below.


\begin{proof}[Proof of Proposition~\ref{prop:activation:replace}]
For ease of notation, we allow the activation function to be applied elementwise to a vector input.
Since $\bmphi_{\tildevarrho}\in \nn[\big]{\tildevarrho}{N}{L}{d}{n}$, 	$\bmphi_\tildevarrho$ is realized by a $\tildevarrho$-activated network with $\hatL$ hidden layers, where $L\ge \hatL\in\N^+$. We may assume $\hatL=L$ since the proof remains similar if we replace $L$ with $\hatL$ when $\hatL<L$.
 Then $\bmphi_\tildevarrho$ can be represented in a form of function compositions
	\begin{equation*}
		\bmphi_\tildevarrho(\bmx) =\calbmL_L\circ\tildevarrho\circ\calbmL_{L-1}\circ  \ \cdots \  \circ \tildevarrho\circ\calbmL_1\circ\tildevarrho\circ\calbmL_0(\bmx)\quad \tn{for any $\bmx\in\R^d$},
	\end{equation*}
	where $N_0=d$, $N_1,N_2,\cdots,N_L\in\N^+$ with $\max\{N_1,N_2,\cdots,N_L\}\le N$,  $N_{L+1}=n$, $\bmW_\ell\in \R^{N_{\ell+1}\times N_{\ell}}$ and $\bm{b}_\ell\in \R^{N_{\ell+1}}$ are the weight matrix and the bias vector in the $\ell$-th affine linear transform $\calbmL_\ell:\bmy \mapsto \bmW_\ell\cdot\bmy+\bmb_\ell$ for each $\ell\in \{0,1,\cdots,L\}$.
	
	Recall that there exists
	\begin{equation*}
		\tildevarrho_\eta\in \nnOneD[\big]{\varrho}{\tildeN}{\tildeL}{\R}{\R}\quad \tn{for each $\eta\in (0,1)$}
	\end{equation*}  
	such that
	\begin{equation*}
		\tildevarrho_\eta(t)\rightrightarrows \tildevarrho(t)\quad \tn{as}\  \eta\to 0^+\quad \tn{for any $t\in [-M,M]$,}
	\end{equation*}
where $M>0$ is a large number determined later.
	For each $\eta\in (0,1)$, we define 
	\begin{equation*}
		\bmphi_{\tildevarrho_\eta}(\bmx) \coloneqq\calbmL_L\circ{\tildevarrho_\eta}\circ\calbmL_{L-1}\circ  \ \cdots \  \circ {\tildevarrho_\eta}\circ\calbmL_1\circ\tildevarrho_\eta\circ\calbmL_0(\bmx)\quad \tn{for any $\bmx\in\R^d$}.
	\end{equation*}
	It is easy to verify that
	\begin{equation*}
		\bmphi_{\tildevarrho_\eta}\in \nn[\big]{\varrho}{\tildeN\cdot  N}{\  \tildeL\cdot  L}{d}{n}.
	\end{equation*}
	Moreover, we will prove 
	\begin{equation*}
		\bmphi_{\tildevarrho_\eta}(\bmx) 
		\rightrightarrows
		\bmphi_{\tildevarrho}(\bmx)
		\quad \tn{as}\   \eta\to 0^+
		\quad \tn{for any $\bmx\in[-A,A]^d$}.
	\end{equation*}
	
	For each $\eta\in (0,1)$ and $\ell=1,2,\cdots,L+1$, we define
	\begin{equation*}
		\bmh_\ell(\bmx)
		\coloneqq \calbmL_{\ell-1}\circ\tildevarrho\circ\calbmL_{\ell-2}\circ  \ \cdots \  \circ \tildevarrho\circ\calbmL_1\circ\tildevarrho\circ\calbmL_0(\bmx)\quad \tn{for any $\bmx\in\R^d$}
	\end{equation*}
	and 
	\begin{equation*}
		\bmh_{\ell,\eta}(\bmx)
		\coloneqq \calbmL_{\ell-1}\circ\tildevarrho_\eta\circ\calbmL_{\ell-2}\circ  \ \cdots \  \circ \tildevarrho_\eta\circ\calbmL_1\circ\tildevarrho_\eta\circ\calbmL_0(\bmx)\quad \tn{for any $\bmx\in\R^d$}.
	\end{equation*}
	Note that $\bmh_{\ell}$ and $\bmh_{\ell,\eta}$ are mappings from $\R^d$ to $\R^{N_\ell}$ for each $\eta\in (0,1)$ and $\ell=1,2,\cdots,L+1$.
	
	For  $\ell=1,2,\cdots,L+1$, we will prove by induction that
	\begin{equation}\label{eq:induction:h:ell}
		\bmh_{\ell,\eta}(\bmx)\rightrightarrows \bmh_{\ell}(\bmx) \quad \tn{as}\    \eta\to 0^+\quad \tn{for any $\bmx\in [-A,A]^d$.}
	\end{equation}

	First, we consider the case $\ell=1$. Clearly,
	\begin{equation*}
		\bmh_{1,\eta}(\bmx)=\calbmL_0(\bmx)= \bmh_{1}(\bmx)\rightrightarrows \bmh_{1}(\bmx) \quad \tn{as}\  \eta\to 0^+\quad \tn{for any $\bmx\in[-A,A]^d$.}
	\end{equation*}
	This means Equation~\eqref{eq:induction:h:ell} holds for $\ell=1$.
	
	Next, supposing Equation~\eqref{eq:induction:h:ell} holds for $\ell=i\in \{1,2,\cdots,L\}$, our goal is to prove that it also holds for $\ell=i+1$. Determine $M>0$ via
	\begin{equation*}
		M=  \sup \Big\{\|\bmh_j(\bmx)\|_{\ell^\infty}+1:
		\bmx\in [-A,A]^d,\quad j=1,2,\cdots,L+1\Big\},
	\end{equation*}
	where the continuity of $\tildevarrho$ guarantees the above supremum is finite, i.e., $M\in [1,\infty)$.
	By the induction hypothesis, we have
	\begin{equation*}
		\bmh_{i,\eta}(\bmx)\rightrightarrows \bmh_{i}(\bmx) \quad \tn{as}\    \eta\to 0^+\quad \tn{for any $\bmx\in [-A,A]^d$.}
	\end{equation*}
	Clearly, for any $\bmx\in [-A,A]^d$, we have $\|\bmh_{i}(\bmx)\|_{\ell^\infty}\le M$ and  \begin{equation*}
	    \|\bmh_{i,\eta}(\bmx)\|_{\ell^\infty} \le \|\bmh_{i}(\bmx)\|_{\ell^\infty}+1\le  M\quad \tn{ for small $\eta>0$.}
	\end{equation*}
	
	Recall that $\tildevarrho_\eta(t)\rightrightarrows \tildevarrho(t)$ as $\eta\to 0^+$ for any $t\in [-M,M]$. Then, we have 
	\begin{equation*}
		\tildevarrho_\eta\circ \bmh_{i,\eta}(\bmx)
		-\tildevarrho\circ \bmh_{i,\eta}(\bmx)
		\rightrightarrows \bmzero\quad \tn{as}\   \eta\to 0^+\quad \tn{for any $\bmx\in [-A,A]^d$.}
	\end{equation*}
	The continuity of $\tildevarrho$ implies
	the uniform continuity of $\tildevarrho$ on $[-M,M]$, from which we deduce
	\begin{equation*}
		\tildevarrho\circ \bmh_{i,\eta}(\bmx)
		-
		\tildevarrho\circ\bmh_{i}(\bmx) 
		\rightrightarrows \bmzero
		\quad \tn{as}\    \eta\to 0^+\quad \tn{for any $\bmx\in [-A,A]^d$.}
	\end{equation*}

	Therefore, for any $\bmx\in [-A,A]^d$, as $\eta\to 0^+$, we have
	\begin{equation*}
		\tildevarrho_\eta\circ \bmh_{i,\eta}(\bmx)
		-
		\tildevarrho\circ\bmh_{i}(\bmx) 
		=
		\underbrace{
			\tildevarrho_\eta\circ \bmh_{i,\eta}(\bmx)
			-\tildevarrho\circ \bmh_{i,\eta}(\bmx)
		}_{\rightrightarrows \bmzero}
		+
		\underbrace{\tildevarrho\circ \bmh_{i,\eta}(\bmx)
			-
			\tildevarrho\circ\bmh_{i}(\bmx) 
		}_{\rightrightarrows \bmzero}
		\rightrightarrows \bmzero,
	\end{equation*}
	implying
	\begin{equation*}
		\bmh_{i+1,\eta}(\bmx)=\calbmL_i\circ\tildevarrho_\eta\circ \bmh_{i,\eta}(\bmx)
		\rightrightarrows
		\calbmL_i\circ\tildevarrho\circ \bmh_{i}(\bmx)=\bmh_{i+1}(\bmx).
	\end{equation*}
	This means Equation~\eqref{eq:induction:h:ell} holds for $\ell=i+1$. So we complete the inductive step.

	By the principle of induction,  we have
	\begin{equation*}
		\bmphi_{\tildevarrho_{\eta}}(\bmx) =\bmh_{L+1,\eta}(\bmx)
		\rightrightarrows
		\bmh_{L+1}(\bmx)=
		\bmphi_{\tildevarrho}(\bmx)
		\quad \tn{as}\   \eta\to 0^+
		\quad \tn{for any $\bmx\in[-A,A]^d$}.
	\end{equation*}
	Then for any $\eps>0$, there exists a small $\eta_0>0$ such that 
	\begin{equation*}
		\big\|\bmphi_{\tildevarrho_{\eta_0}} -
		\bmphi_{\tildevarrho}\big\|_{\sup([-A,A]^d)}<\varepsilon.
	\end{equation*}
	By defining $\bmphi_\varrho\coloneqq\bmphi_{\tildevarrho_{\eta_0}}$, 
	we have
	\begin{equation*}		\bmphi_\varrho=\bmphi_{\tildevarrho_{\eta_0}}\in \nn[\big]{\varrho}{\tildeN\cdot  N}{\  \tildeL\cdot  L}{d}{n}
	\end{equation*}
	and 
	\begin{equation*}
		\big\|\bmphi_{\varrho} -
		\bmphi_{\tildevarrho}\big\|_{\sup([-A,A]^d)}=\big\|\bmphi_{\tildevarrho_{\eta_0}} -
		\bmphi_{\tildevarrho}\big\|_{\sup([-A,A]^d)}<\varepsilon.
	\end{equation*}
	So we finish the proof of Proposition~\ref{prop:activation:replace}.
\end{proof}

\section{Proof of Proposition~\ref{prop:approx:f:nth:D}}
\label{sec:proof:prop:approx:f:nth:D}

In this section, our goal is to prove Proposition~\ref{prop:approx:f:nth:D}. To facilitate the proof, we first introduce a lemma in Section~\ref{sec:lemma:proof:prop:approx:f:nth:D} that simplifies the process. Subsequently, we provide the detailed proof in Section~\ref{sec:proof:prop:approx:f:nth:D:with:lemma}.

\subsection{A Lemma for Proving Proposition~\ref{prop:approx:f:nth:D}}
\label{sec:lemma:proof:prop:approx:f:nth:D}

\begin{lemma}
	\label{lem:combinations:sum}
	Given any $n\in\N$, it holds that
	\begin{equation*}
		\sum_{\ell =0}^{n} (-1)^\ell \binom{n}{\ell }\ell^i  =
		\begin{cases}
			0 & \tn{if}\ i\in \{0,1,\cdots,n-1\},\\
			(-1)^n\,n! & \tn{if}\ i=n.
		\end{cases}
	\end{equation*}
\end{lemma}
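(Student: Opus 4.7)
The plan is to prove the identity by comparing two Taylor expansions of the function $(1-e^x)^n$ around $x = 0$. By the binomial theorem,
\begin{equation*}
(1-e^x)^n = \sum_{\ell=0}^{n} (-1)^\ell \binom{n}{\ell} e^{\ell x}.
\end{equation*}
On the other hand, since $1-e^x = -x + O(x^2)$ as $x \to 0$, the factored form gives $(1-e^x)^n = (-x)^n + O(x^{n+1}) = (-1)^n x^n + O(x^{n+1})$.

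Next, I would expand each exponential on the right-hand side of the first display as $e^{\ell x} = \sum_{i=0}^{\infty} \frac{\ell^i x^i}{i!}$ and interchange the finite outer sum with the inner sum to obtain
\begin{equation*}
(1-e^x)^n = \sum_{i=0}^{\infty} \frac{x^i}{i!} \sum_{\ell=0}^{n} (-1)^\ell \binom{n}{\ell} \ell^i.
\end{equation*}
Comparing this with the expansion $(1-e^x)^n = (-1)^n x^n + O(x^{n+1})$ and matching coefficients of $x^i$ for $i = 0, 1, \cdots, n$ yields both conclusions at once: for $i \in \{0, 1, \cdots, n-1\}$ the coefficient on the left vanishes, so $\sum_{\ell=0}^{n} (-1)^\ell \binom{n}{\ell} \ell^i = 0$; and for $i = n$ the coefficient is $(-1)^n$, giving $\frac{1}{n!}\sum_{\ell=0}^{n} (-1)^\ell \binom{n}{\ell} \ell^n = (-1)^n$, which rearranges to the desired value $(-1)^n n!$.

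There is essentially no obstacle in this proof. The only minor care point is justifying the interchange of summations, but this is immediate since the outer sum over $\ell$ is finite and each series $\sum_i \ell^i x^i / i!$ converges absolutely for every $x \in \mathbb{R}$. An equally clean alternative would be to recognize the sum as $(-1)^n$ times the $n$-th forward difference $\Delta^n f(0)$ of the monomial $f(\ell) = \ell^i$, and invoke the standard fact that the $n$-th finite difference annihilates polynomials of degree less than $n$ and equals $n!$ on $\ell^n$; but the generating-function route above is self-contained and avoids appealing to that machinery.
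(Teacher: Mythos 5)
Your proof is correct. The identity $(1-e^x)^n=\sum_{\ell=0}^{n}(-1)^\ell\binom{n}{\ell}e^{\ell x}$ is just the binomial theorem, the order-of-vanishing computation $(1-e^x)^n=(-1)^nx^n+O(x^{n+1})$ is right, the interchange of the finite outer sum with the absolutely convergent exponential series is legitimate, and matching Taylor coefficients for $i=0,1,\cdots,n$ delivers both cases of the lemma simultaneously (the degenerate case $n=0$ also checks out, with the usual convention $0^0=1$).

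Your route is genuinely different from the paper's. The paper works with the ordinary generating function $\sum_{\ell=0}^{n}t^\ell\binom{n}{\ell}\ell^i$ and proves by induction on $i$ that it factors as $(1+t)^{n-i}\bigl(\tfrac{n!}{(n-i)!}t^i+(1+t)p_i(t)\bigr)$ for some polynomial $p_i$, the inductive step being ``differentiate and multiply by $t$'' applied to $(1+t)^n$; the lemma then follows by letting $t\to-1^+$. Your substitution $t=-e^x$ converts the operator $t\tfrac{d}{dt}$ into $\tfrac{d}{dx}$, so the two arguments are two coordinate systems for the same underlying identity, but the payoffs differ: the paper's version stays entirely within polynomial algebra and finite differentiation (no infinite series), at the cost of an induction with some bookkeeping of the polynomials $p_i$ and $\tilde p_i$; yours trades that for a single comparison of Taylor expansions, which is shorter and gets all values of $i$ at once, at the mild cost of invoking power series and justifying one interchange of summation. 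Your closing remark about the $n$-th finite difference annihilating polynomials of degree below $n$ is also a valid third route, as you note.
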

\begin{proof}
	To simplify the proof, we claim that there exists a polynomial $p_i$  for each $i\in\{0,1,\cdots,n\}$ such that
	\begin{equation*}
		\sum_{\ell =0}^{n} t^\ell \binom{n}{\ell }\ell^i  =
		(1+t)^{n-i}\bigg(\frac{n!}{(n-i)!}t^i+(1+t)p_i(t)\bigg)\quad\tn{for any $t\in (-1,0)$.}
	\end{equation*}
 By assuming the validity of the claim, we have
	\begin{equation*}
		\begin{split}
			\sum_{\ell =0}^{n} (-1)^\ell \binom{n}{\ell }\ell^i  =\lim_{t\to -1^+} \sum_{\ell =0}^{n} t^\ell \binom{n}{\ell }\ell^i
			&=\lim_{t\to -1^+}
			(1+t)^{n-i}\bigg(\frac{n!}{(n-i)!}t^i+(1+t)p_i(t)\bigg)\\
			&=
			\begin{cases}
				0 & \tn{if}\ i\in \{0,1,\cdots,n-1\},\\
				(-1)^n\,n! & \tn{if}\ i=n.
			\end{cases}
		\end{split}
	\end{equation*}
	It remains to prove the claim and we will establish its validity by induction.
	
	First, we consider the case $i=0$. Clearly,
	\begin{equation*}
		\sum_{\ell =0}^{n} t^\ell \binom{n}{\ell }\ell^0  =
		\sum_{\ell =0}^{n} t^\ell \binom{n}{\ell }=(1+t)^n=
		(1+t)^{n-0}\bigg(\frac{n!}{(n-0)!}t^0+(1+t)p_0(t)\bigg)
	\end{equation*}
	for any $t\in (-1,0)$, where $p_0(t)=0$. That means the claim holds for $i=0$.
	
	Next, assuming the claim holds for $i=j\in \{0,1,\cdots,n-1\}$, we will show it also holds for $i=j+1$. By the induction hypothesis,
	 we have
	 \begin{equation*}
	 	\sum_{\ell =0}^{n} t^\ell \binom{n}{\ell }\ell^j  =
	 	(1+t)^{n-j}\bigg(\underbrace{\frac{n!}{(n-j)!}t^j+(1+t)p_j(t)}_{\tildep_j(t)}\bigg)=(1+t)^{n-j}\tildep_j(t)
	 \end{equation*} 
	 for any $t\in (-1,0)$, where $\tildep_j(t)=\frac{n!}{(n-j)!}t^j+(1+t)p_j(t)$ is a polynomial. 
	 By differentiating both sides of the equation above, we obtain
	 \begin{equation*}
	 	\begin{split}
	 		\sum_{\ell =0}^{n} \ell t^{\ell -1}\binom{n}{\ell }\ell^j  
	 		&=(n-j)(1+t)^{n-j-1}\tildep_j(t)+(1+t)^{n-j}\tfrac{d}{dt}\tildep_j(t)\\
	 		&= (1+t)^{n-j-1}\Big((n-j)\tildep_j(t)+(1+t)\tfrac{d}{dt}\tildep_j(t)\Big)\\
	 	\end{split}
	 \end{equation*}
	 for any $t\in (-1,0)$, implying
	\begin{equation*}
		\begin{split}
			\sum_{\ell =0}^{n} t^{\ell }\binom{n}{\ell }\ell^{j+1 } 
			&=
			t\sum_{\ell =0}^{n} \ell t^{\ell -1}\binom{n}{\ell }\ell^j =
			 t(1+t)^{n-j-1}\Big((n-j)\tildep_j(t)+(1+t)\tfrac{d}{dt}\tildep_j(t)\Big)\\
			 &= 
			 (1+t)^{n-j-1}\Big(t(n-j)\tildep_j(t)+t(1+t)\tfrac{d}{dt}\tildep_j(t)\Big)\\
			 &= 
			 (1+t)^{n-(j+1)}\bigg(t(n-j)\Big(\underbrace{\tfrac{n!}{(n-j)!}t^j+(1+t)p_j(t)}_{\tildep_j(t)}\Big)+t(1+t)\tfrac{d}{dt}\tildep_j(t)\bigg)\\
			 &= 
			 (1+t)^{n-(j+1)}\bigg(\tfrac{n!(n-j)}{(n-j)!}t^{j+1}+t(n-j)(1+t)p_j(t)+t(1+t)\tfrac{d}{dt}\tildep_j(t)\bigg)		\\
			 &= 
			 (1+t)^{n-(j+1)}\bigg(\tfrac{n!}{(n-(j+1))!}t^{j+1}+(1+t)\Big(\underbrace{t(n-j)p_j(t)+t\tfrac{d}{dt}\tildep_j(t)}_{p_{j+1}(t)}\Big)\bigg)	\\
			 &= 
			 (1+t)^{n-(j+1)}\bigg(\tfrac{n!}{(n-(j+1))!}t^{j+1}+(1+t)p_{j+1}(t)\bigg),		 
		\end{split}
	\end{equation*}
	for any $t\in (-1,0)$, where $p_{j+1}(t)=t(n-j)p_j(t)+t\tfrac{d}{dt}\tildep_j(t)$ is a polynomial.	
	With the completion of the induction step, we have successfully demonstrated the validity of the claim.
	Thus, we complete the proof of Lemma~\ref{lem:combinations:sum}.
\end{proof}

\subsection{Proof of Proposition~\ref{prop:approx:f:nth:D} Based on Lemma~\ref{lem:combinations:sum}}
\label{sec:proof:prop:approx:f:nth:D:with:lemma}

Equipped with Lemma~\ref{lem:combinations:sum}, we are prepared to demonstrate the proof of Proposition~\ref{prop:approx:f:nth:D}.
\begin{proof}[Proof of Proposition~\ref{prop:approx:f:nth:D}]
We may assume $n\in \N^+$ since the case $n=0$ is trivial.
	For each $x\in [a,b]$, we define
	\begin{equation*}
		g_x(t)\coloneqq \sum_{\ell =0}^{n}(-1)^\ell \binom{n}{\ell } f(x+\ell t) \quad \tn{for any $t\in (-c_0,c_0)$},
	\end{equation*}
	where $c_0>0$ is a small number ensuring that $x+\ell t\in (a_0,b_0)$ for $\ell =0,1,\cdots,n$. For example, we can set
	\begin{equation*}
		c_0=\min\Big\{\frac{a-a_0}{n+1},\, \frac{b_0-b}{n+1}
		\Big\}.
	\end{equation*}

It follows from $f\in C^n\big((a_0,b_0)\big)$ that $f^{(n)}$ is continuous on $(a_0,b_0)$, implying
 $f^{(n)}$ is 
 uniformly continuous on $[a-nc_0,b+nc_0]\subseteq (a_0,b_0)$.
	For any $\eps>0$, there exists $\delta_0\in (0,c_0)$ such that
	\begin{equation}
		\label{eq:f:nth:derivative:eps}
		\big|f^{(n)}(x_1)-f^{(n)}(x_2)\big|<\tfrac{\eps}{C_n}\quad \tn{if $|x_1-x_2|<n\delta_0$\quad for any $x_1,x_2\in [a-nc_0,b+nc_0]$,}
	\end{equation}
	where 
	$C_n=\sum_{j=0}^n j^n \binom{n}{j}$.
	
%
		
	For each $x\in [a,b]$, we have
	\begin{equation*}
		g_x^{(i)}(t)=\sum_{\ell =0}^{n} (-1)^\ell \binom{n}{\ell }\ell^i  f^{(i)}(x+\ell t)\quad \tn{for any $t\in (-c_0,c_0)$ and $i=0,1,\cdots,n$,}
	\end{equation*}
	implying
	\begin{equation*}
		g_x^{(i)}(0)=\sum_{\ell =0}^{n} (-1)^\ell \binom{n}{\ell }\ell^i  f^{(i)}(x)
		=0\quad \tn{for $i=0,1,\cdots,n-1$,}
	\end{equation*}		
	where the last equality comes from Lemma~\ref{lem:combinations:sum}.

	Then for any $t\in (-\delta_0,0)\cup(0,\delta_0)$ and each $x\in [a,b]$,
	by Cauchy's mean value theorem, there exist 
	$0<|t_{x,n}|<\cdots<|t_{x,1}|<|t|<\delta_0$ such that
	\begin{equation*}
		\begin{split}
			\frac{g_x(t)}{t^n}
			&=\frac{g_x^{(0)}(t)-g_x^{(0)}(0)}{ t^n-0}
			=\frac{g_x^{(1)}(t_{x,1})}{nt_{x,1}^{n-1}}
			=\frac{g_x^{(1)}(t_{x,1})-g_x^{(1)}(0)}{nt_{x,1}^{n-1}-0}\\
			&=\frac{g_x^{(2)}(t_{x,2})}{ n(n-1)t_{x,2}^{n-2}}=\frac{g_x^{(2)}(t_{x,2})-g_x^{(2)}(0)}{ n(n-1)t_{x,2}^{n-2}-0}
			=\frac{g_x^{(3)}(t_{x,3})}{ n(n-1)(n-2)t_{x,3}^{n-3}}=\cdots 
			=  \frac{g_x^{(n)}(t_{x,n})}{ n!}.
		\end{split}
	\end{equation*}

	Moreover, for any $t\in (-\delta_0,0)\cup(0,\delta_0)$ and each $x\in [a,b]\subseteq[a-nc_0,b+nc_0]$, we have
	\begin{equation*}
		|(x+\ell t_{x,n})-x|=|\ell t_{x,n}|\le |n t_{x,n}|< n\delta_0<nc_0\quad \tn{and}\quad   x+\ell t_{x,n}\in [a-nc_0,b+nc_0], 
	\end{equation*}
	for $\ell =0,1,\cdots,n$, from which we deduce 
	\begin{equation*}
		\big| f^{(n)}(x+\ell t_{x,n})- f^{(n)}(x)\big|<\frac{\eps}{C_n}=\frac{\eps}{ \sum_{j=0}^n j^n \binom{n}{j}},
	\end{equation*}
	where the strict inequality comes from Equation~\eqref{eq:f:nth:derivative:eps}.

		Set $\lambda_\ell =\frac{ (-1)^\ell \binom{n}{\ell }\ell^n}{(-1)^n\,n!}$
	for $\ell =0,1,\cdots,n$. By Lemma~\ref{lem:combinations:sum}, we have
	\begin{equation*}
		\sum_{\ell =0}^n \lambda_\ell =\sum_{\ell =0}^n \frac{ (-1)^\ell \binom{n}{\ell }\ell^n}{(-1)^n\,n!}
		= \frac{ \sum_{\ell =0}^n(-1)^\ell \binom{n}{\ell }\ell^n}{(-1)^n\,n!}
		=\frac{{(-1)^n\,n!}}{{(-1)^n\,n!}}=1.
	\end{equation*}
	
	Therefore, for any $t\in (-\delta_0,0)\cup(0,\delta_0)$ and each $x\in [a,b]$, we have
	\begin{equation*}
		\begin{split}
			&\bigg|\frac{\sum_{\ell =0}^{n}(-1)^\ell \binom{n}{\ell } f(x+\ell t)}{(-t)^n}-f^{(n)}(x)\bigg|
			=\bigg|\frac{g_x(t)}{(-1)^nt^n}-f^{(n)}(x)\bigg|
			=\bigg|\frac{g_x^{(n)}(t_{x,n})}{(-1)^n\,n!}-f^{(n)}(x)\bigg|\\
			=\;& 
			\bigg|\frac{\sum_{\ell =0}^{n} (-1)^\ell \binom{n}{\ell }\ell^nf^{(n)}(x+\ell t_{x,n})}{(-1)^n\,n!}-f^{(n)}(x)\bigg|
			=\bigg|\sum_{\ell =0}^{n}\lambda_\ell  f^{(n)}(x+\ell t_{x,n})-f^{(n)}(x)\bigg|\\
			=\;& \bigg|\sum_{\ell =0}^{n}\lambda_\ell  f^{(n)}(x+\ell t_{x,n})-\sum_{\ell =0}^{n}\lambda_\ell  f^{(n)}(x)\bigg|
           \le \sum_{\ell =0}^{n}|\lambda_\ell |	\cdot \big| f^{(n)}(x+\ell t_{x,n})- f^{(n)}(x)\big|	\\
			<\;& \sum_{\ell =0}^{n}|\lambda_\ell |\cdot\frac{\eps}{C_n}
			= \sum_{\ell =0}^n \frac{\ell^n \binom{n}{\ell }}{n!}\cdot\frac{\eps}{ \sum_{j=0}^n j^n \binom{n}{j}}\le \sum_{\ell =0}^n \ell^n \binom{n}{\ell }\cdot\frac{\eps}{ \sum_{j=0}^n j^n \binom{n}{j}}=\eps.
		\end{split}
	\end{equation*}
	Since $\eps>0$ and $x\in [a,b]$ are arbitrary, we can conclude that
		\begin{equation*}
		\frac{\sum_{\ell =0}^{n}(-1)^\ell \binom{n}{\ell } f(x+\ell t) }{(-t)^n} 
		\rightrightarrows f^{(n)}(x)\quad\tn{as}\  t\to0\quad \tn{for any $x\in [a,b]$.}
	\end{equation*}
	So we finish the proof of Proposition~\ref{prop:approx:f:nth:D}.
\end{proof}

\section{Proof of Proposition~\ref{prop:approx:ReLU:scrA:1k}}
\label{sec:proof:prop:approx:ReLU:scrA:1k}

The objective of this section is to provide the proof of Proposition~\ref{prop:approx:ReLU:scrA:1k}. To streamline the proof process, we first introduce a lemma in Section~\ref{sec:lemma:proof:prop:approx:ReLU:scrA:1k}. Subsequently, we present the comprehensive proof in Section~\ref{sec:proof:prop:approx:ReLU:scrA:1k:with:lemma}.

\subsection{A Lemma for Proving Proposition~\ref{prop:approx:ReLU:scrA:1k}}
\label{sec:lemma:proof:prop:approx:ReLU:scrA:1k}

\begin{lemma}
    \label{lem:approx:x}
    Suppose $f:\R\to\R$ is a function with $f^\prime(x_0)\neq 0$ for some $x_0\in\R$.
 Then for any $M>0$, it holds that
 \begin{equation*}
     \frac{f(x_0+\eps x)-f(x_0)}{\eps  f^\prime(x_0)}\rightrightarrows
     x\quad \tn{as}\   \eps\to 0^+\quad \tn{for any $x\in [-M,M]$.}
 \end{equation*}
\end{lemma}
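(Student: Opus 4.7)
The plan is to recognize this statement as essentially the uniform version of the usual derivative limit $\frac{f(x_0+h)-f(x_0)}{h} \to f^\prime(x_0)$, exploited by setting $h = \eps x$ and noting that the factor of $x$ is bounded on $[-M,M]$.

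Concretely, I would define the Newton quotient
\begin{equation*}
    g(h) \coloneqq \frac{f(x_0+h)-f(x_0)}{h}\quad\text{for }h\neq 0,
\end{equation*}
so that by the hypothesis $f^\prime(x_0)\neq 0$ exists, $g(h)\to f^\prime(x_0)$ as $h\to 0$. For $x\in[-M,M]\setminus\{0\}$ and $\eps>0$, plugging in $h = \eps x$ gives
\begin{equation*}
    \frac{f(x_0+\eps x)-f(x_0)}{\eps f^\prime(x_0)} - x
    = \frac{x}{f^\prime(x_0)}\bigl(g(\eps x) - f^\prime(x_0)\bigr),
\end{equation*}
while at $x=0$ the left-hand expression and $x$ both vanish, so the difference is $0$ trivially.

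Now I would make the convergence uniform. Given any target accuracy $\delta>0$, the differentiability of $f$ at $x_0$ provides $\eta>0$ such that $|g(h)-f^\prime(x_0)|<\delta |f^\prime(x_0)|/M$ whenever $0<|h|<\eta$. Choose $\eps_0 \coloneqq \eta/(M+1)$. Then for every $\eps\in(0,\eps_0)$ and every $x\in[-M,M]$ with $x\neq 0$, we have $0<|\eps x|\le \eps M<\eta$, hence
\begin{equation*}
    \left|\frac{f(x_0+\eps x)-f(x_0)}{\eps f^\prime(x_0)} - x\right|
    \le \frac{|x|}{|f^\prime(x_0)|}\cdot \frac{\delta|f^\prime(x_0)|}{M}
    \le \delta,
\end{equation*}
and the bound also holds (trivially) at $x=0$. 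Since $\delta$ was arbitrary, this yields the required uniform convergence on $[-M,M]$.

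There is no real obstacle here; the only subtlety is ensuring the bound is uniform in $x$, which is automatic once one factors out $x/f^\prime(x_0)$ and observes $|x|\le M$. Handling the degenerate point $x=0$ is also immediate since both sides of the claimed limit vanish there, so no separate argument is needed beyond noting it.
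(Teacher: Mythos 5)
Your proof is correct and is essentially the same argument as the paper's: the paper writes $f(z)=f(x_0)+f'(x_0)(z-x_0)+h(z)(z-x_0)$ via Taylor's theorem with Peano remainder, which is just a repackaging of your Newton quotient (their $h(x_0+\eps x)$ is exactly your $g(\eps x)-f'(x_0)$), and both proofs conclude by bounding the error term $\tfrac{x}{f'(x_0)}\bigl(g(\eps x)-f'(x_0)\bigr)$ uniformly using $|x|\le M$. Your explicit $\delta$--$\eps_0$ bookkeeping and the remark about $x=0$ are fine.
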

\begin{proof}
    By Taylor's theorem with Peano's form of remainder, there exists $h:\R \to \R$ such that $\lim_{\eta\to 0}h(x_0+\eta)=0$ and 
\begin{equation*}
\begin{split}
f(z)=f(x_0)+f^\prime(x_0)(z-x_0)+h(z)(z-x_0)\quad \tn{for any $z\in\R$.}
\end{split}
\end{equation*}
By substituting $z$ with $x_0+\eps x$ in the above equation, for any $x\in[-M,M]$ and $\eps>0$, we obtain
\begin{equation*}
	\begin{split}
	    \frac{f(x_0+\eps x)-f(x_0)}{\eps 
 f^\prime(x_0)}=
 \frac{f(x_0)+f^\prime(x_0)(\eps  x)+h(x_0+\eps  x)(\eps  x)-f(x_0)}{\eps 
 f^\prime(x_0)}=x+\frac{h(x_0+\eps  x)  x}{
 f^\prime(x_0)}.
	\end{split}
\end{equation*}
It follows from $\lim_{\eta\to 0}h(x_0+\eta)=0$ that
\begin{equation*}
	\frac{h(x_0+\eps  x)  x}{
 f^\prime(x_0)}\rightrightarrows 0\quad \tn{as}\   \eps\to 0^+\quad \tn{for any $x\in [-M,M]$,}
\end{equation*}
from which we deduce
\begin{equation*}
	\frac{f(x_0+\eps x)-f(x_0)}{\eps 
 f^\prime(x_0)}\rightrightarrows  x\quad \tn{as}\   \eps\to 0^+\quad \tn{for any $x\in [-M,M]$.}
\end{equation*}
So we finish the proof of 
Lemma~\ref{lem:approx:x}.
\end{proof}

\subsection{Proof of Proposition~\ref{prop:approx:ReLU:scrA:1k} Based on Lemma~\ref{lem:approx:x}}
\label{sec:proof:prop:approx:ReLU:scrA:1k:with:lemma}

With Lemma~\ref{lem:approx:x} in hand, we are ready to present the proof of Proposition~\ref{prop:approx:ReLU:scrA:1k}.
\begin{proof}[Proof of Proposition~\ref{prop:approx:ReLU:scrA:1k}]
	Given any $\eps\in (0,1)$, our goal is to construct $\phi_\eps\in \nnOneD{\varrho}{(k+2)}{1}{\R}{\R}$ with $\varrho\in \scrA_{1,k}$ to approximate \ReLU\  well on $[-M,M]$.
	
	Clearly, there exist  $a_0<b_0$ and $x_0\in (a_0,b_0)$ such that $\varrho\in C^k\big((a_0,b_0)\big)$ and 
	\begin{equation*}
		L_1=\lim_{t\to 0^-}\tfrac{\varrho^{(k)}(x_0+t)-\varrho^{(k)}(x_0)}{t}\neq L_2=\lim_{t\to 0^+}\tfrac{\varrho^{(k)}(x_0+t)-\varrho^{(k)}(x_0)}{t}.
	\end{equation*}
Set
\begin{equation*}
	c_0=\min\big\{\tfrac{b_0-x_0}{2},\, \tfrac{x_0-a_0}{2}\big\}\quad 
	\tn{and}\quad 
 	K=\max\Big\{1,\; \big|\tfrac{1}{L_2-L_1}\big|,\; \big|\tfrac{L_1}{L_2-L_1}\big|\Big\}.
\end{equation*}
There exists a small $\delta_\eps\in (0,c_0)$ such that
\begin{equation*}
	\Big|\tfrac{\varrho^{(k)}(x_0+t  )-\varrho^{(k)}(x_0)}{t}-\big(L_1\cdot\one_{\{t<0\}}+L_2\cdot\one_{\{t>0\}}\big)\Big|
	<{\eps}/{(4KM)}
\end{equation*}
for any $t\in (-\delta_\eps,0)\cup(0,\delta_\eps)$.
Define $r_\eps\coloneqq
    \min\big\{\eps,\tfrac{\delta_\eps}{3M}\big\}$ and 
\begin{equation*}
    \psi_\eps(x)\coloneqq
    \frac{\varrho^{(k)}(x_0+r_\eps x)-\varrho^{(k)}(x_0)}{r_\eps}
    \quad \tn{for any $x\in\R$.}
\end{equation*}
Clearly, $\psi_\eps(0)=0$. Moreover, since
\(0<r_\eps\le \frac{\delta_\eps}{3M}<\frac{\delta_\eps}{2M}\),
we have
$r_\eps x\in(-\delta_\eps,0)\cup(0,\delta_\eps)$ for any
$x\in[-2M,0)\cup(0,2M]$, implying
\begin{equation*}
	\begin{split}
		&\Big|\psi_\eps(x)-\big(L_1\cdot\one_{\{x<0\}}+L_2\cdot\one_{\{x>0\}}\big)x\Big|
		\le |x|\cdot\Big|\psi_\eps(x)/x-\big(L_1\cdot\one_{\{x<0\}}+L_2\cdot\one_{\{x>0\}}\big)\Big|\\
		=\; &|x|\cdot\Big|\tfrac{\varrho^{(k)}(x_0+r_\eps x)-\varrho^{(k)}(x_0)}{r_\eps  x}-\big(L_1\cdot\one_{\{r_\eps  x<0\}}+L_2\cdot\one_{\{r_\eps  x>0\}}\big)\Big|< 2M\cdot \tfrac{\eps}{4K M}=\eps/(2K).
	\end{split}
\end{equation*}
Thus, for each $\eps\in  (0,1)$, we have
\begin{equation*}
	\Big|\psi_\eps(x)-\big(L_1\cdot\one_{\{x<0\}}+L_2\cdot\one_{\{x>0\}}\big)x\Big|<\eps/(2K)\quad  \tn{for any $x\in [-2M,2M]$,}
\end{equation*}
implying
\begin{equation}\label{eq:psieps:minus:psi}
	\big|\psi_\eps(x)-\psi(x)\big|<\eps/(2K) \quad \tn{for any $x\in [-2M,2M]$,}
\end{equation}
where
\begin{equation*}
	\tn{$\psi(x)\coloneqq\big(L_1\cdot\one_{\{x<0\}}+L_2\cdot\one_{\{x>0\}}\big)x$\quad for any $x\in\R$.}
\end{equation*}
Moreover, for any $x\in\R$, we have
\begin{equation*}
	\begin{split}
		\psi(x)-L_1 x
		&=\big(L_1\cdot\one_{\{x<0\}}+L_2\cdot\one_{\{x>0\}}\big)x-
		L_1 x\big(\one_{\{x<0\}}+\one_{\{x>0\}}\big)\\
		&=
		(L_2-L_1)\cdot\one_{\{x>0\}}\cdot  x=(L_2-L_1)\cdot\ReLU(x),
	\end{split}
\end{equation*}
from which we deduce
\begin{equation*}
    \tfrac{1}{L_2-L_1}\psi(x)-\tfrac{L_1}{L_2-L_1}x=\ReLU(x).
\end{equation*}
To construct a $\varrho$-activated network to  approximate \ReLU\ well, we only need to construct $\varrho$-activated networks to effectively approximate $\psi(x)$ and $x$ for any $x\in [-M,M]$. 
When $k\ge 1$ and $\varrho^\prime(x_1)\neq  0$, we have the option to employ $\tfrac{\varrho(x_1+\eta x)-\varrho(x_1)}{\eta \varrho^\prime(x_1)}$ for a sufficiently accurate approximation of $x$ when $\eta$ is small. However, in the scenario where $k=0$, this approach is not applicable. As a result, we will split the remainder of the proof into two cases: one where $k=0$ and the other where $k\ge 1$.

\mycase{1}{$k=0$.}
First, let us consider the case of $k=0$. 
In this case,  $\varrho^{(k)}=\varrho$.
 For each $\eps\in  (0,1)$ and any $x\in [-M,M]$, we have $x-M\in [-2M,0]\subseteq [-2M,2M]$, 
 and by combining this with Equation~\eqref{eq:psieps:minus:psi}, we deduce
\begin{equation}\label{eq:approx:L1x}
\begin{split}
    	\eps/(2K) & > 
     \Big|\psi_\eps(x-M)-\psi(x-M)\Big|\\
     & =\Big|\psi_\eps(x-M)-\big(L_1\cdot\one_{\{x-M<0\}}+L_2\cdot\one_{\{x-M>0\}}\big)(x-M)\Big|\\
     & =\Big|\psi_\eps(x-M)-L_1(x-M)\Big|=
     \Big|\psi_\eps(x-M)+L_1 M -L_1 x\Big|.
\end{split}
\end{equation}

Define
\begin{equation*}
	\begin{split}
	    \phi_\eps(x)\coloneqq \;&\tfrac{1}{L_2-L_1}\psi_\eps(x)-\tfrac{1}{L_2-L_1}\Big(
 \psi_\eps(x-M)+L_1 M\Big)\\
 =\; & \tfrac{1}{L_2-L_1}\tfrac{\varrho(x_0+r_\eps x)-\varrho(x_0)}{r_\eps}-\tfrac{1}{L_2-L_1}\Big(\tfrac{\varrho(x_0+r_\eps (x-M))-\varrho(x_0)}{r_\eps}+L_1 M\Big)
	\end{split}
\end{equation*}
for any $x\in\R$.
It is easy to verify that $\phi_\eps\in \nnOneD{\varrho}{2}{1}{\R}{\R}=\nnOneD{\varrho}{k+2}{1}{\R}{\R}$. 
Moreover, for each $\eps\in  (0,1)$ and any $x\in [-M,M]$, we have
\begin{equation*}
	\begin{split}
		|\phi_\eps(x)-\ReLU(x)|
		&=  \bigg|\underbrace{\tfrac{1}{L_2-L_1}\psi_\eps(x)-\tfrac{1}{L_2-L_1}\Big(
 \psi_\eps(x-M)+L_1 M\Big)}_{\phi_\eps}-\Big(\underbrace{\tfrac{1}{L_2-L_1}\psi(x)-\tfrac{L_1}{L_2-L_1}x}_{\ReLU}\Big)\bigg|\\
		& \le  \big|\tfrac{1}{L_2-L_1}\big|\cdot\big|\psi_\eps(x)-\psi(x)\big|+\big|\tfrac{1}{L_2-L_1}\big|\cdot\Big|\Big(
 \psi_\eps(x-M)+L_1 M\Big)-L_1x\Big|\\
		&<   K\cdot \tfrac{\eps}{2K}+K\cdot \tfrac{\eps}{2K}=\eps,
	\end{split}
\end{equation*}
where the strict inequality comes from Equations~\eqref{eq:psieps:minus:psi} and \eqref{eq:approx:L1x}.
Therefore, we can conclude that
	\begin{equation*}
	\phi_\eps(x)\rightrightarrows \ReLU(x)\quad \tn{as}\   \eps\to 0^+\quad \tn{for any $x\in [-M,M]$.}
\end{equation*}
That means we finish the proof for the case of $k=0$.

\mycase{2}{$k\ge 1$.}
Next, let us consider the case of $k\ge 1$. 
Define
\begin{equation*}
	\tildephi_{\eps}(x)\coloneqq \tfrac{1}{L_2-L_1}\psi_\eps(x)-\tfrac{L_1}{L_2-L_1}x\quad \tn{for any $x\in\R$.}
\end{equation*}
Then by Equation~\eqref{eq:psieps:minus:psi}, for each $\eps\in  (0,1)$ and any $x\in [-M,M]\subseteq [-2M,2M]$, we have
\begin{equation}\label{eq:tildephieps:minus:ReLU}
	\begin{split}
		&\big|\tildephi_\eps(x)-\ReLU(x)\big|
		=  \Big|\Big(\tfrac{1}{L_2-L_1}\psi_\eps(x)-\tfrac{L_1}{L_2-L_1}x\Big)-\Big(\tfrac{1}{L_2-L_1}\psi(x)-\tfrac{L_1}{L_2-L_1}x\Big)\Big|\\
		= \;& \Big|\tfrac{1}{L_2-L_1}\psi_\eps(x)-\tfrac{1}{L_2-L_1}\psi(x)\Big|
		 \le  \big|\tfrac{1}{L_2-L_1}\big|\cdot\big|\psi_\eps(x)-\psi(x)\big|<   K\cdot \tfrac{\eps}{2K}=\eps/2.
	\end{split}
\end{equation}
Our goal is to use a $\varrho$-activated network to effectively approximate
\begin{equation*}
	\tildephi_{\eps}(x)= \tfrac{1}{L_2-L_1}\psi_\eps(x)-\tfrac{L_1}{L_2-L_1}x
	=\tfrac{1}{L_2-L_1}\tfrac{\varrho^{(k)}(x_0+r_\eps x)-\varrho^{(k)}(x_0)}{r_\eps}-\tfrac{L_1}{L_2-L_1}x
\end{equation*}
for any $x\in [-M,M]$ and $\eps\in (0,1)$. To this end,
we need to construct $\varrho$-activated networks to  effectively approximate 
$\varrho^{(k)}(x_0+r_\eps x)$ and $x$ for any $x\in [-M,M]$ and $\eps\in (0,1)$.

Recall that  $\varrho\in C^k\big((a_0,b_0)\big)\backslash C^{k+1}\big((a_0,b_0)\big)$ with $k\ge 1$.
Then there exists $x_1\in (a_0,b_0)$ such that $\varrho^\prime(x_1)\neq 0$.
For each $\eta\in(0,1)$, we define
\begin{equation*}
	g_{\eta}(x)\coloneqq\frac{\varrho(x_1+\eta x)-\varrho(x_1)}{\eta\varrho^\prime(x_1)}\quad \tn{for any $x\in \R$.}
\end{equation*}
By Lemma~\ref{lem:approx:x}, 
\begin{equation*}
	g_{\eta}(x)=\frac{\varrho(x_1+\eta x)-\varrho(x_1)}{\eta\varrho^\prime(x_1)}\rightrightarrows  x\quad \tn{as}\   \eta\to 0^+\quad \tn{for any $x\in [-M,M]$.}
\end{equation*}

For each $\eta\in (0,1)$, we define
\begin{equation*}
	h_\eta(z)\coloneqq\frac{\sum_{i=0}^{k}(-1)^i\binom{k}{i} \varrho(z+i\eta) }{(-\eta)^k}\quad \tn{for any $z\in\R$.}
\end{equation*}
Recall that 
$c_0=\min\big\{\tfrac{b_0-x_0}{2},\, \tfrac{x_0-a_0}{2}\big\}$ and $\varrho\in C^k\big((a_0,b_0)\big)$. By Proposition~\ref{prop:approx:f:nth:D},
\begin{equation*}
	h_\eta(z)=\frac{\sum_{i=0}^{k}(-1)^i\binom{k}{i} \varrho(z+i\eta) }{(-\eta)^k} 
	\rightrightarrows \varrho^{(k)}(z)\quad\tn{as}\  \eta\to0\quad \tn{for any $z\in [x_0-c_0,x_0+c_0]$.}
\end{equation*}

Then there exists $\eta_\eps\in (0,1)$ such that
\begin{equation*}
	\big|g_{\eta_\eps}(x)-x\big|<r_\eps/(4K)\quad \tn{for any $x\in [-M,M]$}
\end{equation*}
and 
\begin{equation*}
	\big|	h_{\eta_\eps}(z) - \varrho^{(k)}(z)\big|<r_\eps^2/(4K)\quad \tn{for any $z\in [x_0-c_0,x_0+c_0]$.}
\end{equation*}				
	
Next, we can define the desired $\phi_\eps$ via
\begin{equation*}
	\begin{split}
		\phi_{\eps}(x)\coloneqq\;&
		\frac{1}{L_2-L_1}\frac{h_{\eta_\eps}(x_0+r_\eps x)-\varrho^{(k)}(x_0)}{r_\eps}-\frac{L_1}{L_2-L_1}g_{\eta_\eps}(x)\\
		=\;&\frac{\sum_{i=0}^{k}(-1)^i\binom{k}{i} \varrho(x_0+r_\eps x+i\eta_\eps) -(-\eta_\eps)^k\varrho^{(k)}(x_0)}{(-\eta_\eps)^k(L_2-L_1)r_\eps}-\frac{L_1\varrho(x_1+\eta_\eps x)-L_1\varrho(x_1)}{(L_2-L_1)\eta_\eps\varrho^\prime(x_1)}
	\end{split}
\end{equation*}	
for any $x\in \R$. It is easy to verify that $\phi_\eps\in \nnOneD{\varrho}{k+2}{1}{\R}{\R}$. Moreover, since $0< r_\eps\le   \tfrac{\delta_\eps}{3M} \le \tfrac{c_0}{3M}$, we have $x_0+r_\eps  x \in [x_0-c_0,x_0+c_0]$  for any $x\in [-M,M]$, implying
\begin{equation*}
	\begin{split}
		&\big|\phi_{\eps}(x)-\tildephi_{\eps}(x)\big|\\
		=\;& \Big|\Big(	\tfrac{1}{L_2-L_1}\tfrac{h_{\eta_\eps}(x_0+r_\eps x)-\varrho^{(k)}(x_0)}{r_\eps}-\tfrac{L_1}{L_2-L_1}g_{\eta_\eps}(x)\Big)-\Big(\tfrac{1}{L_2-L_1}\tfrac{\varrho^{(k)}(x_0+r_\eps x)-\varrho^{(k)}(x_0)}{r_\eps}-\tfrac{L_1}{L_2-L_1}x\Big)\Big|\\
		\le \;& \big|\tfrac{1}{L_2-L_1}\big|\cdot\Big|	\tfrac{h_{\eta_\eps}(x_0+r_\eps x)-\varrho^{(k)}(x_0)}{r_\eps}-\tfrac{\varrho^{(k)}(x_0+r_\eps x)-\varrho^{(k)}(x_0)}{r_\eps}\Big|
		+\big|\tfrac{L_1}{L_2-L_1}\big|\cdot\big|g_{\eta_\eps}(x)-x\big|\\
		\le \;& \tfrac{1}{r_\eps}\big|\tfrac{1}{L_2-L_1}\big|\cdot\Big|	h_{\eta_\eps}(x_0+r_\eps x)-\varrho^{(k)}(x_0+r_\eps x)\Big|
		+K\cdot \tfrac{r_\eps}{4K}
		\\  \le    \;&   \tfrac{1}{r_\eps}K\cdot\tfrac{r_\eps^2}{4K}
		+K\cdot \tfrac{r_\eps}{4K}=r_\eps/2\le \eps/2.
	\end{split}
\end{equation*}
Combining this with Equation~\eqref{eq:tildephieps:minus:ReLU}, we can conclude that
\begin{equation*}
	\big|\phi_\eps(x)-\ReLU(x)\big|\le \big|\phi_\eps(x)-\tildephi_\eps(x)\big|+\big|\tildephi_\eps(x)-\ReLU(x)\big|<\eps/2+\eps/2=\eps,
\end{equation*}
for each $\eps\in  (0,1)$ and any $x\in [-M,M]$. That means
	\begin{equation*}
	\phi_\eps(x)\rightrightarrows \ReLU(x)\quad \tn{as}\   \eps\to 0^+\quad \tn{for any $x\in [-M,M]$.}
\end{equation*}
	So we finish the proof of Proposition~\ref{prop:approx:ReLU:scrA:1k}.
\end{proof}

\section{Proof of Proposition~\ref{prop:approx:ReLU:scrA:2:3}}
\label{sec:proof:prop:approx:ReLU:scrA:23}

We will prove Proposition~\ref{prop:approx:ReLU:scrA:2:3} in this section. To this end, we first establish two lemmas in Section~\ref{sec:lemmas:proof:prop:approx:ReLU:scrA:23}, which play important roles in proving Proposition~\ref{prop:approx:ReLU:scrA:2:3}. Next,
we give the detailed proof of Proposition~\ref{prop:approx:ReLU:scrA:2:3}  in Section~\ref{sec:proof:thm:approx:ReLU:with:lemmas} based on these two lemmas.

\subsection{Lemmas for Proving Proposition~\ref{prop:approx:ReLU:scrA:2:3}}
\label{sec:lemmas:proof:prop:approx:ReLU:scrA:23}

\begin{lemma}\label{lem:xy}
	Given any $A>0$ and a function $\varrho:\R\to\R$, suppose there exists $x_0\in\R$ satisfying $\varrho^\dprime(x_0)\neq 0$. Then there exists
	\begin{equation*}
		\phi_\eps\in \nnOneD{\varrho}{3}{1}{\R^2}{\R}\quad \tn{for each $\eps\in (0,1)$}
	\end{equation*}  
	such that
	\begin{equation*}
		\phi_\eps(x,y)\rightrightarrows  xy\quad \tn{as $\eps\to0^+$}\quad \tn{for any $x,y\in[-A,A]$.}
	\end{equation*}
\end{lemma}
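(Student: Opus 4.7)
The plan is to exploit the polarization identity $(x+y)^2-x^2-y^2=2xy$ together with the classical second-order finite difference that extracts $\varrho^\dprime(x_0)$. Concretely, I would take
\begin{equation*}
    \phi_\eps(x,y) \coloneqq \frac{\varrho\big(x_0+\eps(x+y)\big)-\varrho(x_0+\eps x)-\varrho(x_0+\eps y)+\varrho(x_0)}{\eps^2\,\varrho^\dprime(x_0)}.
\end{equation*}
This expression is realized by a $\varrho$-activated network of width $3$ and depth $1$: the three hidden neurons evaluate $\varrho$ at the affine inputs $\eps x+\eps y+x_0$, $\eps x+x_0$, and $\eps y+x_0$, and the single output affine map performs the subtraction, the $1/(\eps^2\varrho^\dprime(x_0))$ scaling, and folds $\varrho(x_0)/(\eps^2\varrho^\dprime(x_0))$ into its bias.

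To verify uniform convergence on $[-A,A]^2$, I would invoke Taylor's theorem with Peano's remainder, which is legitimate because $\varrho^\dprime(x_0)$ exists in the classical sense. This produces a function $r$ with $r(h)\to 0$ as $h\to 0$ and
\begin{equation*}
    \varrho(x_0+h)=\varrho(x_0)+h\,\varrho^\prime(x_0)+\tfrac{h^2}{2}\varrho^\dprime(x_0)+h^2\, r(h).
\end{equation*}
Substituting $h=\eps(x+y),\,\eps x,\,\eps y$ and combining the three expansions with signs $+,-,-$, the constant terms collapse to $-\varrho(x_0)$ (which is exactly compensated by the $+\varrho(x_0)$ in the numerator of $\phi_\eps$), the linear terms vanish via $(x+y)-x-y=0$, and the quadratic terms collapse to $\tfrac{\eps^2}{2}\bigl((x+y)^2-x^2-y^2\bigr)\varrho^\dprime(x_0)=\eps^2 xy\,\varrho^\dprime(x_0)$. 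After dividing by $\eps^2\varrho^\dprime(x_0)$ the main contribution is therefore $xy$, and the residual is
\begin{equation*}
    E_\eps(x,y)=\frac{1}{\varrho^\dprime(x_0)}\Big((x+y)^2\, r\bigl(\eps(x+y)\bigr)-x^2\, r(\eps x)-y^2\, r(\eps y)\Big).
\end{equation*}

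For $(x,y)\in[-A,A]^2$ the three arguments of $r$ all lie in $[-2A\eps,2A\eps]$, which shrinks to $\{0\}$ as $\eps\to 0^+$; hence the three $r$-values tend to $0$ uniformly in $(x,y)$. Combined with the uniform bounds $(x+y)^2\le 4A^2$ and $x^2,y^2\le A^2$, this forces $E_\eps\rightrightarrows 0$, and consequently $\phi_\eps(x,y)\rightrightarrows xy$ on $[-A,A]^2$. The only genuine subtlety I anticipate is confirming that the single hypothesis ``$\varrho^\dprime(x_0)$ exists and is nonzero'' is enough to justify Peano's Taylor expansion; since the classical definition of the second derivative at a point requires $\varrho^\prime$ to be defined on a neighborhood of $x_0$, the expansion is indeed valid, and everything else reduces to the algebraic cancellations above and routine uniform estimation of the remainder on the compact set $[-A,A]^2$.
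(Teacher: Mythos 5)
Your proposal is correct and follows essentially the same route as the paper: the same three-neuron mixed second difference $\bigl(\varrho(x_0+\eps x+\eps y)-\varrho(x_0+\eps x)-\varrho(x_0+\eps y)+\varrho(x_0)\bigr)/\bigl(\eps^2\varrho^\dprime(x_0)\bigr)$, expanded via Taylor's theorem with Peano's remainder, with the linear terms cancelling and the quadratic terms producing $xy$ while the remainder is controlled uniformly on the compact square. Your closing remark about the hypothesis is also the right reading: the existence of $\varrho^\dprime(x_0)$ presupposes $\varrho^\prime$ on a neighborhood of $x_0$, which is exactly what the Peano expansion needs.
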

\begin{proof}
By Taylor's theorem with Peano's form of remainder, there exists $h:\R \to \R$ such that $\lim_{\eta\to 0}h(x_0+\eta)=0$ and 
\begin{equation}
\label{eq:2nd:taylor:expansion}
\begin{split}
\varrho(z)=\varrho(x_0)+\varrho^\prime(x_0)(z-x_0)+\tfrac{\varrho^\dprime(x_0)}{2}(z-x_0)^2+ h(z)(z-x_0)^2\quad \tn{for any $z\in\R$.}
\end{split}
\end{equation}
 
For each $\eps\in (0,1)$, we define
\begin{equation*}
\phi_\eps(x,y)\coloneqq\frac{\varrho(x_0+\eps  x+\eps y)-\varrho(x_0+\eps y)-\varrho(x_0+\eps x)+\varrho(x_0)}{\eps^2\varrho^\dprime(x_0)}
	\quad\tn{for any $x,y\in \R$.}
\end{equation*}
Clearly, $\phi_\eps\in \nnOneD{\varrho}{3}{1}{\R^2}{\R}$.
Moreover, for any $x,y\in [-A,A]$, 
applying Equation~\eqref{eq:2nd:taylor:expansion} with 
$z$ taken as
$x_0+\eps x+\eps  y$, $x_0+\eps x$, and $x_0+\eps y$ therein, we obtain
\begin{equation*}
    \begin{split}       &\phantom{=}\;\;\eps^2\varrho^\dprime(x_0)\phi_\eps(x,y)
    =\varrho(x_0+\eps  x+\eps y)-\varrho(x_0+\eps x)-\varrho(x_0+\eps y)+\varrho(x_0)
    \\
    &=\varrho(x_0)+\varrho^\prime(x_0)(\eps  x+  \eps 
 y)+\tfrac{\varrho^\dprime(x_0)}{2}(\eps  x+  \eps 
 y)^2+ h(x_0+\eps  x+\eps  y)(\eps  x+  \eps 
 y)^2
 \\
 &\phantom{=}\;\; -\Big(\varrho(x_0)+\varrho^\prime(x_0)(\eps  x)+\tfrac{\varrho^\dprime(x_0)}{2}(\eps  x)^2+ h(x_0+\eps  x)(\eps  x)^2\Big)
  \\
 &\phantom{=}\;\;-\Big(\varrho(x_0)+\varrho^\prime(x_0)( \eps 
 y)+\tfrac{\varrho^\dprime(x_0)}{2}( \eps 
 y)^2+ h(x_0+ \eps  y)(\eps 
 y)^2\Big) +\varrho(x_0)
 \\
 &=\varrho^\dprime(x_0)\eps ^2 xy+\Big(h(x_0+\eps  x+\eps  y)(\eps  x+  \eps 
 y)^2-
 h(x_0+\eps  x)(\eps  x)^2-h(x_0+\eps  y)(\eps 
 y)^2\Big),
    \end{split}
\end{equation*}
from which we deduce
\begin{equation*}
    \begin{split}       
    \phi_\eps(x,y)
 &=xy +\frac{h(x_0+\eps  x+\eps  y)(\eps  x+  \eps 
 y)^2-
 h(x_0+\eps  x)(\eps  x)^2-h(x_0+\eps  y)(\eps 
 y)^2}{\eps^2\varrho^\dprime(x_0)}
 \\
 &=xy +\frac{h(x_0+\eps  x+\eps  y)(x+y)^2-
 h(x_0+\eps  x)x^2-h(x_0+\eps  y)y^2}{\varrho^\dprime(x_0)}.
    \end{split}
\end{equation*}
It follows from $\lim_{\eta\to 0}h(x_0+\eta)=0$ that
\begin{equation*}
    h(x_0+\eps  x+\eps  y)(x+y)^2\rightrightarrows 0,\quad 
 h(x_0+\eps  x)x^2\rightrightarrows 0,\quad h(x_0+\eps  y)y^2\rightrightarrows 0
\end{equation*}
as $\eps\to 0^+$ for any $x,y\in[-A,A]$. Consequently, we get
	\begin{equation*}
		\phi_\eps(x,y)\rightrightarrows  xy\quad \tn{as $\eps\to0^+$}\quad \tn{for any $x,y\in[-A,A]$,}
	\end{equation*}
 which means we complete the proof of Lemma~\ref{lem:xy}.
\end{proof}

\begin{lemma}
	\label{lem:approx:ReLU}
	Given any $M>0$ and two functions $g_1,g_{2,\delta}:\R\to\R$ for each $\delta\in (0,1)$, suppose
	\begin{equation*}
		\sup_{x\in\R}|g_1(x)|<\infty,\quad \lim_{x\to-\infty}g_1(x)=0, \quad \lim_{x\to\infty}g_1(x)=1,
	\end{equation*}
	and 
		\begin{equation*}
 g_{2,\delta}(x)\rightrightarrows x\quad  \tn{as}\ \delta\to0^+ \quad \tn{for any $x\in[-M,M]$. }
	\end{equation*}
	Then for any $\eps>0$, there exist $K_\eps>0$ and $\delta_\eps\in (0,1)$ such that
	\begin{equation*}
		\big|g_1(K_\eps x)\cdot g_{2,\delta_\eps}(x)-\ReLU(x)\big|<\eps\quad \tn{for any $x\in [-M,M]$}.
	\end{equation*}
\end{lemma}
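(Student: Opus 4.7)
The strategy is to exploit the identity $\ReLU(x)=x\cdot \one_{\{x>0\}}$ and build a product that approximates each factor: the bounded function $g_{2,\delta_\eps}(x)$ plays the role of $x$ (uniformly on $[-M,M]$ by hypothesis), while $g_1(K_\eps x)$, whose limits at $\pm\infty$ are $1$ and $0$, plays the role of the step function $\one_{\{x>0\}}$ once the scale $K_\eps$ is taken large enough to sharpen the transition around the origin. The essential difficulty is that $g_1(K_\eps x)$ tends to $\one_{\{x>0\}}$ only pointwise away from the origin and never uniformly on $[-M,M]$, so one cannot naively multiply the two approximations. Instead, I will partition the domain into three pieces and handle a thin symmetric neighborhood of the origin separately, using the fact that both $\ReLU(x)$ and the product $g_1(K_\eps x)\,g_{2,\delta_\eps}(x)$ have magnitude of order $|x|$ there.

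Concretely, let $C\coloneqq \sup_{x\in\R}|g_1(x)|$, which is finite by hypothesis, and fix $\eta\in (0,M)$ with $(2C+1)\eta<\eps$. Using the uniform convergence $g_{2,\delta}(x)\rightrightarrows x$ on $[-M,M]$, I would choose $\delta_\eps\in(0,1)$ so that $\sup_{x\in[-M,M]}|g_{2,\delta_\eps}(x)-x|<\min\{\eta,\eps/4\}$; in particular $|g_{2,\delta_\eps}(x)|\le M+1$ throughout $[-M,M]$. Finally, using $\lim_{y\to-\infty}g_1(y)=0$ and $\lim_{y\to\infty}g_1(y)=1$, I would pick $Y>0$ with $|g_1(y)-1|<\eps/(4(M+1))$ for all $y\ge Y$ and $|g_1(y)|<\eps/(4(M+1))$ for all $y\le -Y$, and set $K_\eps\coloneqq Y/\eta$, so that $K_\eps x\ge Y$ whenever $x\ge \eta$ and $K_\eps x\le -Y$ whenever $x\le -\eta$.

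It then remains to verify $|g_1(K_\eps x)\,g_{2,\delta_\eps}(x)-\ReLU(x)|<\eps$ by splitting $[-M,M]$ into the three regions $[\eta,M]$, $[-M,-\eta]$, and $[-\eta,\eta]$. On $[\eta,M]$, $\ReLU(x)=x$ and the triangle inequality $|g_1(K_\eps x)\,g_{2,\delta_\eps}(x)-x|\le |g_1(K_\eps x)-1|\cdot |g_{2,\delta_\eps}(x)|+|g_{2,\delta_\eps}(x)-x|$ yields an error at most $\eps/4+\eps/4=\eps/2$; the region $[-M,-\eta]$ is analogous, with $\ReLU(x)=0$ and $|g_1(K_\eps x)|\cdot |g_{2,\delta_\eps}(x)|<(\eps/(4(M+1)))\cdot (M+1)=\eps/4$. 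On the central region $[-\eta,\eta]$, I would bound $|\ReLU(x)|\le \eta$ and $|g_1(K_\eps x)\,g_{2,\delta_\eps}(x)|\le C\,(|x|+\eta)\le 2C\eta$, so the total error is at most $(2C+1)\eta<\eps$ by the choice of $\eta$.

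The main obstacle is precisely this central region: the approximation of the step function by $g_1(K_\eps x)$ is never uniform near the origin, so the error there cannot be driven to zero by enlarging $K_\eps$. The resolution is to absorb it into the geometric smallness of $|x|$, which forces the three constants $\eta$, $K_\eps$, $\delta_\eps$ to be chosen in the specified order ($\eta$ first from $C$ and $\eps$, then $\delta_\eps$ and $K_\eps$ in terms of $\eta$), rather than independently.
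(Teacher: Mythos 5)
Your proposal is correct and follows essentially the same route as the paper's proof: both exploit $\ReLU(x)=x\cdot \one_{\{x>0\}}$, split the product error by the triangle inequality, and absorb the unavoidable non-uniformity of $g_1(K_\eps\,\cdot\,)\approx \one_{\{\cdot>0\}}$ near the origin using the smallness of $|x|$ there (your explicit three-region partition versus the paper's two-case split on $|x|\lessgtr \eps/(4K_0)$ is only a cosmetic difference). The only nitpick is the claim $|g_{2,\delta_\eps}(x)|\le M+1$, which needs $\min\{\eta,\eps/4\}\le 1$; this is harmless since one may assume $\eps<1$ without loss of generality.
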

\begin{proof}
	Since $\sup_{x\in\R}|g_1(x)|<\infty$, $\lim_{x\to-\infty}g_1(x)=0$, and $
	\lim_{x\to\infty}g_1(x)=1$, 
 we have
 	\begin{equation*}
		K_0=\sup_{x\in\R}|g_1(x)|\in [1,\infty)
	\end{equation*}
and there exists $K_1>0$ such that 
	\begin{equation*}
		\big|g_1(y)\big|<\eps_1 \  \tn{for any $y\le -K_1/4$}\quad \tn{and}\quad \big|g_1(y)-1\big|<\eps_1  \  \tn{for any $y\ge K_1/4$},
	\end{equation*}
	where $\eps_1=\eps/(2M)$. It follows that
	\begin{equation}\label{eq:g1:minus:h}
		\big|g_1(K_0K_1x/\eps)-\one_{\{x>0\}}\big|<\eps_1=\eps/(2M) \quad \tn{if $|x|\ge \eps/(4K_0)$},
	\end{equation}
	Recall that $g_{2,\delta}(x )\rightrightarrows x$ as $\delta\to0^+$ for any $x\in [-M,M]$.
	There exists $\delta_\eps\in (0,1)$ such that
	\begin{equation}\label{eq:g2delta:minus:x}
		\big| g_{2,\delta_\eps}(x) -  x\big|<\eps_2=\eps/(3K_0)\quad \tn{for any $x\in [-M, M]$.}
	\end{equation}
	We observe that $\ReLU(x)=x\cdot \one_{\{x>0\}}$ for any $x\in\R$. Setting $K_\eps=K_0K_1/\eps$ and by Equation~\eqref{eq:g2delta:minus:x}, 
	for any $x\in [-M,M]$, 
	we have
	\begin{equation*}
		\begin{split}
			\big|g_1(K_\eps x)g_{2,\delta_\eps}(x)-\ReLU(x)\big|
			&= \big|g_1(K_\eps x)g_{2,\delta_\eps}(x)-x\cdot \one_{\{x>0\}}\big|\\
			&\le \big|g_1(K_\eps x)g_{2,\delta_\eps}(x)-x g_1(K_\eps  x)\big|
			+ \big|x  g_1(K_\eps x)-x\cdot \one_{\{x>0\}}\big|\\
			&\le \big|g_1(K_\eps  x)\big|\cdot\big|g_{2,\delta_\eps}(x)-x \big|
			+ |x|\cdot\big|g_1(K_\eps x)-\one_{\{x>0\}}\big|\\
			& \le K_0\cdot \eps_2 +|x|\cdot\big|g_1(K_0K_1 x/\eps)-\one_{\{x>0\}}\big|
   \\& = \eps/3 +|x|\cdot\big|g_1(K_0K_1 x/\eps)-\one_{\{x>0\}}\big|.
		\end{split}
	\end{equation*}
	In the case of $|x|<\eps/(4K_0)$, we have
		\begin{equation*}
		\begin{split}
			\big|g_1(K_\eps x)g_{2,\delta_\eps}(x)-\ReLU(x)\big|
			& \le  \eps/3+|x|\cdot\big|g_1(K_0K_1 x/\eps)-\one_{\{x>0\}}\big|\\
			&\le \eps/3+ \tfrac{\eps}{4K_0}\cdot (K_0+1)\le \eps/3+\eps/2<\eps.
		\end{split}
	\end{equation*}
	We may assume $\eps/(4K_0)\le M$ since the proof is complete if $\eps/(4K_0)>M$.
	In the case of $|x|\in [\eps/(4K_0), M]$, by Equation~\eqref{eq:g1:minus:h}, we have
			\begin{equation*}
		\begin{split}
			\big|g_1(K_\eps x)g_{2,\delta_\eps}(x)-\ReLU(x)\big|
			& \le \eps/3 +|x|\cdot\big|g_1(K_0K_1 x/\eps)- \one_{\{x>0\}}\big|\\
			&\le \eps/3 + M\cdot 
		\eps_1 = \eps/3+ M\cdot 
		\tfrac{\eps}{2M}= \eps/3+\eps/2<\eps
		\end{split}
	\end{equation*}
	Therefore, for any $x\in [-M,M]$, we have
			\begin{equation*}
		\begin{split}
			\big|g_1(K_\eps x)g_{2,\delta_\eps}(x)-\ReLU(x)\big|
			<\eps,
		\end{split}
	\end{equation*}
	which means we finish the proof of Lemma~\ref{lem:approx:ReLU}.
\end{proof}

\subsection{Proof of Proposition~\ref{prop:approx:ReLU:scrA:2:3} Based on Lemmas~\ref{lem:xy} and \ref{lem:approx:ReLU}}
\label{sec:proof:thm:approx:ReLU:with:lemmas}



Having established Lemmas~\ref{lem:xy} and \ref{lem:approx:ReLU} in Section~\ref{sec:lemmas:proof:prop:approx:ReLU:scrA:23}, we are now prepared to prove Proposition~\ref{prop:approx:ReLU:scrA:2:3}.

\begin{proof}[Proof of Proposition~\ref{prop:approx:ReLU:scrA:2:3}]
	For any $\eps\in (0,1)$, our goal is to construct 
 	\begin{equation*}
		\phi_\eps\in 
  \begin{cases}
      \nnOneD{\varrho}{1}{1}{\R}{\R} & \tn{if}\  \varrho\in \tildescrA_{2},\\
      \nnOneD{\varrho}{2}{1}{\R}{\R} & \tn{if}\  \varrho\in \scrA_{2},\\
      \nnOneD{\varrho}{3}{2}{\R}{\R} & \tn{if}\  \varrho\in \scrA_3
  \end{cases}
  \end{equation*}
  to approximate \ReLU\  well on $[-M,M]$.
We divide the proof into three cases: $\varrho\in\tildescrA_{2}$,
$\varrho\in\scrA_{2}$, 
and $\varrho\in \scrA_3$.

\mycase{1}{$\varrho\in \tildescrA_{2}$.}
Let us first consider the case of $\varrho\in \tildescrA_{2}$.
The fact $\varrho\in \tildescrA_{2}$ implies that there exist $\tildeb_0,\tildeb_1\in\R$ and $\tildeh:\R\to\R$ such that 
\begin{equation*}
    \sup_{x\in \R}|\tildeh(x)|<\infty,\quad  L_1=\lim_{x\to -\infty} \tildeh(x)\neq L_2=\lim_{x\to \infty} \tildeh(x),\quad 
L_1\cdot  L_2=0,
\end{equation*}
and
\begin{equation*}
    \varrho(x)=(x+\tildeb_0)\cdot \tildeh(x)+\tildeb_1\quad \tn{for any $x\in\R$.}
\end{equation*}
The equality \( L_1 \cdot L_2 = 0 \) indicates that at least one of the values, either \( L_1 \) or \( L_2 \), must be zero.
Set $w_0=L_1+L_2$, $b_0=(L_1+L_2)\tildeb_0$, $b_1=\tildeb_1$, and 
\begin{equation*}
    w_1=\one_{\{L_1=0\}}-\one_{\{L_1\neq 0\}}=\begin{cases}
        1 & \tn{if $L_1=0$},\\
        -1 & \tn{if $L_1\neq 0$}
    \end{cases}
    =\begin{cases}
        1 & \tn{if $L_1=0$},\\
        -1 & \tn{if $L_2 = 0$}.
    \end{cases}
\end{equation*}
Then for any $x\in\R$, we have 
\begin{equation*}
   \begin{split}
       \varrho(x)
    =(x+\tildeb_0)\cdot \tildeh(x)+\tildeb_1
    &=\Big((L_1+L_2)x+(L_1+L_2)\tildeb_0\Big)\cdot \frac{\tildeh(w_1^2x)}{L_1+L_2}+\tildeb_1\\
    &=(w_0x+b_0)\cdot h(w_1x)+b_1,
   \end{split}
\end{equation*}
where $h:\R\to\R$ is defined via 
\begin{equation*}
    h(x)\coloneqq\frac{\tildeh(w_1 x)}{L_1+L_2}\quad \tn{for any $x\in\R$.}
\end{equation*}
It is easy to verify that
$\sup_{x\in \R}|h(x)|<\infty$,
\begin{equation*}
    \lim_{x\to -\infty} h(x)=
   \lim_{x\to -\infty} \frac{\tildeh(w_1 x)}{L_1+L_2}
   =\begin{cases}
       \lim\limits_{x\to -\infty} \frac{\tildeh(x)}{L_1+L_2}=\frac{L_1}{L_1+L_2}=0 & \tn{if $L_1=0$}, \\
      \lim\limits_{x\to -\infty} \frac{\tildeh(-x)}{L_1+L_2}=\frac{L_2}{L_1+L_2}=0   & \tn{if $L_2 = 0$},
   \end{cases}
\end{equation*}
and 
\begin{equation*}
    \lim_{x\to \infty} h(x)=
   \lim_{x\to \infty} \frac{\tildeh(w_1 x)}{L_1+L_2}
   =\begin{cases}
       \lim\limits_{x\to \infty} \frac{\tildeh(x)}{L_1+L_2}=\frac{L_2}{L_1+L_2}=1 & \tn{if $L_1=0$}, \\
      \lim\limits_{x\to \infty} \frac{\tildeh(-x)}{L_1+L_2}=\frac{L_1}{L_1+L_2}=1   & \tn{if $L_2 = 0$}.
   \end{cases}
\end{equation*}
By defining an affine linear map $\calL(x)\coloneqq \tfrac{w_1}{|w_0w_1|}x-\tfrac{b_0}{w_0}$ for any $x\in\R$, we have
\begin{equation}\label{eq:varrho:calL:eq:tildescrA20}
    \varrho\circ\calL(x)=(w_0\calL(x)+b_0)\cdot h\big(w_1\calL(x)\big)+b_1=
    \tfrac{w_0w_1}{|w_0w_1|}x\cdot h\big(w_1\calL(x)\big)+b_1
\end{equation}
for any $x\in\R$.
To make use of Lemma~\ref{lem:approx:ReLU}, we define
\begin{equation*}
    g_1(x)\coloneqq h\big(w_1\calL(x)\big)= h\Big(\tfrac{w_1^2}{|w_0w_1|}x-\tfrac{w_1b_0}{w_0}\Big)\quad \tn{for any $x\in\R$}
\end{equation*}
and 
\begin{equation*}
    g_{2,\delta}(x)\coloneqq  x\quad \tn{ for any $x\in\R$ and each $\delta\in (0,1)$}. 
\end{equation*}
It is worth noting that $\tfrac{w_1^2}{|w_0w_1|}>0$. Consequently, we can deduce that
\begin{equation*}
	\sup_{x\in\R}|g_1(x)|<\infty,\quad \lim_{x\to-\infty}g_1(x)=0,\quad \tn{and}\quad
	\lim_{x\to\infty}g_1(x)=1.
\end{equation*}
According to  Lemma~\ref{lem:approx:ReLU},
there exist $K_\eps>0$ and $\delta_\eps\in (0,1)$ such that
	\begin{equation*}
		\big|g_1(K_\eps x)\cdot g_{2,\delta_\eps}(x)-\ReLU(x)\big|<\eps\quad \tn{for any $x\in [-M, M]$}.
	\end{equation*}
This means
	\begin{equation*}
		\Big|h\big(w_1\calL(K_\eps x)\big)\cdot x -\ReLU(x)\Big|<\eps\quad \tn{for any $x\in [-M,M]$}.
	\end{equation*}

Define 
\begin{equation*}
    \phi_\eps(x)\coloneqq \tfrac{|w_0w_1|}{w_0w_1}\tfrac{1}{K_\eps}\Big(\varrho\circ \calL(K_\eps  x)-b_1\Big)
    =\tfrac{|w_0w_1|}{w_0w_1}\tfrac{1}{K_\eps} \varrho\Big(\tfrac{w_1K_\eps}{|w_0w_1|}  x-\tfrac{b_0}{w_0}\Big)-\tfrac{|w_0w_1|}{w_0w_1}\tfrac{1}{K_\eps}b_1
\end{equation*}
\tn{for any $x\in \R.$}
Clearly, $\phi_\eps\in \nnOneD[]{\varrho}{1}{1}{\R}{\R}$.  
 Furthermore, based on Equation~\eqref{eq:varrho:calL:eq:tildescrA20}, for any $x\in [-M,M]$, we have
\begin{equation*}
    \begin{split}
        \big|\phi_\eps(x)-\ReLU(x)\big|
        &=\Big|\tfrac{|w_0w_1|}{w_0w_1}\tfrac{1}{K_\eps}\Big(\varrho\circ \calL(K_\eps  x)-b_1\Big)-\ReLU(x)\Big|
        \\
        &=\Big|\tfrac{|w_0w_1|}{w_0w_1}\tfrac{1}{K_\eps}
        \overbrace{
        \tfrac{w_0w_1}{|w_0w_1|}(K_\eps  x)\cdot h\big(w_1\calL(K_\eps x)\big)
        }^{= \varrho\circ\calL(K_\eps x)-b_1 \tn{ by \eqref{eq:varrho:calL:eq:tildescrA20}}
        }
        -\ReLU(x)\Big|
        \\
        &=\Big|x\cdot h\big(w_1\calL(K_\eps x)\big)-\ReLU(x)\Big|<\eps.
    \end{split}
\end{equation*}

\mycase{2}{$\varrho\in \scrA_{2}$.}
Next, let us consider the case of $\varrho\in \scrA_{2}$.
The fact $\varrho\in \scrA_{2}$ implies that there exist $b_0,b_1\in\R$ and $h:\R\to\R$ such that  
\begin{equation*}
    \sup_{x\in \R}|h(x)|<\infty,\quad  L_1=\lim_{x\to -\infty} h(x)\neq L_2=\lim_{x\to \infty} h(x),
\end{equation*}
and
\begin{equation*}
    \varrho(x)=(x+b_0)\cdot h(x)+b_1\quad \tn{for any $x\in\R$.}
\end{equation*}
By defining an affine linear map $\calL(x)\coloneqq x-b_0$ for any $x\in\R$, we have
\begin{equation}\label{eq:varrho:calL:eq:scrA20}
    \varrho\circ\calL(x)=(\calL(x)+b_0)\cdot h\big(\calL(x)\big)+b_1=
    x\cdot h\big(\calL(x)\big)+b_1
\end{equation}
for any $x\in\R$.
To make use of Lemma~\ref{lem:approx:ReLU}, we define 
\begin{equation*}
    g_1(x)\coloneqq \frac{h\big(\calL(x)\big)-L_1}{L_2-L_1}= \frac{h(x-b_0)-L_1}{L_2-L_1} \quad \tn{for any $x\in\R$}
\end{equation*}
and 
\begin{equation*}
    g_{2,\delta}(x)\coloneqq  x\quad \tn{ for any $x\in\R$ and each $\delta\in (0,1)$}. 
\end{equation*}
Consequently, we can deduce that
\begin{equation*}
	\sup_{x\in\R}|g_1(x)|<\infty,\quad \lim_{x\to-\infty}g_1(x)=\tfrac{L_1-L_1}{L_2-L_1}=0,\quad \tn{and}\quad
	\lim_{x\to\infty}g_1(x)=\tfrac{L_2-L_1}{L_2-L_1}=1.
\end{equation*}
By  Lemma~\ref{lem:approx:ReLU} and setting $\tildeM=2M>0$,
there exist $K_\eps>0$ and $\delta_\eps\in (0,1)$ such that
	\begin{equation*}
		\big|g_1(K_\eps x)\cdot g_{2,\delta_\eps}(x)-\ReLU(x)\big|<\eps/2\quad \tn{for any $x\in [-\tildeM, \tildeM]=[-2M, 2M]$}.
	\end{equation*}
This means
	\begin{equation}
 \label{eq:g-L1:over:L2-L1-ReLU:up}
		\bigg|\frac{h\big(\calL(K_\eps 
 x)\big)-L_1}{L_2-L_1}\cdot x -\ReLU(x)\bigg|<\eps/2\quad \tn{for any $x\in [-2M,2M]$}.
	\end{equation}

Define 
\begin{equation*}
    \psi_\eps(x)\coloneqq \tfrac{1}{L_2-L_1}\tfrac{1}{K_\eps}\Big(\varrho\circ \calL(K_\eps  x)-b_1\Big)
    =\tfrac{1}{L_2-L_1}\tfrac{1}{K_\eps} \varrho(K_\eps   x-b_0)-\tfrac{1}{L_2-L_1}\tfrac{1}{K_\eps}b_1
\end{equation*}
and 
\begin{equation*}
    \psi(x)=\tfrac{L_1}{L_2-L_1}\cdot  x+\ReLU(x)\quad \tn{for any $x\in\R$.}
\end{equation*}
Clearly, $\psi_\eps\in \nnOneD[]{\varrho}{1}{1}{\R}{\R}$.  
 Furthermore, based on Equation~\eqref{eq:varrho:calL:eq:scrA20}, for any $x\in [-2M,2M]$, we have
\begin{equation*}
    \begin{split}
        \big|\psi_\eps(x)-\psi(x)\big|
        &=\Big|\tfrac{1}{L_2-L_1}\tfrac{1}{K_\eps}\Big(\varrho\circ \calL(K_\eps  x)-b_1\Big)
        -\psi(x)\Big|
        \\
        &=\Big|\tfrac{1}{L_2-L_1}\tfrac{1}{K_\eps}
        \overbrace{
        (K_\eps  x)\cdot h\big(\calL(K_\eps x)\big)
        }^{=\varrho\circ\calL(K_\eps x) -b_1
        \tn{  by   \eqref{eq:varrho:calL:eq:scrA20}}
        }
        -\psi(x)\Big|
        \\
        &=\Big|\tfrac{x\cdot h(\calL(K_\eps x))}{L_2-L_1}-\big(\tfrac{L_1}{L_2-L_1}\cdot 
 x+\ReLU(x)\big)\Big|
 \\
 &=\Big|\tfrac{h(\calL(K_\eps x))-L_1}{L_2-L_1}\cdot x-\ReLU(x)\Big|<\eps/2,
    \end{split}
\end{equation*}
where the last inequality comes from Equation~\eqref{eq:g-L1:over:L2-L1-ReLU:up}.
Then for any $x\in [-M,M]$, we have $x-M\in [-2M,0]$ and hence $\ReLU(x-M)=0$, from which we deduce
\begin{equation*}
\begin{split}
    	\eps/2 
     & > 
     \Big|\psi_\eps(x-M)-\psi(x-M)\Big|
      =\Big|\psi_\eps(x-M)-\Big(\tfrac{L_1}{L_2-L_1}\cdot (x-M)+\ReLU(x-M)\Big)\Big|\\
     & =\Big|\psi_\eps(x-M)-\tfrac{L_1}{L_2-L_1}\cdot (x-M)\Big|
     =\Big|\psi_\eps(x-M)+\tfrac{L_1M}{L_2-L_1}-\tfrac{L_1}{L_2-L_1}\cdot x\Big|.
\end{split}
\end{equation*}

Define
\begin{equation*}
	\begin{split}
	    \phi_\eps(x)\coloneqq \psi_\eps(x)-\Big(\psi_\eps(x-M)+\tfrac{L_1M}{L_2-L_1}\Big)\quad \tn{for any $x\in\R$.}
	\end{split}
\end{equation*}
It follows from $\psi_\eps\in \nnOneD[]{\varrho}{1}{1}{\R}{\R}$ that $\phi_\eps\in \nnOneD[]{\varrho}{2}{1}{\R}{\R}$. Moreover, we have
\begin{equation*}
	\begin{split}
		|\phi_\eps(x)-\ReLU(x)|
		&=  \bigg|
  \underbrace{
  \psi_\eps(x)-\Big(\psi_\eps(x-M)+\tfrac{L_1M}{L_2-L_1}\Big)
 }_{\phi_\eps}
 -\Big(
 \underbrace{
\psi(x)-\tfrac{L_1}{L_2-L_1}\cdot x
 }_{\ReLU}\Big)\bigg|\\
		& \le  \big|\psi_\eps(x)-\psi(x)\big|
  +\Big|\Big(\psi_\eps(x-M)+\tfrac{L_1M}{L_2-L_1}\Big)-\tfrac{L_1}{L_2-L_1}\cdot x\Big|\\
		&<   \eps/2+\eps/2=\eps.
	\end{split}
\end{equation*}

\mycase{3}{$\varrho\in \scrA_3$.}
Finally, let us now turn to the case of $\varrho\in \scrA_3$.
Clearly,
	 we have $\sup_{x\in\R}|\varrho(x)|<\infty$, $\varrho^\dprime(x_0)\neq 0$  for some $x_0\in\R$, and
	\begin{equation*}
		L_1=\lim_{x\to -\infty} \varrho(x)\neq L_2=\lim_{x\to\infty} \varrho(x).
	\end{equation*}
By defining 
\begin{equation*}
	g_1(x)\coloneqq \frac{\varrho(x)-L_1}{L_2-L_1}\quad \tn{for any $x\in\R$},
\end{equation*} we have 
\begin{equation*}
	\sup_{x\in\R}|g_1(x)|<\infty,\quad \lim_{x\to-\infty}g_1(x)=0,\quad \tn{and}\quad
	\lim_{x\to\infty}g_1(x)=1.
\end{equation*}
	
Since $\varrho^\dprime(x_0)\neq 0$,
	there exists $x_1$ such that 
	$\varrho^\prime(x_1)\neq 0$.
	For each $\delta\in(0,1)$, we define
	\begin{equation*}
		g_{2,\delta}(x)\coloneqq\frac{\varrho(x_1+\delta x)-\varrho(x_1)}{\delta\varrho^\prime(x_1)}\quad \tn{for any $x\in \R$.}
	\end{equation*}
    By Lemma~\ref{lem:approx:x},
	\begin{equation*}
		g_{2,\delta}(x)\rightrightarrows  x\quad \tn{as}\   \delta\to 0^+\quad \tn{for any $x\in [-M,M]$.}
	\end{equation*}
	By Lemma~\ref{lem:approx:ReLU}, there exist $K_\eps>0$ and $\delta_\eps\in (0,1)$ such that
		\begin{equation}\label{eq:g1g2delta:minus:relu:scrA3}
		\big|g_1(K_\eps x)\cdot g_{2,\delta_\eps}(x)-\ReLU(x)\big|<\eps\quad \tn{for any $x\in [-M,M]$.}
	\end{equation}
	 The fact $\sup_{x\in\R}|\varrho(x)|<\infty$ implies
	\begin{equation*}
		\begin{split}
					A=
			&\sup_{x\in [-M,M]} \max \big\{|g_1(K_\eps x)|,\,| g_{2,\delta_\eps}(x)|\big\}\\
			=
			&\sup_{x\in [-M,M]} \max \left\{\big| \tfrac{\varrho(K_\eps  x)-L_1}{L_2-L_1}\big|,\, \big| \tfrac{\varrho(x_1+\delta_\eps  x)-\varrho(x_1)}{\delta_\eps\varrho^\prime(x_1)}\big|\right\}<\infty.
		\end{split}
	\end{equation*}
	Since $\varrho^\dprime(x_0)\neq 0$, by Lemma~\ref{lem:xy}, 
	there exists
	\begin{equation*}
		\Gamma_\eta\in \nnOneD{\varrho}{3}{1}{\R^2}{\R}\quad \tn{for each $\eta\in (0,1)$}
	\end{equation*}  
	such that
	\begin{equation*}
		\Gamma_{\eta}(u,v)\rightrightarrows  uv\quad \tn{as $\eta\to0^+$}\quad \tn{for any $u,v\in[-A,A]$.}
	\end{equation*}
	Then there exists $\eta_\eps\in (0,1)$ such that 
	\begin{equation*}
		|\Gamma_{\eta_\eps}(u,v)-uv|<\eps\quad \tn{for any $u,v\in [-A,A]$,}
	\end{equation*} implying
	\begin{equation}\label{eq:gamma:g1g2delta:scrA3}
		\Big|\Gamma_{\eta_\eps}\Big(g_1(K_\eps x),\, g_{2,\delta_\eps}(x)\Big)-g_1(K_\eps x)\cdot g_{2,\delta_\eps}(x)\Big|<\eps \quad \tn{for any $x\in [-M,M]$.}
	\end{equation}	 Define
	\begin{equation*}
		\phi_\eps(x)\coloneqq \Gamma_{\eta_\eps}\Big(g_1(K_\eps x),\, g_{2,\delta_\eps}(x)\Big)\quad \tn{for any $x\in\R$.}
	\end{equation*}	
	Next, by Equations~\eqref{eq:g1g2delta:minus:relu:scrA3} and \eqref{eq:gamma:g1g2delta:scrA3}, we have
	\begin{equation*}
		\begin{split}
			&\big|\phi_\eps(x)-\ReLU(x)\big|
			= \Big|\Gamma_{\eta_\eps}\Big(g_1(K_\eps x),\, g_{2,\delta_\eps}(x)\Big)-\ReLU(x)\Big|\\
				\le \;	&\Big|\Gamma_{\eta_\eps}\Big(g_1(K_\eps x),\, g_{2,\delta_\eps}(x)\Big)-g_1(K_\eps x)\cdot g_{2,\delta_\eps}(x)\Big|
				  +\Big|g_1(K_\eps x)\cdot g_{2,\delta_\eps}(x)-\ReLU(x)\Big|\\
			< \;&\eps +\eps=2\eps
		\end{split}
	\end{equation*}	
	for any $x\in [-M,M]$, from which we deduce
		\begin{equation*}
		\phi_\eps(x)\rightrightarrows \ReLU(x)\quad \tn{as}\   \eps\to 0^+\quad \tn{for any $x\in [-M,M]$.}
	\end{equation*}
	It remains to show 
	$\phi_\eps\in \nnOneD{\varrho}{3}{2}{\R}{\R}$. By defining
	\begin{equation*}
		\bmpsi_\eps(x)\coloneqq \Big(\tfrac{\varrho(K_\eps  x)-L_1}{L_2-L_1},\, \tfrac{\varrho(x_1+\delta_\eps  x)-\varrho(x_1)}{\delta_\eps\varrho^\prime(x_1)}\Big)\quad \tn{for any $x\in\R$,}
	\end{equation*}	
	we have 
	$\bmpsi_\eps\in \nnOneD{\varrho}{2}{1}{\R}{\R^2}$ and
		\begin{equation*}
		\phi_\eps(x)= \Gamma_{\eta_\eps}\Big(g_1(K_\eps x),\, g_{2,\delta_\eps}(x)\Big)=\Gamma_{\eta_\eps}\Big(\tfrac{\varrho(K_\eps  x)-L_1}{L_2-L_1},\, \tfrac{\varrho(x_1+\delta_\eps  x)-\varrho(x_1)}{\delta_\eps\varrho^\prime(x_1)}\Big)=\Gamma_{\eta_\eps}\circ \bmpsi_\eps(x)
	\end{equation*}		
	for any $x\in\R$. Recall that $\Gamma_{\eta_\eps}\in \nnOneD{\varrho}{3}{1}{\R^2}{\R}$. 
	Hence, we can conclude that $\phi_\eps\in \nnOneD{\varrho}{3}{2}{\R}{\R}$. This result completes the proof of Proposition~\ref{prop:approx:ReLU:scrA:2:3}.
\end{proof}


\long\def\acks#1{\vskip 0.3in\noindent{\large\bf Acknowledgments}\vskip 0.2in
\noindent #1}
\acks{Jianfeng Lu was partially supported by
NSF grants CCF-1910571 and 
DMS-2012286.
Hongkai Zhao was partially supported by NSF grants DMS-2012860 and DMS-2309551.}

\let\oldhref\href
\def\href#1#2{\oldhref{#1}{\nolinkurl{#2}}}
\bibliography{references}
\end{document}